\newlength\myindent
\renewcommand{\tilde}{\widetilde}
\renewcommand{\hat}{\widehat}
\def \D {\mathcal{D}}
\def \E {\mathbb{E}}
\def \F {\mathcal{F}}
\def \N {\mathcal{N}}
\def \O {\mathcal{O}}
\def \P {\mathbb{P}}
\def \R {\mathbb{R}}
\def \T {\top}
\def \X {\mathcal{X}}
\def \v {\mathbf{v}}
\def \x {X}
\def \zt {\tilde{\z}}
\def \Ot {\tilde{\O}}
\def \thetah {\hat{\theta}}
\def \Reg {\textsc{Reg}}
\def \ts {\theta_*}
\def \ellt {\tilde{\ell}}
\def \thetat {\tilde{\theta}}
\def \zt {\tilde{z}}
\def \Vt {\tilde{V}}
\def \thetah {\hat{\theta}}
\def \hvtucb  {\mbox{Hvt-UCB}\xspace}
\let\oldepsilon\epsilon
\let\oldvarepsilon\varepsilon
\renewcommand{\epsilon}{\oldvarepsilon}
\renewcommand{\varepsilon}{\oldepsilon}
\let\norm\undefined % <-- "Undefine" \norm
\newcommand\norm[1]{\left\| #1 \right\|}
\newcommand\abs[1]{\left| #1 \right|}
\newcommand\inner[2]{\left\langle #1, #2 \right\rangle}
\newcommand\sbr[1]{\left( #1 \right)}
\newcommand\bbr[1]{\left\{ #1 \right\}}
\newcommand\bigbbr[1]{\Big\{ #1 \Big\}}
\newcommand\term[1]{\textsc{term}~(\textsc{#1})}
\newcommand\given[1][]{\:#1\vert\:}
\newcommand\givenn[1][]{\:#1\middle\vert\:}
\def\mystrut{\rule[-3ex]{0ex}{7ex}} 
\def \parag {\vspace{2mm}\noindent\textbf}
\DeclareMathOperator{\indicator}{\mathds{1}}
\DeclareMathOperator*{\argmax}{arg\,max}
\DeclareMathOperator*{\argmin}{arg\,min}
\def \define {\triangleq}
\newtheorem{myThm}{Theorem}
\newtheorem{myLemma}{Lemma}
\newtheorem{myProperty}{Property}
\newtheorem{myCor}{Corollary}
\newtheorem{myDef}{Definition}
\theoremstyle{definition}
\newtheorem{myAssum}{Assumption}
\newtheorem{myRemark}{Remark}
\definecolor{wine_red}{RGB}{228,48,64}
\definecolor{DSgray}{cmyk}{0,1,0,0}
\newcommand{\savehyperref}[2]{\texorpdfstring{\hyperref[#1]{#2}}{#2}}
\icmltitlerunning{Heavy-Tailed Linear Bandits: Huber Regression with One-Pass Update}
\begin{document}

\twocolumn[
\icmltitle{Heavy-Tailed Linear Bandits: Huber Regression with One-Pass Update}

\icmlsetsymbol{equal}{*}

\begin{icmlauthorlist}

\icmlauthor{Jing Wang}{keylab,AI}
\icmlauthor{Yu-Jie Zhang}{riken}
\icmlauthor{Peng Zhao}{keylab,AI}
\icmlauthor{Zhi-Hua Zhou}{keylab,AI}
\end{icmlauthorlist}

\icmlaffiliation{keylab}{National Key Laboratory for Novel Software Technology, Nanjing University, China}
\icmlaffiliation{AI}{School of Artificial Intelligence, Nanjing University, China}
\icmlaffiliation{riken}{Center for Advanced Intelligence Project, RIKEN, Japan}
\icmlcorrespondingauthor{Peng Zhao}{zhaop@lamda.nju.edu.cn}

\icmlkeywords{Machine Learning, ICML}

\vskip 0.3in
]

\printAffiliationsAndNotice{}  

\begin{abstract}
    We study the stochastic linear bandits with heavy-tailed noise. Two principled strategies for handling heavy-tailed noise, truncation and median-of-means, have been introduced to heavy-tailed bandits. Nonetheless, these methods rely on specific noise assumptions or bandit structures, limiting their applicability to general settings. The recent work~\citep{NeurIPS'23:hvt-lb2} develops a soft truncation method via the adaptive Huber regression to address these limitations. However, their method suffers undesired computational costs: it requires storing all historical data and performing a full pass over these data at each round. In this paper, we propose a \emph{one-pass} algorithm based on the online mirror descent framework. Our method updates using only current data at each round, reducing the per-round computational cost from $\O(t \log T)$ to $\O(1)$ with respect to current round $t$ and the time horizon $T$, and achieves a near-optimal and variance-aware regret of order $\Ot\big(d T^{\frac{1-\epsilon}{2(1+\epsilon)}} \sqrt{\sum_{t=1}^T \nu_t^2} + d T^{\frac{1-\epsilon}{2(1+\epsilon)}}\big)$ where $d$ is the dimension and $\nu_t^{1+\epsilon}$ is the $(1+\epsilon)$-th central moment of reward at round $t$. 
\end{abstract}
\section{Introduction}
\label{sec:introduction}
Stochastic Linear Bandits (SLB) models sequential decision-making process with linear structured reward distributions, which has been extensively studied in the literature~\citep{COLT'08:linear-bandit,NIPS'11:AY-linear-bandits,AISTATS'21:infinite-linear-bandit}. Specifically, in SLB the observed reward at time $t$ is the inner product of the arm's feature vector $\x_t$ and an unknown parameter $\ts$, corrupted by sub-Gaussian noise $\eta_t$, namely, 
\begin{equation}\label{eq:slb}
    r_t = \x_t^\T \ts +\eta_t.
\end{equation}
However, in many real-world scenarios, the noise in data may  follow a heavy-tailed distribution, such as in financial markets~\citep{JWS'12:financial-markets} and online advertising~\citep{IJCS'advertise}. This motivates the study of heavy-tailed linear bandits (HvtLB)~\citep{ICML'16:hvtlb,NeurIPS'18:hvtlb, IJCAI'20:xueb-hvtlb,NeurIPS'23:hvt-lb2}, where the noise $\eta_t$ in~\eqref{eq:slb} satisfies that for some $\epsilon \in (0,1],~\nu > 0$,
\begin{equation}\label{eq:hvt}
    \E[{\eta_t}] = 0, \quad \E[\abs{\eta_t}^{1+\epsilon}] \leq \nu^{1+\epsilon}.
\end{equation}
To address heavy-tailed noise, statistical methods in estimation and regression often rely on two key principles: truncation and median-of-means (MOM)~\citep{FCM'19:heavy-tail-survey}. Truncation methods handle outliers by directly removing extreme data points. Soft version truncation, such as Catoni's M-estimator~\citep{IHP'12:catoni}, reduces the impact of outliers by assigning them lower weights. This ensures robust estimates while maintaining potentially valuable information from extreme data points. Median-of-means (MOM) methods take a different approach by dividing the dataset into several groups, calculating the mean within each group, and then using the median of these group means as the final estimate. This method limits the influence of outliers to only a few groups, preventing impact on the overall dataset.

\begin{table*}\footnotesize
    \centering
    \caption{\small{Comparisons of our regret bounds and computational complexity to previous best-known results for heavy-tailed linear bandits. For the regret, the logarithmic dependence over $T$ is hidden by $\Ot(\cdot)$-notation; $d$ is the dimension and $\nu_t$ is the moment bound. For the computational cost, we only keep the dependence on the time step $t$ and overall time step $T$.} }
    \label{table:results}
\renewcommand*{\arraystretch}{1.5}
    \resizebox{\textwidth}{!}{
    \begin{tabular}{c|c|l|c|c}
        \hline

        \hline
        \textbf{Method} &\textbf{Algorithm}   &\multicolumn{1}{c|}{\textbf{Regret}} &\multicolumn{1}{c|}{\textbf{Comp. cost}} &\multicolumn{1}{c}{\textbf{Remark}} \\\hline 
        \mystrut MOM & \makecell{MENU~\citep{NeurIPS'18:hvtlb}\vspace{1mm}\\CRMM~\citep{NeurIPS'23:efficient-hvtlb}}       &$\Ot\sbr{d T^{\frac{1}{1+\epsilon}}}$  &\makecell{$\O(\log T)$\vspace{1mm}\\$\O(1)$} & \makecell{fixed arm set and\vspace{1mm}\\repeated pulling}\\\hline
        \mystrut Truncation & \makecell{TOFU~\citep{NeurIPS'18:hvtlb}\vspace{1mm}\\CRTM~\citep{NeurIPS'23:efficient-hvtlb}}       &$\Ot\sbr{dT^{\frac{1}{1+\epsilon}}}$  &\makecell{$\O(t)$\vspace{1mm}\\$\O(1)$} & \makecell{absolute moment \vspace{1mm}\\$\E[\abs{r_t}^{1+\epsilon}\given \F_{t-1}]\leq u$ }\\\hline         
        \mystrut Huber &HEAVY-OFUL~\citep{NeurIPS'23:hvt-lb2}        &$\Ot\sbr{d T^{\frac{1-\epsilon}{2(1+\epsilon)}} \sqrt{\sum_{t=1}^T \nu_t^2} + d T^{\frac{1-\epsilon}{2(1+\epsilon)}}}$  &$\O(t\log T)$ &instance-dependent bound \\\hline\rowcolor{gray!15}
        \mystrut Huber & \hvtucb~(Corollary~\ref{cor:regret-bound})         &$\Ot\sbr{dT^{\frac{1}{1+\epsilon}}}$  &$\O(1)$ &$\mathbb{E}[|\eta_t|^{1+\epsilon} \mid \mathcal{F}_{t-1}] \leq \nu^{1+\epsilon}$ \\\hline\rowcolor{gray!15}
        \mystrut Huber &\hvtucb~(Theorem~\ref{thm:variance-aware})         &$\Ot\sbr{d T^{\frac{1-\epsilon}{2(1+\epsilon)}} \sqrt{\sum_{t=1}^T \nu_t^2} + d T^{\frac{1-\epsilon}{2(1+\epsilon)}}}$  &$\O(1)$ &instance-dependent bound \\\hline

        \hline
    \end{tabular}}
    \vspace{-1.5mm}
\end{table*}

For HvtLB,~\citet{NeurIPS'18:hvtlb} developed algorithms using truncation and MOM-based least squares approach for parameter estimation, and achieved an $\Ot\big(dT^{\frac{1}{1+\epsilon}}\big)$ regret bound which is proven to be optimal. However, these approaches have some notable issues: truncation methods require the assumption of bounded absolute moments for rewards, which cannot vanish in deterministic case, making them suboptimal in noiseless scenarios; MOM methods rely on repeated arm pulling and assumption of a fixed arm set, which makes them difficult to extend to more general settings. These limitations reveal that methods based on truncation and MOM heavily depend on specific assumptions or the bandit structure. For broad decision-making scenarios, such as online MDPs and adaptive control, where the state or context may vary dynamically, these assumptions no longer hold. A more detailed discussion of these challenges is provided in Section~\ref{sec:discussion}.

Recently, \citet{ASA'20:ada-huber-regression} made remarkable progress in the study of heavy-tailed statistics by proposing adaptive Huber regression, which leverages the Huber loss~\citep{AMS'1964:huber-loss} to achieve non-asymptotic guarantees.~\citet{TMLR'24:hvt-lb} further applied this technique to SLB, achieving an optimal and variance-aware regret bound under bounded variance conditions ($\epsilon = 1$). Subsequently, \citet{NeurIPS'23:hvt-lb2} extended the adaptive Huber regression to handle heavy-tailed scenarios ($\epsilon \in (0,1]$), achieving an optimal and instance-dependent regret bound of $\Ot\big(d T^{\frac{1-\epsilon}{2(1+\epsilon)}} \sqrt{\sum_{t=1}^T \nu_t^2} + d T^{\frac{1-\epsilon}{2(1+\epsilon)}}\big)$ for HvtLB. Notably, the adaptive Huber regression does not rely on additional noise assumptions or the repeated pulling of bandit settings. This generality allows the approach to be further adapted to other decision-making scenarios, such as reinforcement learning with function approximation.

In each round $t$, the adaptive Huber regression has to solve the following optimization problem:
\begin{equation}\label{eq:huber-regression}
    \thetah_t = \argmin_{\theta \in \Theta} \lambda\norm{\theta}^2_2+\sum_{s=1}^{t-1}\ell_s(\theta),
\end{equation}
where $\Theta$ is the feasible set of the parameter, $\lambda$ is the regularization parameter, and $\ell_s(\theta)$ represents the Huber loss at round $s$. Unlike the OFUL algorithm~\citep{NIPS'11:AY-linear-bandits} for SLB with sub-Gaussian noise, which uses least squares (LS) for estimation and can update recursively based on the closed-form solution, the Huber loss is partially quadratic and partially linear. Solving~\eqref{eq:huber-regression} requires storing all historical data and performing a full pass over all historical data at each round $t$ to update the parameter estimation. This results in a per-round storage cost of $\O(t)$ and computational cost of $\O(t \log T)$ with respect to the current round $t$ and the time horizon $T$, making it computationally infeasible for large-scale or real-time applications. Thus, we require a \emph{one-pass} algorithm for HvtLB that updates the estimation at each round $t$ using only the current data, without storing historical data. Indeed, our goal is to design an algorithm for heavy-tailed linear bandits that is (basically) as efficient as OFUL in SLB with sub-Gaussian noise. Note that while one-pass algorithms based on the truncation and MOM approach have been proposed by~\citet{NeurIPS'23:efficient-hvtlb}, but as noted earlier, these methods heavily rely on specific assumptions or the bandit structure. Given the attractive properties of Huber loss-based method~\citep{NeurIPS'23:hvt-lb2}, a natural question arises: \emph{Is it possible to design a one-pass Huber-loss-based algorithm that achieves optimal regret?}

\parag{Our Results.}~~~In this work, we propose a Huber loss-based one-pass algorithm \hvtucb utilizing the Online Mirror Descent (OMD) framework, which achieves the optimal regret bound of $\Ot(dT^{\frac{1}{1+\epsilon}})$ while eliminating the need to store historical data. Additionally, our algorithm can further achieve the instance-dependent regret bound of $\Ot\big(d T^{\frac{1-\epsilon}{2(1+\epsilon)}} \sqrt{\sum_{t=1}^T \nu_t^2} + d T^{\frac{1-\epsilon}{2(1+\epsilon)}}\big)$, where $\nu_t^{1+\epsilon}$ is the $(1+\epsilon)$-th central moment of reward at round $t$. Our approach preserves both optimal and instance-dependent regret guarantees achieved by~\citet{NeurIPS'23:hvt-lb2} while significantly reducing the per-round computational cost from $\O(t \log T)$ to $\O(1)$ with respect to the current round $t$ and the time horizon $T$.  Moreover, our algorithm does not rely on additional assumptions, making it more broadly applicable. A comprehensive comparison of our results and previous works on heavy-tailed linear bandits is presented in Table~\ref{table:results}.

\parag{Key Challenges and Techniques.}~~~Designing a one-pass algorithm for heavy-tailed linear bandits presents two major challenges. On the one hand, unlike the full-batch estimator~\eqref{eq:huber-regression}~\citep{NeurIPS'23:hvt-lb2}, which leverages the entire historical data to control the estimation error, a one-pass estimator updates relies only on the current data point. Achieving comparable estimation error bound for one-pass estimator is much more challenging, especially in the presence of heavy-tailed noise. On the other hand, OMD is originally developed for regret minimization. Adapting it to parameter estimation requires linking the parameter gap (estimation error) with the loss value gap (regret). To address these challenges, our algorithm design and analysis incorporate two key components: \textit{(i) Handling heavy-tailed noise:}~~\citet{ICML'16:one-bit} have adopted a variant of Online Newton Step (ONS) to Logistic Bandits settings, but their method cannot be applied to HvtLB since their method relies on the global strong convexity of the logistic loss, whereas the Huber loss is partially linear. We generalize their approach to the OMD framework and introduce a recursive normalization factor that ensures normal data stays in the quadratic region of the Huber loss with high probability, maintaining robustness in estimation. This normalization factor also introduces a negative term in the estimation error, which is crucial for controlling stability. \textit{(ii) Supporting one-pass updates:} The OMD framework introduces a stability term in estimation error, which represents the impact of online updates. The adaptive normalization factor adds a multiplicative bias to this term, resulting in a positive  term of order $\Ot(\sqrt{T})$ that weakens the bound. This requires a careful design of the normalization factor and learning rate of OMD to cancel it out using the negative term that arises from our OMD-based analysis, which is similar to a crucial technique in the regret analysis of recent gradient-variation online learning methods~\citep{NeurIPS'20:Sword,JMLR'24:Sword++}.

\section{Related Work}
\label{sec:related-work}
\vspace{-1mm}
\parag{Heavy-Tailed Bandits.}~~~The multi-armed bandits (MAB) problem with heavy-tailed rewards, characterized by finite $(1+\epsilon)$-moment, was first studied by~\citet{TIT'13:hvtmab}. They introduced two widely used robust estimation and regression methods from statistics into the heavy-tailed bandits setting: truncation and median-of-means (MOM).~\citet{ICML'16:hvtlb} extended these methods to heavy-tailed linear bandits (HvtLB) and proposed two algorithms: the truncation-based CRT algorithm and the MOM-based MoM algorithm, achieving $\Ot\big(dT^{\frac{2+\epsilon}{2(1+\epsilon)}}\big)$ and $\Ot\big(dT^{\frac{1+2\epsilon}{1+3\epsilon}}\big)$ regret bounds, respectively. Later,~\citet{NeurIPS'18:hvtlb} established an $\Omega\big(d T^{\frac{1}{1+\epsilon}}\big)$ lower bound for HvtLB and introduced the truncation-based TOFU algorithm and the MOM-based MENU algorithm, both achieving an $\Ot\big(d T^{\frac{1}{1+\epsilon}}\big)$ regret bound. For HvtLB with finite arm set,~\citet{IJCAI'20:xueb-hvtlb} established an $\Omega\big(d^{\frac{\epsilon}{1+\epsilon}}T^{\frac{1}{1+\epsilon}}\big)$ lower bound, and proposed the truncation-based BTC algorithm and MOM-based BMM algorithm, both of which achieve an $\Ot\big(\sqrt{d} T^{\frac{1}{1+\epsilon}}\big)$ regret bound, reducing the dependence on the dimension $d$ under the finite arm set setting. To the best of our knowledge, the two works~\citep{UAI'23:hvt-lb,TMLR'24:hvt-lb} first introduce Huber loss to linear bandits. Specifically,~\citet{UAI'23:hvt-lb} studied the heavy-tailed linear contextual bandits (HvtLCB) with fixed and finite arm set, where the context $X_t$ at each round is independently and identically distributed from a fixed distribution. They proposed the Huber-Bandit algorithm based on Huber regression, achieving an $\Ot\big(\sqrt{d} T^{\frac{1}{1+\epsilon}}\big)$ regret bound, On the other hand,~\citet{TMLR'24:hvt-lb} were the first to apply adaptive Huber regression technique~\citep{ASA'20:ada-huber-regression} to SLB, focusing on noise with finite variance (namely, HvtLB with $\epsilon = 1$). They proposed the AdaOFUL algorithm, achieving an optimal variance-aware regret of $\Ot(d\sqrt{\sum_{t=1}^T \nu_t^2})$, where $\nu_t$ is the upper bound of the variance. This result is subsequently improved by~\citet{NeurIPS'23:hvt-lb2}, which is also the most related work to ours. \citet{NeurIPS'23:hvt-lb2} considered general heavy-tailed scenarios with $\epsilon \in (0,1]$ and proposed HEAVY-OFUL with an instance-dependent regret of $\Ot\big(d T^{\frac{1-\epsilon}{2(1+\epsilon)}} \sqrt{\sum_{t=1}^T \nu_t^2} + d T^{\frac{1-\epsilon}{2(1+\epsilon)}}\big)$ where $\nu_t^{1+\epsilon}$ is the $(1+\epsilon)$-th central moment of reward at round $t$. Subsequent works have applied Huber regression to various bandit settings, such as matrix-valued contextual bandits with low-rank structure~\citep{UAI'24:LowRankHT}. We emphasize that all the above Huber regression-based algorithms for heavy-tailed bandits are not updated in a one-pass manner. Beyond Huber-based approaches, other techniques have been explored for handling heavy-tailed noise and outliers. These include general robust loss functions like Tilted Empirical Risk Minimization (TERM)~\citep{ICLR'21:TERM}, and reweighting-based methods proposed in the offline contextual bandit~\citep{NeurIPS'24:LogSmooth,ICMLW'24:LSE-Batch}. A potential future direction could be to develop one-pass algorithms based on these techniques.

\parag{One-Pass Bandit Learning.}~~~In SLB, \citet{NIPS'11:AY-linear-bandits} proposed the OFUL algorithm, achieving an $\O(d\sqrt{T})$ regret bound. It uses Least Squares (LS) for parameter estimation, naturally supporting one-pass updates with a closed-form solution. As an important extension of SLB, Generalized Linear Bandits (GLB) is first introduced by \citet{NIPS10:GLM-infinite}. Their proposed GLM-UCB algorithm achieves an $\O(\kappa d\sqrt{T})$  regret bound, where $\kappa$ measures the non-linearity of the generalized linear model. This algorithm relies on Maximum Likelihood Estimation (MLE) which does not have a closed-form solution like LS in SLB and requires storing all historical data, resulting in a per-round computational cost of $\O(t)$. To this end,~\citet{ICML'16:one-bit} proposed OL2M, the first one-pass algorithm for Logistic Bandits (a specific class of GLB), achieving an $\O(\kappa d\sqrt{T})$ regret bound with per-round computational cost of $\O(1)$. Later,~\citet{NeurIPS'17:online-to-conf} explored one-pass algorithms for GLB by introducing the online-to-confidence-set conversion. Using this conversion, they developed GLOC, a one-pass algorithm that achieves the same theoretical guarantees and per-round cost as OL2M but is applicable to all GLB. \citet{NeurIPS'23:efficient-hvtlb} considered heavy-tailed GLB and proposed two one-pass algorithms: truncation-based CRTM and mean-of-medians-based CRMM, both achieving an $\Ot\big(\kappa dT^{\frac{1}{1+\epsilon}}\big)$ regret bound. Subsequent works have tackled more challenging topics, such as reducing the dependence on $\kappa$ with one-pass algorithms or designing such algorithms for more complex models like Multinomial Logistic Bandits and Multinomial Logit MDPs~\citep{NeurIPS'23:efficient-Mlogb,NeurIPS'24:MNL,NeurIPS'24:MNL-Oh,arXiv'25:RLHF-Pipeline}.

\section{One-Pass Heavy-tailed Linear Bandits}
\label{sec:approach}
In this section, we present our one-pass method for handling linear bandits with heavy-tailed noise. Section~\ref{sec:problem-setup} introduces the problem setup. Section~\ref{sec:previous} reviews the latest work of~\citet{NeurIPS'23:hvt-lb2} and identifies the inefficiency of the previous method. Section~\ref{sec:huber-regression} introduces the OMD-type estimator based on Huber loss, along with the UCB-based arm selection strategy and the regret guarantee.

\subsection{Problem Setup}
\label{sec:problem-setup}
We investigate heavy-tailed linear bandits. At round $t$, the learner chooses an arm $\x_t$ from the feasible set $\X_t\subseteq \R^d$, and then the environment reveals a reward $r_t\in \R$ such that
\begin{equation}\label{eq:linear-model}
    r_t = \x_t^\T \ts + \eta_t,
\end{equation}
where $\ts \in \R^d$ is the unknown parameter and $\eta_t\in \R$ is the random noise. We define the filtration $\{\F_t\}_{t\geq 1}$ as $\F_t \define \sigma\sbr{\{\x_1,r_1,...,\x_{t-1},r_{t-1},\x_t\}}$. The noise $\eta_t\in \R$ is $\F_t$-measurable and satisfies for some $\epsilon \in (0,1]$, $\E[\eta_t\given \F_{t-1}] = 0$ and $\E[\abs{\eta_t}^{1+\epsilon}\given \F_{t-1}] = \nu_t^{1+\epsilon}$ with $\nu_t$ being $\F_{t-1}$-measurable. The goal of the learner is to minimize the following (pseudo) regret,
\begin{equation*}
    \Reg_T = \sum_{t=1}^T \max_{\mathbf{x}\in\X_t}\mathbf{x}^\T \ts - \sum_{t=1}^T \x_t^\T\ts.
\end{equation*}
We work under the following assumption as prior works~\citep{NeurIPS'23:efficient-hvtlb,NeurIPS'23:hvt-lb2}.

\begin{myAssum}[Boundedness]
    \label{assum:bound}
    The feasible set and unknown parameter are assumed to be bounded: $\forall \mathbf{x} \in \X_t, \norm{\mathbf{x}}_2 \leq L$, $\theta_*\in\Theta$ holds where $\Theta \define \bbr{\theta \givenn \norm{\theta}_2\leq S}$.
\end{myAssum}

\subsection{Reviewing Previous Efforts}\label{sec:previous}
In this part, we review the best-known algorithm of~\citet{NeurIPS'23:hvt-lb2}. They leveraged the adaptive Huber regression method~\citep{ASA'20:ada-huber-regression} to tackle the challenges of heavy-tailed noise. Specifically, adaptive Huber regression is based on the following Huber loss~\citep{AMS'1964:huber-loss}.
\begin{myDef}[Huber loss]\label{def:huber}
    Huber loss is defined as 
    \begin{equation}\label{eq:huber-loss}
        f_\tau(x)= \begin{cases}\frac{x^2}{2} & \text { if }|x| \leq \tau, \\ \tau|x|-\frac{\tau^2}{2} & \text { if }|x|>\tau,\end{cases}
    \end{equation}
    where $\tau>0$ is the robustification parameter.
\end{myDef}
The Huber loss is a robust modification of the squared loss that preserves convexity. It behaves as a quadratic function when $|x|$ is less than the threshold $\tau$, ensuring strong convexity in this range. When $|x|$ exceeds $\tau$, the loss transitions to a linear function to reduce the influence of outliers.

At round $t$, building on the Huber loss, adaptive Huber regression estimates the unknown parameter $\theta_*$ of linear model~\eqref{eq:linear-model} by using all past data up to $t$ as
\begin{equation}
\label{eq:ftrl}
\thetah_t = \argmin_{\theta\in\Theta} \left\{\frac{\lambda}{2}\norm{\theta}^2_2 +\sum_{s=1}^{t}f_{\tau_s}\sbr{\frac{r_s-\x_s^\T \theta}{\sigma_s}}\right\},
\end{equation}
where $\lambda$ is the regularization parameter, $f_{\tau_s}(\cdot)$ represents the Huber loss with dynamic robustification parameter $\tau_s$, and $\sigma_s$ denotes the normalization factor. Based on the adaptive Huber regression, their proposed Heavy-OFUL algorithm achieves the optimal and instance-dependent regret bound.
\begin{myRemark}
    Note that~\citep{UAI'23:hvt-lb} and~\citep{UAI'24:LowRankHT} also use Huber regression as the estimator in different bandit problems. Adaptive Huber regression differs from their methods in two key aspects: first, it uses a time-varying robustification parameter $\tau_s$; second, it introduces a normalization factor $\sigma_s$ to further control the loss. This normalization factor not only enables instance-dependent regret bounds but also ensures that the denoised data remains within the quadratic region of the Huber loss, such that the Hessian of the loss function is guaranteed to have a lower bound, which is essential for theoretical analysis. The two prior works based on Huber regression avoid such normalization but introduce other algorithmic components and assumptions to achieve a similar effect: (i) \citet{UAI'23:hvt-lb} incorporates a forced exploration mechanism along with a lower-bounded Hessian assumption; (ii) \citet{UAI'24:LowRankHT} employs a local adaptive majorize-minimization technique together with a lower bounded covariance matrix assumption to guarantee a lower bounded Hessian.
\end{myRemark}

\parag{Efficiency Concern.}~~~When the robustification parameter $\tau_s = +\infty$, Huber regression~\eqref{eq:ftrl} reduces to least square, allowing one-pass updates with a closed-form solution. However, for finite $\tau_s$, which is essential for handling heavy-tailed noise,~\eqref{eq:ftrl} requires solving a strongly convex and smooth optimization problem, which needs to store all past data resulting in a storage complexity of $\O(t)$. Using Gradient Descent (GD) to achieve an $\varepsilon$-accurate solution requires $\O(\log(1/\varepsilon))$ iterations, with per-iteration computational cost of $\O(t)$ to compute the gradient over all past data. To ensure an optimal regret, $\varepsilon$ is typically set to $1/\sqrt{T}$, leading to a total computational cost of $\O(t\log T)$ at round $t$. This would be infeasible for large-scale time horizon $T$.

\subsection{\hvtucb: Huber loss-based One-Pass Algorithm}
\label{sec:huber-regression}
The adaptive Huber regression method proposed by~\citet{NeurIPS'23:hvt-lb2} incurs significant computational burden in each round and requires storing all past data, which is less favorable in practice. Thus, we aim to design a one-pass algorithm for HvtLB based on the Huber loss~\eqref{eq:huber-loss}.

For simplicity, let $z_t(\theta) \define \frac{r_t-\x_t^\T\theta}{\sigma_t}$, we introduce the loss function $\ell_t(\theta)$ based on~\eqref{eq:huber-loss} as follows,
\begin{equation}
    \label{eq:huber-loss-theta}
    \ell_t(\theta)= \begin{cases}\frac{1}{2}\sbr{\frac{r_t-\x_t^\T\theta}{\sigma_t}}^2 & \text { if }|z_t(\theta)| \leq \tau_t, \\[10pt] \tau_t|\frac{r_t-\x_t^\T\theta}{\sigma_t}|-\frac{\tau_t^2}{2} & \text { if }|z_t(\theta)|>\tau_t,\end{cases}
\end{equation}
where $\sigma_t$ is the normalization factor, and $\tau_t$ is the robustification parameter, both for controlling heavy-tailed noise.

To enable a one-pass update, instead of solving Huber regression~\eqref{eq:ftrl} using all historical data, we adopt the Online Mirror Descent (OMD) framework~\citep{Orabona:Modern-OL}. As a general and versatile framework for online optimization, OMD has been widely applied to various challenging \emph{adversarial} online learning problems, such as online games~\citep{NIPS'13:optimism-games,NIPS'15:fast-rate-game}, adversarial bandits~\citep{COLT'08:Competing-Dark,COLT'18:adaptive-bandits}, and dynamic regret minimization~\citep{JMLR'24:Sword++,COLT'22:parameter-free-OMD}, among others. In contrast, we leverage its potential to address \emph{stochastic} bandit problems, drawing inspiration from recent advancements in the study of logistic bandits~\citep{NeurIPS'23:efficient-Mlogb}. Specifically, we propose the following OMD-based update for estimation,
\begin{equation}\label{eq:online-estimator}
        \thetah_{t+1} = \argmin_{\theta\in \Theta} \left\{\big\langle\theta,\nabla \ell_t(\thetah_t)\big\rangle + \D_{\psi_t}(\theta,\thetah_t) \right\},
\end{equation}
where $\Theta$ is the feasible set defined in Assumption~\ref{assum:bound} and $\D_{\psi_t} (\cdot,\cdot)$ is the Bregman divergence associated to the regularizer $\psi_t: \R^d \mapsto \R$. The choice of the regularizer is crucial because, in stochastic bandits, parameter estimation is subsequently used for confidence set construction. Therefore, we define the regularizer using the local information as 
\begin{equation}
    \label{eq:regularizer}
    \psi_t(\theta) = \frac{1}{2}\norm{\theta}_{V_t}^2, \mbox{ with } V_t \define \lambda I + \frac{1}{\alpha}\sum_{s=1}^t \frac{\x_s\x_s^\T}{\sigma_s^2}, 
\end{equation}
where $V_0 \define \lambda I_d$. The choice of this local-norm regularizer is compatible with the self-normalized concentration used to construct the Upper Confidence Bound (UCB) of estimation error. Here, the coefficient $\alpha$ in $V_t$ essentially represents the step size of OMD, which will be specified later.

\parag{Computational Efficiency.}~~~Clearly, the estimator~\eqref{eq:online-estimator} can be solved with a single projected gradient step, which can be equivalently reformulated as:
\begin{equation*}
    \thetat_{t+1} = \thetah_t- V_t^{-1}\nabla \ell_t(\thetah_t),\quad \thetah_{t+1} = \argmin_{\theta \in \Theta} \big\|\theta - \thetat_{t+1}\big\|_{V_{t}},
\end{equation*}
where the main computational cost lies in calculating the inverse matrix $V_t^{-1}$, which can be efficiently updated using the Sherman-Morrison-Woodbury formula with a time complexity of $\O(d^2)$. The projection step adds an additional time complexity of $\O(d^3)$. Focusing on one-pass updates and considering only the dependence on $t$ and $T$, our estimator achieves a per-round computational complexity of $\O(1)$, representing a significant improvement over previous methods with a complexity of $\O(t\log T)$. Moreover, it eliminates the need to store all historical data, requiring only $\O(1)$ storage cost throughout the learning process.

\parag{Statistical Efficiency.}~~~As mentioned earlier, OMD is a general framework that has demonstrated its effectiveness in adversarial online learning for regret minimization. Nonetheless, in stochastic bandits, a more critical requirement is the accuracy of parameter estimation. In the following, we demonstrate that OMD also performs exceptionally well in this regard. The key lemma below presents the estimation error of estimator~\eqref{eq:online-estimator} based on OMD-typed update.
\begin{myLemma}[Estimation error]
    \label{lemma:ucb-heavy-tail} 
If $\sigma_t$, $\tau_t$, $\tau_0$ are set as 
    \begin{equation*}
        \begin{split}
            \sigma_t &= \max\bigbbr{\nu_t, \sigma_{\min}, \sqrt{\frac{2\beta_{t-1}}{\tau_0\sqrt{\alpha} t^{\frac{1-\epsilon}{2(1+\epsilon)}}}}\norm{\x_t}_{V_{t-1}^{-1}}},\\
            \tau_t &= \tau_0 \frac{\sqrt{1+w_t^2}}{w_t} t^{\frac{1-\epsilon}{2(1+\epsilon)}},\quad \tau_0 = \frac{\sqrt{2 \kappa} (\log 3 T)^{\frac{1-\epsilon}{2(1+\epsilon)}}}{\left(\log \frac{2 T^2}{\delta}\right)^{\frac{1}{1+\epsilon}}},  
        \end{split}
    \end{equation*}
    where $\sigma_{\min}$ is a small positive constant which will be specified later and $w_t \define \frac{1}{\sqrt{\alpha}}\norm{\frac{\x_t}{\sigma_t}}_{V_{t-1}^{-1}}$. Let the step size $\alpha=4$, then with probability at least $1-4\delta$, $\forall t\geq 1$, we have $\norm{\thetah_{t+1}-\ts}_{V_{t}} \leq \beta_{t}$ with
    \begin{equation}\label{eq:beta}
        \begin{split}
        \beta_{t} &\define 107\log \frac{2 T^2}{\delta} \tau_0 t^{\frac{1-\epsilon}{2(1+\epsilon)}}+\sqrt{\lambda\sbr{2+4S^2}}, 
        \end{split}
    \end{equation}
    where $\kappa \define d\log \left(1 + \frac{L^2T}{4\sigma_{\min}^2\lambda d}\right)$.
\end{myLemma}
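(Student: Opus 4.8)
The plan is to treat the OMD recursion~\eqref{eq:online-estimator} as the estimator and prove the bound by induction on $t$, carrying the hypothesis $\norm{\thetah_s-\ts}_{V_{s-1}}\le\beta_{s-1}$ for all $s\le t$. First I would derive the one-step inequality from the first-order optimality condition of the proximal map~\eqref{eq:online-estimator} (legitimate since $\ts\in\Theta$ by Assumption~\ref{assum:bound}) together with the three-point identity for the quadratic divergence $\D_{\psi_t}$ with $\psi_t$ as in~\eqref{eq:regularizer}:
\[
\norm{\thetah_{t+1}-\ts}_{V_t}^2\le\norm{\thetah_t-\ts}_{V_t}^2+\norm{\nabla\ell_t(\thetah_t)}_{V_t^{-1}}^2-2\inner{\nabla\ell_t(\thetah_t)}{\thetah_t-\ts}.
\]
Using $V_t=V_{t-1}+\tfrac1{\alpha\sigma_t^2}\x_t\x_t^\T$ to split $\norm{\thetah_t-\ts}_{V_t}^2=\norm{\thetah_t-\ts}_{V_{t-1}}^2+\tfrac1\alpha\Delta_t^2$ with $\Delta_t\define\x_t^\T(\thetah_t-\ts)/\sigma_t$, and telescoping from $s=1$ to $t$ (the base term $\norm{\thetah_1-\ts}_{V_0}^2$ is at most $4\lambda S^2$), reduces the proof to bounding $\sum_{s\le t}\bigl[\tfrac1\alpha\Delta_s^2+\norm{\nabla\ell_s(\thetah_s)}_{V_s^{-1}}^2-2\inner{\nabla\ell_s(\thetah_s)}{\thetah_s-\ts}\bigr]$.

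Second, I would unpack each summand. Since $\nabla\ell_s(\theta)=-\tfrac{\x_s}{\sigma_s}f_{\tau_s}'(z_s(\theta))$ with $f_{\tau_s}'$ the clip to $[-\tau_s,\tau_s]$, and $z_s(\thetah_s)=\eta_s/\sigma_s-\Delta_s$, the inner-product term equals $-f_{\tau_s}'(z_s(\thetah_s))\,\Delta_s$ and the stability term equals $f_{\tau_s}'(z_s(\thetah_s))^2\cdot\tfrac{\alpha w_s^2}{1+w_s^2}$ (Sherman--Morrison). The crux of the structural argument is that the Huber loss is exercised in its quadratic branch at the current iterate, $\abs{z_s(\thetah_s)}\le\tau_s$: the inductive hypothesis together with the third entry in the $\max$ defining $\sigma_s$ forces $\abs{\Delta_s}\le\norm{\x_s}_{V_{s-1}^{-1}}\beta_{s-1}/\sigma_s$ to be a small fraction of $\tau_s$ (equivalently, $w_s^2\le\tau_0 s^{\frac{1-\epsilon}{2(1+\epsilon)}}/(2\sqrt\alpha\,\beta_{s-1})$, which is also why $w_s$ stays tiny), while the normalized noise $\abs{\eta_s}/\sigma_s$ is tamed by the clipping at level $\tau_s$ and the moment control $\E[\abs{\eta_s}^{1+\epsilon}\mid\F_{s-1}]=\nu_s^{1+\epsilon}\le\sigma_s^{1+\epsilon}$. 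On this branch $f_{\tau_s}'(z_s(\thetah_s))=z_s(\thetah_s)=\eta_s/\sigma_s-\Delta_s$, so $-2\inner{\nabla\ell_s(\thetah_s)}{\thetah_s-\ts}=-2\Delta_s^2+2\tfrac{\eta_s}{\sigma_s}\Delta_s$, producing a genuinely negative quadratic drift plus a martingale cross-term.

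Third comes the cancellation, which is the heart of the matter. After the substitutions above, the metric-change term $\tfrac1\alpha\Delta_s^2$ and the adaptively-normalized stability term are both positive, and the latter carries the multiplicative-bias factor $\tfrac{\alpha w_s^2}{1+w_s^2}$; summed over $s$ these produce a positive $\Ot(\sqrt T)$ contribution. The choice $\alpha=4$ and the specific form $\tau_s=\tau_0\tfrac{\sqrt{1+w_s^2}}{w_s}s^{\frac{1-\epsilon}{2(1+\epsilon)}}$, combined with the lower bound on $\sigma_s$, are exactly calibrated so that the negative drift $-(2-\tfrac1\alpha)\Delta_s^2$ and the negative term injected through $\sigma_s$ absorb these positive contributions, leaving only the martingale sum and residuals of the target order. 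For the martingale/noise part I would split $\eta_s/\sigma_s$ into its clip at level $\tau_s$ (bounded, so a Freedman--Bernstein-type bound and a self-normalized concentration inequality apply) and its truncation excess (whose conditional first and second moments are controlled by $\tau_s^{-\epsilon}$ and $\tau_s^{1-\epsilon}$ respectively via the $(1+\epsilon)$-moment); this is precisely where the exponent $\epsilon$ and the polylogarithmic factors inside $\tau_0$ enter, and summing with the log-determinant bound $\sum_{s}\norm{\x_s/\sigma_s}_{V_s^{-1}}^2\le\alpha\kappa$ yields the leading term $107\log\tfrac{2T^2}\delta\,\tau_0\,t^{\frac{1-\epsilon}{2(1+\epsilon)}}$.

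Finally I would close the induction: collecting the estimates gives a bound on $\norm{\thetah_{t+1}-\ts}_{V_t}^2$ that is self-referential because $\sigma_s$ depends on earlier $\beta$'s, and the stated $\beta_t$ in~\eqref{eq:beta} is exactly a fixed point of this recursion (with $\sqrt{\lambda(2+4S^2)}$ absorbing the initialization $4\lambda S^2$ plus an additive slack); a union bound over the handful of concentration events (the clipped-noise martingale, the self-normalized term, and the truncation-bias sums) gives probability at least $1-4\delta$. The step I expect to be the main obstacle is the cancellation in the third paragraph: making a single negative drift simultaneously cancel the metric-change term and the adaptive-normalization-inflated stability term is what pins down the exact interplay among $\sigma_s$, $\tau_s$, $w_s$, $\alpha=4$ and the constant $107$, and is what makes the recursion for $\beta_t$ close; a secondary difficulty is verifying quadratic-branch membership uniformly in $t$ and controlling the heavy-tailed noise on the events where the clipping activates.
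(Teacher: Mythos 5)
Your overall architecture matches the paper's: the one-step OMD/Bregman inequality, telescoping with base term $4\lambda S^2$, induction on the event $\norm{\thetah_s-\ts}_{V_{s-1}}\le\beta_{s-1}$, the Sherman--Morrison identity $\norm{\x_s/\sigma_s}^2_{V_s^{-1}}=\alpha w_s^2/(1+w_s^2)$, the clip-plus-truncation-excess treatment of the noise, and the calibration of $\alpha=4$ so that the negative quadratic drift absorbs the stability term. However, there is one genuine gap at the heart of your third paragraph.

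You assert that the noisy loss is exercised in its quadratic branch at the current iterate, i.e.\ $\abs{z_s(\thetah_s)}=\abs{\eta_s/\sigma_s-\Delta_s}\le\tau_s$ for every round, arguing that $\abs{\Delta_s}$ is small by induction and that $\abs{\eta_s}/\sigma_s$ is ``tamed by the clipping.'' The first half is correct, but the second is circular: under a $(1+\epsilon)$-moment assumption the normalized noise exceeds $\tau_s$ on a nontrivial set of rounds, and those are exactly the rounds on which the clipping activates and $f'_{\tau_s}(z_s(\thetah_s))=\pm\tau_s\ne z_s(\thetah_s)$. On those rounds your identity $-2\inner{\nabla\ell_s(\thetah_s)}{\thetah_s-\ts}=-2\Delta_s^2+2\tfrac{\eta_s}{\sigma_s}\Delta_s$ fails, so both the negative drift $-2\Delta_s^2$ and the martingale cross-term you rely on for the cancellation disappear, replaced by a term of size up to $2\tau_s\abs{\Delta_s}\approx\tau_0\sqrt{\alpha}\,s^{\frac{1-\epsilon}{2(1+\epsilon)}}\beta_{s-1}$. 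The paper avoids this by never requiring the \emph{noisy} loss to be quadratic: it introduces the \emph{denoised} loss $\ellt_s$, whose argument $\zt_s(\thetah_s)=\x_s^\T(\ts-\thetah_s)/\sigma_s$ contains no noise and is therefore deterministically in the quadratic region under the induction hypothesis and the third entry of the $\max$ defining $\sigma_s$; the negative term $-\norm{\thetah_s-\ts}^2_{\x_s\x_s^\T/\sigma_s^2}$ is then extracted from the convexity of $\ellt_s$ around its minimizer $\ts$, and the discrepancy $\nabla\ellt_s(\thetah_s)-\nabla\ell_s(\thetah_s)$ is split into a self-normalized part (handled by concentration plus AM-GM) and a ``Huber-loss term'' that is nonzero only when $\abs{z_s(\ts)}>\tau_s/2$ and is controlled by the high-probability count $\sum_s\indicator\{\abs{z_s(\ts)}>\tau_s/2\}\le\tfrac{23}{3}\log\tfrac{2T^2}{\delta}$. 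You flag ``the events where the clipping activates'' as a secondary difficulty, but it is in fact the primary one, and without the denoised-loss device (or an equivalent bad-round accounting) your cancellation argument does not close; relatedly, your claimed drift coefficient $-(2-\tfrac1\alpha)$ is an artifact of the unavailable quadratic-branch identity, whereas the paper only ever has $-(1-\tfrac1\alpha)$ to spend.
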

\noindent We provide an analysis sketch in Section~\ref{sec:analysis}, where we explain how to analyze the estimation error using the OMD framework, and decompose the effects of heavy-tailed noise and one-pass updates. The full proof of Lemma~\ref{lemma:ucb-heavy-tail} is provided in Appendix~\ref{app:proof-lemma-ucb}. 

Notably, both our method and the HeavyOFUL algorithm of~\citet{NeurIPS'23:hvt-lb2} require the input of $\sigma_t, \tau_t$. In fact, our parameter setting for $\sigma_t$ is simpler. HeavyOFUL requires maintaining two complex lower bounds for $\sigma_t$ to satisfy: $\forall \theta \in \Theta, \abs{\sbr{\x_t^\T\theta - \x_t^\T\theta_*} / \sigma_t} \leq \tau_t$ ensuring that the loss remains quadratic for noiseless data. In contrast, our analysis focuses on a refined condition to guarantee the quadratic loss property involving only the current estimate $\thetah_t$: $\big|(\x_t^\T\thetah_t - \x_t^\T\theta_*) / \sigma_t\big| \leq \tau_t$. This refinement enables us to design a recursive parameter setting that directly incorporates the upper confidence bound $\beta_{t-1}$ from the previous round into the configuration of $\sigma_t$ for the current round. Furthermore, this parameter setting can be directly applied to the adaptive Huber regression. Similar to~\citet{NeurIPS'23:hvt-lb2}, our method requires the knowledge of the moment $\nu_t$ of each round $t$. If $\nu_t$ is not available but a general bound $\nu$ is known, such that $\mathbb{E}[|\eta_t|^{1+\epsilon} \mid \mathcal{F}_{t-1}] \leq \nu^{1+\epsilon}$, we can still trivially achieve the same estimation error bound by replacing $\nu_t$ with $\nu$ in the setting of $\sigma_t$.

\begin{algorithm}[!t]
    \caption{\hvtucb}
    \label{alg:heavy-ucb}
  \begin{algorithmic}[1]
  \REQUIRE time horizon $T$, confidence $\delta$, regularizer $\lambda$, $\sigma_{\min}$, bounded parameters $S$, $L$, parameter for estimation $c_0$, $c_1$, $\tau_0$, $\alpha$, $\epsilon$.\\
  \STATE Set $\kappa = d\log \left(1 + \frac{L^2T}{\sigma_{\min}^2\lambda\alpha d}\right)$, $V_0 = \lambda I_d$, $\thetah_1 = \mathbf{0}$ and compute $\beta_{0}$ by~\eqref{eq:beta}
  \FOR{$t = 1,2,...,T$}
        \STATE Select $\x_t = \argmax_{\mathbf{x} \in \X_t} \langle \mathbf{x}, \thetah_t \rangle + \beta_{t-1} \|\x\|_{V_{t-1}^{-1}}$
    \STATE Receive the reward $r_t$, and $\nu_t$
    \STATE Set $\sigma_t = \max\bigbbr{\nu_t, \sigma_{\min},\sqrt{\frac{2\beta_{t-1}}{\tau_0\sqrt{\alpha} t^{\frac{1-\epsilon}{2(1+\epsilon)}}}}\norm{\x_t}_{V_{t-1}^{-1}}}$
    \STATE Set $\tau_t=\tau_0 \frac{\sqrt{1+w_t^2}}{w_t} t^{\frac{1-\epsilon}{2(1+\epsilon)}}$ with $w_t = \frac{1}{\sqrt{\alpha}}\norm{\frac{\x_t}{\sigma_t}}_{V_{t-1}^{-1}}$
    \STATE Update $V_{t} = V_{t-1} + \alpha^{-1}\sigma_t^{-2}\x_t \x_t^\T$
    \STATE Compute $\thetah_{t+1}$ by~\eqref{eq:online-estimator} and $\beta_{t}$ by~\eqref{eq:beta}
  \ENDFOR
  \end{algorithmic}
  \end{algorithm}

\parag{UCB Construction.}~~~Based on the one-pass estimator~\eqref{eq:online-estimator} and Lemma~\ref{lemma:ucb-heavy-tail}, we can specify the arm selection criterion as 
\begin{equation}\label{eq:arm-sele}
    \begin{split}
        \x_t=\argmax_{\mathbf{x} \in \X_t}\bbr{\big\langle\mathbf{x},\thetah_t\big\rangle+\beta_{t-1}\norm{\mathbf{x}}_{V_{t-1}^{-1}}}.
    \end{split}
\end{equation}
Algorithm~\ref{alg:heavy-ucb} summarizes the overall algorithm, and enjoys following instance-dependent regret.
\begin{myThm}\label{thm:variance-aware}
    By setting $\sigma_t$, $\tau_t$, $\tau_0$, $\alpha$ as in Lemma~\ref{lemma:ucb-heavy-tail}, and let $\lambda = d$, $\sigma_{\min} = \frac{1}{\sqrt{T}}$, $\delta = \frac{1}{8T}$, with probability at least $1-1/T$, the regret of \hvtucb is bounded by
    \begin{equation*}
        \begin{split}
            \Reg_T &\leq \Ot\sbr{dT^{\frac{1-\epsilon}{2(1+\epsilon)}}\sqrt{\sum_{t=1}^T \nu_t^2}+dT^{\frac{1-\epsilon}{2(1+\epsilon)}}}.
        \end{split}
    \end{equation*}
\end{myThm}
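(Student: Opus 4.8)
The plan is to combine the standard optimism-based regret decomposition for UCB-type algorithms with a careful accounting of the normalization factors $\sigma_t$ and the local potentials $w_t$, using Lemma~\ref{lemma:ucb-heavy-tail} as a black box (the subtle cancellation of the $\Ot(\sqrt T)$ stability bias is already folded into the definition of $\beta_t$). First I would condition on the event $\Ecal$ of probability at least $1-4\delta$ on which $\norm{\thetah_{t+1}-\ts}_{V_t}\le\beta_t$ for all $t\ge1$, hence $\norm{\thetah_t-\ts}_{V_{t-1}}\le\beta_{t-1}$ (the $t=1$ case holding deterministically from the initialization in $\Theta$). Writing $\x_t^\star\define\argmax_{\mathbf{x}\in\X_t}\inner{\mathbf{x}}{\ts}$, the arm rule~\eqref{eq:arm-sele} is optimistic, $\inner{\x_t^\star}{\ts}\le\inner{\x_t^\star}{\thetah_t}+\beta_{t-1}\norm{\x_t^\star}_{V_{t-1}^{-1}}\le\inner{\x_t}{\thetah_t}+\beta_{t-1}\norm{\x_t}_{V_{t-1}^{-1}}$, and Cauchy--Schwarz gives $\inner{\x_t}{\thetah_t-\ts}\le\beta_{t-1}\norm{\x_t}_{V_{t-1}^{-1}}$; adding these, the instantaneous regret at round $t$ is at most $2\beta_{t-1}\norm{\x_t}_{V_{t-1}^{-1}}$, so $\Reg_T\le 2\sum_{t=1}^T\beta_{t-1}\norm{\x_t}_{V_{t-1}^{-1}}$.

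Next I would use $\norm{\x_t}_{V_{t-1}^{-1}}=\sqrt\alpha\,\sigma_t w_t$ (from the definition of $w_t$), peel off the $t=1$ term --- which is $2\beta_0\norm{\x_1}_{V_0^{-1}}\le 2\beta_0 L/\sqrt\lambda=\O(1)$ since $V_0=\lambda I$ --- and apply Cauchy--Schwarz together with $\beta_{t-1}\le\beta_T$ to obtain
\[
\Reg_T\le \O(1)+2\sqrt\alpha\,\beta_T\Big(\textstyle\sum_{t=2}^T\sigma_t^2\Big)^{1/2}\Big(\textstyle\sum_{t=2}^T w_t^2\Big)^{1/2}.
\]
For the second sum, note that $\sigma_t$ always dominates the third entry of its defining maximum, so $w_t^2\le\tau_0 t^{\frac{1-\epsilon}{2(1+\epsilon)}}/(2\sqrt\alpha\,\beta_{t-1})$, which is below $1$ for every $t\ge2$ by the lower bound on $\beta_{t-1}$ from~\eqref{eq:beta} and $(t/(t-1))^{\frac{1-\epsilon}{2(1+\epsilon)}}\le\sqrt2$; a standard elliptical-potential argument applied to $V_t=V_{t-1}+(\x_t/(\sqrt\alpha\sigma_t))(\x_t/(\sqrt\alpha\sigma_t))^{\T}$ then yields $\sum_{t=2}^T w_t^2\le\sum_{t=1}^T\min\{1,w_t^2\}\le 2\kappa$.

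The crux is to bound $\sum_{t=2}^T\sigma_t^2$ by $\sum_t\nu_t^2+\Ot(1)$ rather than the naive $\sum_t\nu_t^2+\Ot(T/d)$. Let $A_t$ be the square of the third term in the maximum defining $\sigma_t$. From $V_{t-1}\succeq\lambda I$ we get $\norm{\x_t}_{V_{t-1}^{-1}}^2\le L^2/\lambda$, and from~\eqref{eq:beta} that $\beta_{t-1}/\big(\tau_0 t^{\frac{1-\epsilon}{2(1+\epsilon)}}\big)=\Ot(1)$, so $A_t=\Ot(L^2/\lambda)=\Ot(1/d)$. On rounds where $A_t$ is \emph{not} the maximum, $\sigma_t^2\le\nu_t^2+\sigma_{\min}^2$, contributing at most $\sum_t\nu_t^2+T\sigma_{\min}^2=\sum_t\nu_t^2+1$. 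On the set $\mathcal T_1$ of rounds where $A_t$ \emph{is} the maximum, the identity $w_t^2=\tau_0 t^{\frac{1-\epsilon}{2(1+\epsilon)}}/(2\sqrt\alpha\,\beta_{t-1})$ holds; since $\beta_{t-1}=\Theta\big(\tau_0 t^{\frac{1-\epsilon}{2(1+\epsilon)}}\big)$ up to logarithmic factors --- precisely because $\tau_0\asymp\sqrt\kappa$ and $\alpha=4$ --- this pins $w_t^2=\Omega(1)$ up to logarithmic factors, so $|\mathcal T_1|\cdot\Omega(1)\le\sum_{t\in\mathcal T_1}w_t^2\le 2\kappa$ forces $|\mathcal T_1|=\Ot(d)$, whence $\sum_{t\in\mathcal T_1}\sigma_t^2=|\mathcal T_1|\cdot\Ot(1/d)=\Ot(1)$; adding the two contributions gives $\sum_{t=2}^T\sigma_t^2\le\sum_t\nu_t^2+\Ot(1)$.

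Finally I would substitute $\lambda=d$, $\sigma_{\min}=1/\sqrt T$, $\delta=1/(8T)$, $\alpha=4$, which make $\kappa=\Ot(d)$, $\tau_0=\Ot(\sqrt d)$ and $\beta_T=\Ot\big(\sqrt d\,T^{\frac{1-\epsilon}{2(1+\epsilon)}}\big)$, so that $2\sqrt\alpha\,\beta_T\sqrt{2\kappa}=\Ot\big(dT^{\frac{1-\epsilon}{2(1+\epsilon)}}\big)$; together with $\big(\sum_{t=2}^T\sigma_t^2\big)^{1/2}\le\big(\sum_t\nu_t^2\big)^{1/2}+\Ot(1)$ this yields the claimed bound on $\Ecal$, whose probability is $1-4\delta=1-1/(2T)\ge1-1/T$. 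I expect the main obstacle to be the estimate $|\mathcal T_1|=\Ot(d)$: without it, the $\Ot(1/d)$ normalization floor of those rounds would aggregate to $\Ot(T/d)$ inside the square root and inflate the regret by a $\sqrt T$-type factor, and establishing it uses exactly the calibrated interplay --- set up by Lemma~\ref{lemma:ucb-heavy-tail} --- between the growth rate of $\beta_{t-1}$, the magnitude of $\tau_0$ relative to $\sqrt\kappa$, and the step size $\alpha=4$.
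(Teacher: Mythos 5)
Your proposal is correct and follows essentially the same route as the paper's proof: optimism reduces the regret to $2\beta_T\sum_t\norm{\x_t}_{V_{t-1}^{-1}}=\Ot(\beta_T)\sum_t\sigma_t w_t$, the rounds are split according to which branch of the maximum determines $\sigma_t$, the $\{\nu_t,\sigma_{\min}\}$ rounds are handled by Cauchy--Schwarz plus the elliptical potential lemma, and the UCB-driven rounds are controlled by the observation that $w_t^{-2}=\beta_{t-1}/(\sqrt{\alpha}\tau_0 t^{\frac{1-\epsilon}{2(1+\epsilon)}})=\Ot(1)$ there. The only (harmless) repackaging is that you apply Cauchy--Schwarz globally and then bound the cardinality $|\mathcal{T}_1|=\Ot(d)$ together with $\sigma_t^2=\Ot(1/d)$, whereas the paper bounds $\sum_{t\in\mathcal{J}_2}\sigma_t w_t\le c\,(L/(2\sqrt{\lambda}))\sum_t w_t^2\le cL\kappa/\sqrt{\lambda}$ directly; the two computations are equivalent and rest on the same identity.
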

\noindent The proof of Theorem~\ref{thm:variance-aware} can be found in Appendix~\ref{app:proof-regret}.  Compared to~\citet{NeurIPS'23:hvt-lb2}, our approach achieves the same order of instance-dependent regret bounds, while significantly reducing the per-round computational complexity from $\O(t\log T)$ to $\O(1)$. 

When the moment $\nu_t$ of each round $t$ is unknown, but a general bound $\nu$ is known, such that $\mathbb{E}[|\eta_t|^{1+\epsilon} \mid \mathcal{F}_{t-1}] \leq \nu^{1+\epsilon}$, we can achieve the following regret bound.
\begin{myCor}\label{cor:regret-bound}
    Follow the parameter setting in Theorem~\ref{thm:variance-aware}, and replace the $\nu_t$ with $\nu$ in the $\sigma_t$ setting, the regret of \hvtucb is bounded with probability at least $1-1/T$, by
    \begin{equation*}
        \begin{split}
            \Reg_T &\leq \Ot\sbr{d T^{\frac{1}{1+\epsilon}}}.
        \end{split}
    \end{equation*}
\end{myCor}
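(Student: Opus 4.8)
The plan is to obtain the corollary as an immediate consequence of Theorem~\ref{thm:variance-aware}, with essentially no new analysis beyond bookkeeping. First I would note that the hypothesis $\E[|\eta_t|^{1+\epsilon}\mid\F_{t-1}]\le\nu^{1+\epsilon}$ forces $\nu_t\le\nu$ for every $t$, so replacing $\nu_t$ by $\nu$ in the definition of $\sigma_t$ merely enlarges the normalization factor. I would then revisit the proof of Lemma~\ref{lemma:ucb-heavy-tail} to confirm (as already remarked after that lemma) that the per-round moment bound enters only through the inequality $\sigma_t\ge\nu_t$ — used to keep ``normal'' data in the quadratic region of the Huber loss with high probability and to control the conditional variance of the relevant self-normalized martingale. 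Since these uses are monotone in $\sigma_t$, enlarging $\sigma_t$ never hurts, and the estimation-error guarantee $\norm{\thetah_{t+1}-\ts}_{V_t}\le\beta_t$ of Lemma~\ref{lemma:ucb-heavy-tail} continues to hold verbatim. Consequently the arm-selection rule~\eqref{eq:arm-sele} and the entire regret analysis underlying Theorem~\ref{thm:variance-aware} carry over unchanged.

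Next I would track where $\nu_t$ enters the regret bound of Theorem~\ref{thm:variance-aware}. The factor $\sqrt{\sum_{t=1}^T\nu_t^2}$ there should arise from bounding $\sum_t\sigma_t^2$ after a Cauchy--Schwarz step of the form $\sum_t\beta_{t-1}\norm{\x_t}_{V_{t-1}^{-1}}\lesssim\beta_T\sqrt{\sum_t\sigma_t^2}\,\sqrt{\sum_t w_t^2}$, together with $\sum_t\sigma_t^2\lesssim\sum_t\nu_t^2+T\sigma_{\min}^2+(\text{lower order})$; the elliptic-potential sum $\sum_t w_t^2$ contributes the remaining $\Ot(d)$-type factor. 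After the substitution $\nu_t\mapsto\nu$ this becomes $\sqrt{\sum_{t=1}^T\nu^2}=\nu\sqrt{T}$, and with $\sigma_{\min}=1/\sqrt{T}$ the term $T\sigma_{\min}^2=1$ is absorbed into the $\Ot(\cdot)$. Substituting into the conclusion of Theorem~\ref{thm:variance-aware} gives
\[
\Reg_T\le\Ot\Bigl(d\,T^{\frac{1-\epsilon}{2(1+\epsilon)}}\cdot\nu\sqrt{T}+d\,T^{\frac{1-\epsilon}{2(1+\epsilon)}}\Bigr).
\]

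Finally I would simplify the exponents. Since $\frac{1-\epsilon}{2(1+\epsilon)}+\frac12=\frac{(1-\epsilon)+(1+\epsilon)}{2(1+\epsilon)}=\frac{1}{1+\epsilon}$, the first term is $\Ot\bigl(dT^{\frac{1}{1+\epsilon}}\bigr)$; and because $\frac{1-\epsilon}{2(1+\epsilon)}\le\frac{1}{1+\epsilon}$ for every $\epsilon\in(0,1]$, the second term is of strictly lower order, so $\Reg_T\le\Ot\bigl(dT^{\frac{1}{1+\epsilon}}\bigr)$, as claimed. I do not anticipate a genuine obstacle: the only point requiring care is verifying that each invocation of the moment bound in the proofs of Lemma~\ref{lemma:ucb-heavy-tail} and Theorem~\ref{thm:variance-aware} is monotone in the normalization factor, after which the statement reduces to the elementary exponent identity above.
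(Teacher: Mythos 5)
Your proposal is correct and follows essentially the same route as the paper: the paper likewise reruns the proof of Theorem~\ref{thm:variance-aware} with $\nu_t$ replaced by $\nu$ in the $\mathcal{J}_1$ summation, obtaining $\sum_{t\in\mathcal{J}_1}\sigma_t w_t\le\sqrt{2\kappa}\,\nu\sqrt{T+1}$ while the $\mathcal{J}_2$ term is unchanged and lower order, and then concludes via the exponent identity $\frac{1-\epsilon}{2(1+\epsilon)}+\frac12=\frac{1}{1+\epsilon}$. Your additional remark that every use of the moment bound is monotone in $\sigma_t$ (so enlarging $\sigma_t$ preserves Lemma~\ref{lemma:ucb-heavy-tail}) is exactly the observation the paper makes informally after that lemma.
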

\noindent The proof of Corollary~\ref{cor:regret-bound} can be found in Appendix~\ref{app:proof-corollary}. This bound matches the lower bound $\Omega(d T^{\frac{1}{1+\epsilon}})$~\citep{NeurIPS'18:hvtlb} up to logarithmic factors. Moreover, when $\epsilon = 1$, i.e., the bounded variance setting, our approach recovers the regret bound $\Ot(d\sqrt{T})$, which nearly matches the lower bound $\Omega(d\sqrt{T})$~\citep{COLT'08:linear-bandit}.

\section{Analysis Sketch}
\label{sec:analysis}
In this section, we provide a proof sketch for Lemma~\ref{lemma:ucb-heavy-tail}, where we adapt the analysis framework of Online Mirror Descent (OMD) to the SLB setting. To simplify the presentation, we first illustrate this framework using a simpler case where the noise follows a sub-Gaussian distribution in Section~\ref{sec:sub-gaussian}. We then discuss the challenges of extending this framework to handle heavy-tailed noise and present our refined analysis in Section~\ref{sec:heavy-tail}.

\subsection{Case study of sub-Gaussian noise}
\label{sec:sub-gaussian}
We first consider that the noise $\eta_t$ in model~\eqref{eq:linear-model} follows a $R$-sub-Gaussian distribution. Then the robustification parameter $\tau_t$ should be set as $+\infty$ and the Huber loss recovers the squared loss by $\ell_t(\theta) = \frac{1}{2}((r_t - X_t^\T\theta)/\sigma_t)^2$. 

\parag{Estimation error decomposition.}~~~We begin by decomposing the estimation error into the following three components:
\begin{myLemma}\label{lemma:ucb-sub-gaussian} Let $\ellt_t(\theta) = \frac{1}{2}( X_t^\T(\theta_*-\theta)/\sigma_t)^2$ be the denoised Huber loss function, then the estimation error of estimator~\eqref{eq:online-estimator} with~$\alpha = 2$ can be decomposed as follows,
    \begin{equation*} 
        \begin{split}
            {}& \norm{\thetah_{t+1}-\ts}_{V_t}^2 \leq \underbrace{2\sum_{s=1}^{t}\inner{\nabla\ellt_s(\thetah_s) - \nabla\ell_s(\thetah_s)}{\thetah_s-\ts}}_{\text{generalization gap term}}\\
            & + \underbrace{\sum_{s=1}^{t}\norm{\nabla\ell_s(\thetah_s)}^2_{V_s^{-1}}}_{\text{stability term}} - \underbrace{\frac{1}{2}\sum_{s=1}^{t}\norm{\thetah_s-\ts}_{\frac{\x_s\x_s^\T}{\sigma_s^2}}^2}_{\text{negative term}}+ 4\lambda S^2,
        \end{split}
    \end{equation*}
\end{myLemma}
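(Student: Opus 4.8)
I would prove the decomposition through the ``unconstrained gradient step followed by projection'' view of the update, made explicit just before the lemma. Writing $\g_t \define \nabla\ell_t(\thetah_t)$, the OMD update~\eqref{eq:online-estimator} with the local-norm regularizer~\eqref{eq:regularizer} is equivalent to $\thetah_{t+1} = \argmin_{\theta\in\Theta}\norm{\theta - (\thetah_t - V_t^{-1}\g_t)}_{V_t}$, the $V_t$-projection onto $\Theta$ of the plain iterate $\thetah_t - V_t^{-1}\g_t$. Since $\Theta$ is convex and $\ts\in\Theta$ by Assumption~\ref{assum:bound}, the generalized Pythagorean inequality for the Bregman projection induced by the quadratic $\frac{1}{2}\norm{\cdot}_{V_t}^2$ gives
\begin{equation*}
\norm{\thetah_{t+1}-\ts}_{V_t}^2 \le \norm{\thetah_t - V_t^{-1}\g_t - \ts}_{V_t}^2 = \norm{\thetah_t-\ts}_{V_t}^2 - 2\inner{\g_t}{\thetah_t-\ts} + \norm{\g_t}_{V_t^{-1}}^2,
\end{equation*}
where the last summand is already the per-round stability term and the projection plays no further role.

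\textbf{Rewriting the cross term as curvature plus noise.} For the middle term I would split $\g_t = \nabla\ellt_t(\thetah_t) + \big(\nabla\ell_t(\thetah_t)-\nabla\ellt_t(\thetah_t)\big)$. Since $\ellt_t$ is quadratic with minimizer $\ts$ and $\ellt_t(\ts)=0$, an exact second-order expansion gives the identity $\inner{\nabla\ellt_t(\thetah_t)}{\thetah_t-\ts} = \norm{\thetah_t-\ts}_{\x_t\x_t^\T/\sigma_t^2}^2$, hence
\begin{equation*}
-2\inner{\g_t}{\thetah_t-\ts} = -2\norm{\thetah_t-\ts}_{\x_t\x_t^\T/\sigma_t^2}^2 + 2\inner{\nabla\ellt_t(\thetah_t)-\nabla\ell_t(\thetah_t)}{\thetah_t-\ts},
\end{equation*}
which is exactly the per-round generalization-gap contribution. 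Using $V_t = V_{t-1}+\alpha^{-1}\x_t\x_t^\T/\sigma_t^2$ from~\eqref{eq:regularizer} to peel off the rank-one part, $\norm{\thetah_t-\ts}_{V_t}^2 = \norm{\thetah_t-\ts}_{V_{t-1}}^2 + \frac{1}{\alpha}\norm{\thetah_t-\ts}_{\x_t\x_t^\T/\sigma_t^2}^2$, and collecting, the coefficient in front of $\norm{\thetah_t-\ts}_{\x_t\x_t^\T/\sigma_t^2}^2$ becomes $\frac{1}{\alpha}-2 = -\frac{3}{2}$ at $\alpha = 2$, which is at most $-\frac{1}{2}$. Keeping only $-\frac{1}{2}$ yields the one-step recursion
\begin{equation*}
\norm{\thetah_{t+1}-\ts}_{V_t}^2 \le \norm{\thetah_t-\ts}_{V_{t-1}}^2 - \frac{1}{2}\norm{\thetah_t-\ts}_{\x_t\x_t^\T/\sigma_t^2}^2 + 2\inner{\nabla\ellt_t(\thetah_t)-\nabla\ell_t(\thetah_t)}{\thetah_t-\ts} + \norm{\g_t}_{V_t^{-1}}^2.
\end{equation*}

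\textbf{Telescoping and the obstacle.} Summing this recursion over $s = 1,\dots,t$ telescopes the potential terms — this works cleanly precisely because the projection metric $\norm{\cdot}_{V_s}$ at round $s$ coincides with the potential $\norm{\cdot}_{V_s}$ carried into round $s+1$, so no cross terms between rounds survive — and leaves only the initial residual $\norm{\thetah_1-\ts}_{V_0}^2 = \lambda\norm{\thetah_1-\ts}_2^2 \le 4\lambda S^2$ by Assumption~\ref{assum:bound}, which is the additive constant in the statement; identifying $\g_s = \nabla\ell_s(\thetah_s)$ recovers the displayed decomposition. I do not expect a conceptual obstacle here: the whole argument is the Pythagorean step, the exact quadratic identity for $\ellt_t$, and one rank-one matrix splitting. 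The only care needed is bookkeeping — keeping the three quadratic forms $\norm{\cdot}_{V_t}$, $\norm{\cdot}_{V_{t-1}}$, $\norm{\cdot}_{\x_t\x_t^\T/\sigma_t^2}$ consistently aligned through the expansion, and noting that $\alpha = 2$ is what makes the residual curvature coefficient $\frac{1}{\alpha}-2$ negative enough to retain $-\frac{1}{2}$ (indeed any $\alpha \le 2$ suffices). The substantive difficulties — controlling the martingale generalization-gap term by a self-normalized concentration bound and playing the stability term off against the negative term — arise only in the subsequent steps that turn this decomposition into Lemma~\ref{lemma:ucb-heavy-tail}, not in the decomposition itself.
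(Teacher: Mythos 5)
Your proposal is correct and follows essentially the same route as the paper: a one-step inequality for the OMD/projected-gradient update (your Pythagorean step is the quadratic-regularizer specialization of the Bregman proximal inequality the paper invokes), splitting the gradient into its denoised part plus the noise discrepancy, using the quadratic curvature of $\ellt_t$ to produce the negative term, peeling off the rank-one increment of $V_t$, and telescoping to $\norm{\thetah_1-\ts}_{V_0}^2 \le 4\lambda S^2$. The only cosmetic difference is that you use the exact identity $\inner{\nabla\ellt_t(\thetah_t)}{\thetah_t-\ts} = \norm{\thetah_t-\ts}_{\x_t\x_t^\T/\sigma_t^2}^2$ (coefficient $\frac{1}{\alpha}-2$, then discarding the surplus), whereas the paper keeps only the one-sided bound $\frac{1}{2}\norm{\thetah_t-\ts}_{\x_t\x_t^\T/\sigma_t^2}^2 \le \inner{\nabla\ellt_t(\thetah_t)}{\thetah_t-\ts}$ (coefficient $\frac{1}{\alpha}-1$, which equals $-\frac{1}{2}$ exactly at $\alpha=2$); both land on the stated decomposition.
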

\begin{proof}[Proof Sketch]
    First based on the standard analysis of OMD~\citep[Lemma 6.16]{Orabona:Modern-OL}, we have 
    \begin{equation*}
        \begin{split}
            {}& \inner{\nabla\ell_t(\thetah_t)}{\thetah_t-\ts}\\
            \leq{}& \frac{1}{2}\norm{\thetah_t-\ts}_{V_t}^2 -\frac{1}{2}\norm{\thetah_{t+1}-\ts}_{V_t}^2+\frac{1}{2}\norm{\nabla\ell_t(\thetah_t)}^2_{V_t^{-1}}.
        \end{split}
    \end{equation*}
    Based on the quadratic property of the denoised loss function $\ellt_t(\cdot)$, we have the following equality,
\begin{equation*}
    \begin{split}
    \ellt_t(\thetah_t) - \ellt_t(\ts) =& \inner{\nabla\ellt_t(\thetah_t)}{\thetah_t-\ts}-\frac{1}{2}\norm{\thetah_t-\ts}^2_{\frac{\x_t\x_t^\T}{\sigma_t^2}}.
    \end{split}
\end{equation*}
Since $\theta_*$ is the minimizer of $\ellt_t(\cdot)$, $\ellt_t(\thetah_t) - \ellt_t(\ts) \geq 0$, which implies that $\frac{1}{2}\big\|\thetah_t-\ts\big\|^2_{\x_t\x_t^\T/\sigma_t^2} \leq \langle\nabla\ellt_t(\thetah_t),\thetah_t-\ts\rangle$, then substituting this into the OMD update rule and we can achieve the iteration inequality for the estimation error, further with a telescoping sum, and we finish the proof.
\end{proof}
\vspace{-5mm}
\parag{Stability term analysis.}~~~The first step is to extract stochastic term and deterministic term at each step $s$ as follows,
\begin{equation*}
    \begin{split}
 &\norm{\nabla\ell_s(\thetah_s)}^2_{V_s^{-1}} = \sbr{\frac{\x_s^\T(\theta_*-\thetah_s)}{\sigma_s}+\frac{\eta_s}{\sigma_s}}^2\norm{\frac{X_s}{\sigma_s}}^2_{V_s^{-1}}.
    \end{split}
\end{equation*}
Setting $\sigma_t = 1$, we have $\big|\x_s^\T(\theta_*-\thetah_s)\big| \leq 2LS$. Then based on the sub-Gaussian property, we have $\vert\eta_s\vert$ bounded by $\O(\sqrt{\log T})$ with high probability, and $\sum_{s=1}^t\norm{X_s}^2_{V_s^{-1}}$ can be bounded by $\O(\log T)$ with Lemma~\ref{lemma:potential} (potential lemma), which means $\sum_{s=1}^t\big\|\nabla\ell_s(\thetah_s)\big\|^2_{V_s^{-1}} \leq \O(\log^2 T)$.

\parag{Generalization gap term analysis.}~~~By taking the gradient of the loss function, we have
\begin{equation*}
    \begin{split}
\sum_{s=1}^t\inner{\nabla\ellt_s(\thetah_s) - \nabla\ell_s(\thetah_s)}{\thetah_s-\ts} = \sum_{s=1}^t\eta_s \inner{X_s}{\thetah_s-\ts},
    \end{split}
\end{equation*}
then we can directly apply the $1$-dimension self-normalized concentration (Theorem~\ref{thm:snc1}), further based on AM-GM inequality, we can bound the general gap term by a $\O(\log T)$ term and $\frac{1}{4}\sum_{s=1}^t \big\|\thetah_s-\theta_*\big\|^2_{X_sX_s^\T}$, which can be further canceled by the negative term. 

By integrating the stability term and generalization gap term into Lemma~\ref{lemma:ucb-sub-gaussian}, we can achieve the upper bound for the estimation error as $\big\|\thetah_{t+1}-\ts\big\|^2_{V_t} \leq  \O(\log T)$.

\subsection{Analysis for heavy-tailed noise}
\label{sec:heavy-tail}
In this section, we extend our analysis to the heavy-tailed case. The presence of heavy-tailed noise and the partially linear structure of the Huber loss introduces new challenges in the three key analyses mentioned above. We will address these challenges one by one in the following discussion.

\parag{Estimation error decomposition.}~~~With the presence of heavy-tailed noise, the denoised Huber loss function with noise-free data $\zt_t(\theta) \define \frac{\x_t^\T\theta_*-\x_t^\T\theta}{\sigma_t}$ is defined as
\begin{equation*}
    \ellt_t(\theta) \define \begin{cases}
    \frac{1}{2}\sbr{\frac{\x_t^\T(\theta_*-\theta)}{\sigma_t}}^2 & \text { if }|\zt_t(\theta)| \leq \tau_t, \\[10pt]
    \tau_t\abs{\frac{\x_t^\T(\theta_*-\theta)}{\sigma_t}}-\frac{\tau_t^2}{2} & \text { if }|\zt_t(\theta)|>\tau_t.
    \end{cases}
\end{equation*} 
Now~$\ellt_t(\theta)$ is partially quadratic and partially linear, to further utilize the OMD-based estimation error decomposition, we need to make sure $\thetah_t$ and $\theta_*$ both lie in the quadratic region of the loss function $\ellt_t(\cdot)$. Assuming the event $A_t = \big\{\forall s\in[t], \big\|\thetah_{s}-\theta_*\big\|_{V_{s-1}} \leq \beta_{s-1}\big\}$ holds, we set $ \sigma_t^2 \geq (2\norm{\x_t}^2_{V_{t-1}^{-1}}\beta_{t-1})/(\sqrt{\alpha}\tau_0 t^{\frac{1-\epsilon}{2(1+\epsilon)}})$, we can ensure $\big|(\x_t^\T\thetah_t-\x_t^\T \theta_*)/{\sigma_t}\big|\leq \frac{\tau_t}{2}$. This guarantees that $\ellt_t(\thetah_t)$ is always in the quadratic region, allowing us to continue using the previous decomposition. The negative term from the decomposition will then be used for subsequent cancellation.

\parag{Stability term analysis.}~~~Similar to the sub-Gaussian case, we can decompose the stability term into two stochastic and deterministic terms as follows,
\begin{equation*}
    \begin{split}
    \sum_{s=1}^{t}\norm{\nabla\ell_s(\thetah_s)}^2_{V_s^{-1}} &\leq \underbrace{2\sum_{s=1}^{t}\sbr{\min\bbr{\abs{\frac{\eta_s}{\sigma_s}}, \tau_s}}^2\norm{\frac{\x_s}{\sigma_s}}^2_{V_s^{-1}}}_{\text{stochastic term}}\\
    &+\underbrace{2\sum_{s=1}^{t}\sbr{\frac{\x_s^\T\theta_*-\x_s^\T\thetah_s}{\sigma_s}}^2\norm{\frac{\x_s}{\sigma_s}}^2_{V_s^{-1}}}_{\text{deterministic term}}.
    \end{split}
\end{equation*}
Note that the stochastic term is no longer inherently bounded. To address this, we utilize a concentration technique (Lemma~\ref{lemma:a_t}) to bound it by $\Ot(t^{\frac{1-\epsilon}{1+\epsilon}})$. For the deterministic term, $\frac{\x_s^\T\theta_* - \x_s^\T\thetah_s}{\sigma_s}$ is bounded by $2LS / \sigma_{\min}$, where $\sigma_{\min}$ is the minimum of $\sigma_t$, which can be as small as $1 / \sqrt{T}$. If we continue using the analysis based on potential lemma, this term will grow to $\Ot(\sqrt{T})$, which is undesirable. Based on the settings of $\sigma_t$, we ensure that $\big\|\frac{\x_s}{\sigma_s}\big\|^2_{V_s^{-1}} \leq \frac{1}{8}$. Therefore, this term can be bounded by $\frac{1}{4} \sum_{s=1}^t \big\|\theta_* - \thetah_s\big\|^2_{\x_s \x_s^\T / \sigma_s^2}$, which can be further canceled out by the negative term.

\parag{Generalization gap term analysis.}~~~The non-linear gradient of the Huber loss in the heavy-tailed setting makes it impossible to directly obtain a one-dimensional self-normalized term. We first decompose it into two distinct components as follows,
\begin{equation*}
    \begin{split}
    &\underbrace{2\sum_{s=1}^t\inner{\nabla\ellt_s(\thetah_s) + \nabla\ell_s(\theta_*)-\nabla\ell_s(\thetah_s)}{\thetah_s-\ts}}_{\text{Huber-loss term}}\\
    &\qquad\qquad+\underbrace{2\sum_{s=1}^t\inner{-\nabla\ell_s(\theta_*)}{\thetah_s-\ts}}_{\text{self-normalized term}},
    \end{split}
\end{equation*}
where Huber-loss term represents the influence of the Huber loss structure on the estimation error. Specifically, when all three gradient terms are within the quadratic region of the Huber loss, i.e., when $|z_t(\theta_*)| \leq \frac{\tau_t}{2}$, this term becomes zero. Consequently, this term reduces to a sum involving the indicator function $\sum_{s=1}^t \indicator\bbr{\abs{z_s(\theta_*)} > \frac{\tau_s}{2}}$, which can be bounded by $\Ot\big(t^{\frac{1-\epsilon}{2(1+\epsilon)}}\beta_t\big)$ using Eq. (C.12) of \citet{NeurIPS'23:hvt-lb2}. Then the self-normalized term corresponds to the $1$-dimensional version of the self-normalized concentration can be bounded by an $\Ot(t^{\frac{1-\epsilon}{1+\epsilon}})$ term and $\frac{1}{2}\sum_{s=1}^t\big\|\thetah_s-\theta_*\big\|_{\x_s\x_s^\T/\sigma_s^2}^2$ will be canceled by the negative term. 

By combining the analysis for stability term and generalization gap term and substituting back to the estimation error decomposition, we obtain:
\begin{equation*}
    \begin{split}
       \norm{\thetah_{t+1}-\ts}_{V_t}^2 \lesssim {}&  t^{\frac{1-\epsilon}{2(1+\epsilon)}}\beta_t + t^{\frac{1-\epsilon}{1+\epsilon}}. 
    \end{split}
\end{equation*}
To ensure the event $A_t$ holds, $\beta_t$ must satisfy:
\begin{equation*}
    \beta_t^2 \gtrsim t^{\frac{1-\epsilon}{2(1+\epsilon)}}\beta_t + t^{\frac{1-\epsilon}{1+\epsilon}}.
\end{equation*}
Finally, solving for $\beta_t$, we get $\beta_t = \O(t^{\frac{1-\epsilon}{2(1+\epsilon)}})$. Thus we finish the proof of Lemma~\ref{lemma:ucb-heavy-tail}.

\section{Experiments}
\label{sec:experiment}
In this section, we evaluate the empirical performance and time efficiency of our proposed \hvtucb algorithm. We present two experiments under heavy-tailed noise (Student's $t$) and light-tailed noise (Gaussian). Furthermore, we provide additional experiments in Appendix~\ref{app:exp}, including various heavy-tailed noises (Pareto, Lomax, Fisk), time-varying $\nu_t$, and changing arm sets $\X_t$, to further validate the effectiveness and robustness of our algorithm.

\parag{Settings.}~~~We consider the linear model $r_t = \x_t^\top\theta_* + \eta_t$, where the dimension $d = 2$, the number of rounds $T = 18000$, and the number of arms $n = 50$. Each dimension of the feature vectors for the arms is uniformly sampled from $[-1,1]$ and subsequently rescaled to satisfy $L = 1$. Similarly, $\theta_*$ is sampled in the same way and rescaled to satisfy $S = 1$. We conduct two synthetic experiments with different distributions for noise $\eta_t$: (a) Student's $t$-distribution with degree of freedom $df = 2.1$ to represent heavy-tailed noise; and (b) Gaussian noise sampled from $\N(0,1)$ to represent light-tailed noise. For the heavy-tailed experiment, we set $\epsilon = 0.99$ and $\nu = 1.31$, while for the light-tailed experiment, we set $\epsilon = 1$. We compare the performance of our proposed \hvtucb algorithm with the following baselines: (a) the OFUL algorithm~\citep{NIPS'11:AY-linear-bandits}; (b) the one-pass truncation-based algorithm CRTM~\citep{NeurIPS'23:efficient-hvtlb}; (c) the one-pass MOM-based algorithm CRMM~\citep{NeurIPS'23:efficient-hvtlb}; and (d) the Huber regression-based algorithm HEAVY-OFUL~\citep{NeurIPS'23:hvt-lb2}.

\begin{figure}[t]
    \centering
    \begin{minipage}[t]{0.48\columnwidth}
        \subfigure{
            \includegraphics[width=\columnwidth]{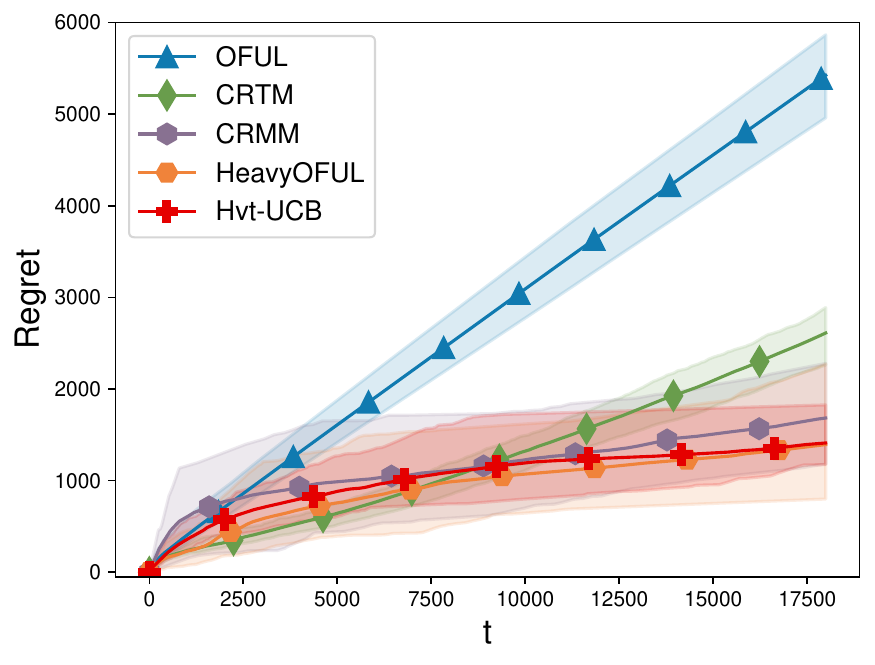}}
    \end{minipage}
    \hspace{0.01\columnwidth}
    \begin{minipage}[t]{0.48\columnwidth} 
        \subfigure{
            \includegraphics[width=\columnwidth]{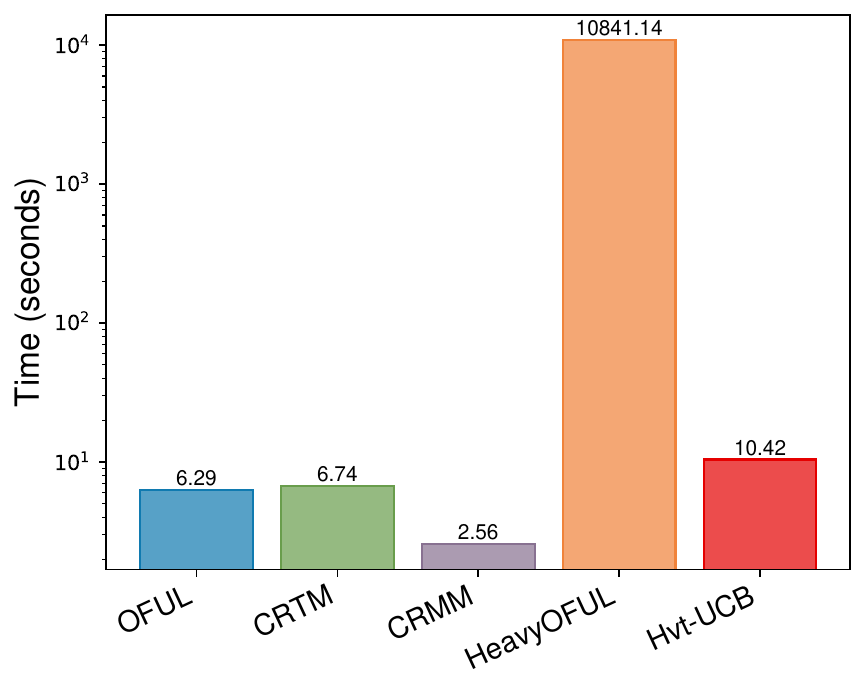}}
    \end{minipage}
    \vspace{-0.15in} 
    \caption{Student $t$: regret and time cost.}
    \label{figure:student} 
\end{figure}
\begin{figure}[t]
    \centering
    \begin{minipage}[t]{0.48\columnwidth}
        \subfigure{
            \includegraphics[width=\columnwidth]{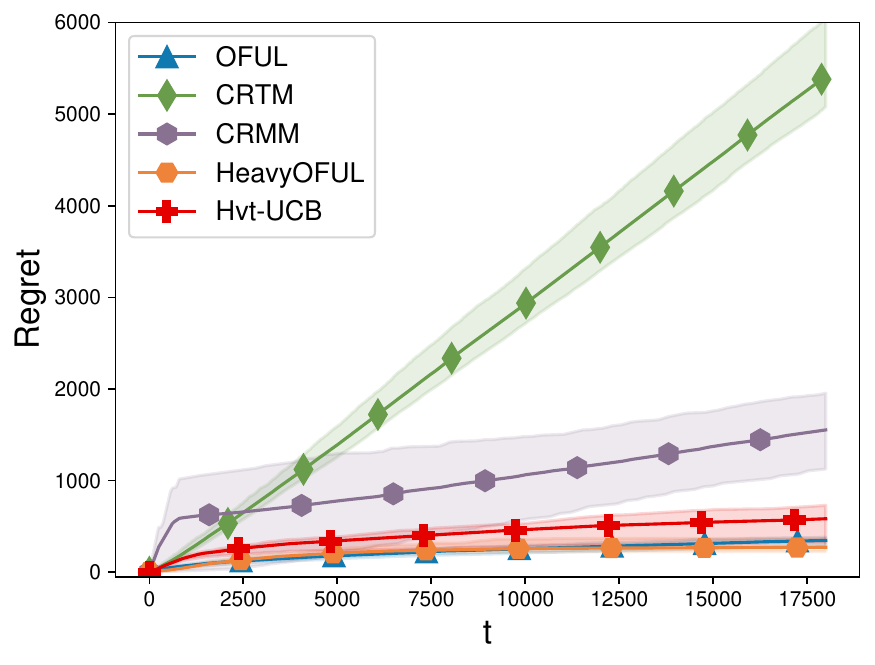}}
    \end{minipage}
    \hspace{0.01\columnwidth}
    \begin{minipage}[t]{0.48\columnwidth} 
        \subfigure{
            \includegraphics[width=\columnwidth]{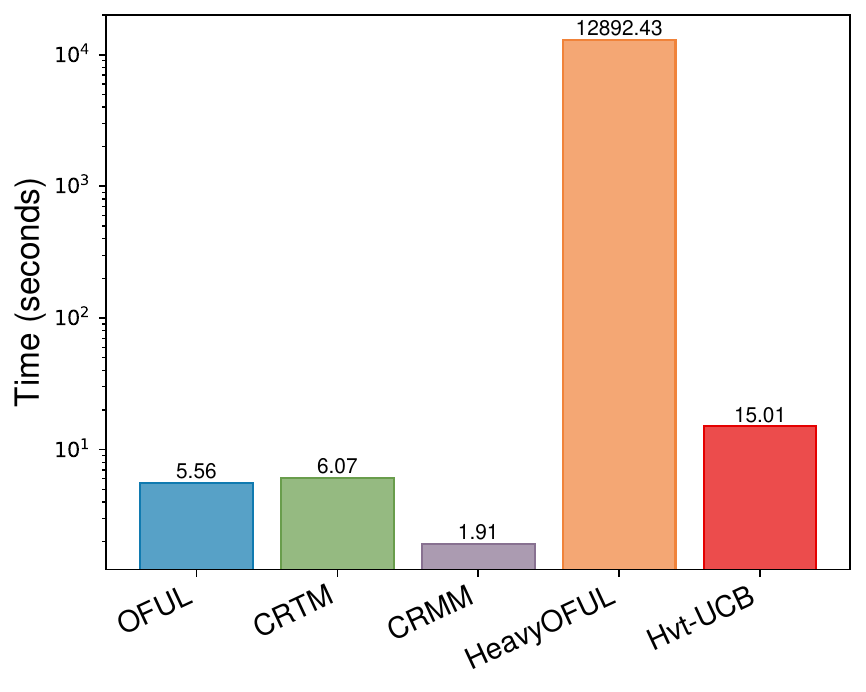}}
    \end{minipage}
    \vspace{-0.15in} 
    \caption{Gaussian: regret and time cost.}
    \label{figure:subgaussian} 
    \vspace{-3mm}
\end{figure}

\parag{Results.}~~~We conducted 10 independent trials and averaged the results. Figure~\ref{figure:student} shows the cumulative regret and computational time under Student's $t$ noise, while Figure~\ref{figure:subgaussian} presents the same metrics under Gaussian noise, with shaded regions indicating the variance across trials. Under both noise settings, our algorithm demonstrates comparable regret performance to Heavy-OFUL while achieving remarkable computational efficiency, with a speedup exceeding $800\times$. This efficiency advantage becomes more important as the number of rounds $T$ increases. OFUL, CTRM, and CRMM are faster than our \hvtucb algorithm because they use closed-form least squares solutions, while \hvtucb requires a projection step in each round. However, \hvtucb's runtime remains competitive and is on the same order of magnitude as these least squares-based algorithms. Notably, among the one-pass algorithms, CRMM exhibits the fastest runtime because it updates its estimates periodically (every $\O(\log T)$ rounds) rather than in every round. In the heavy-tailed noise scenario (Figure~\ref{figure:student}), the performance of OFUL drops significantly, demonstrating its sensitivity to heavy-tailed distributions. In the Gaussian noise setting (Figure~\ref{figure:subgaussian}), our algorithm maintains competitive performance alongside OFUL. However, the truncation-based CRTM algorithm shows suboptimal performance due to excessive reward truncation in light-tailed noise scenarios, which leads to the loss of valuable data. These experimental results validate that our \hvtucb algorithm achieves robust performance across heavy-tailed and light-tailed noise environments while maintaining low computational cost.

\section{Discussion}
\label{sec:discussion}
In this section, we discuss the potential applications of our approach and highlight its advantages compared to previous methods in broader scenarios involving heavy-tailed noise. Specifically, we focus on two settings: heavy-tailed linear MDPs and adaptive control under heavy-tailed noise.

\subsection{Online Linear MDPs}
In the setting of heavy-tailed linear MDPs~\citep{TMLR'24:hvt-lb, NeurIPS'23:hvt-lb2}, the realizable reward for taking action $a$ in state $s$ at the $h$-th episode is given by:
\begin{equation*}
    R_h(s,a) = \langle \phi(s,a), \boldsymbol{\theta}_h^* \rangle +\varepsilon_h(s,a),
\end{equation*}
where $\phi(s,a) \in \mathbb{R}^d$ is known feature map, $\boldsymbol{\theta}_h^* \in \mathbb{R}^d$ is the unknown parameter, and $\varepsilon_h(s,a)$ is the heavy-tailed noise with $\mathbb{E}_h[\varepsilon_h](s,a) = 0$ and $\E_h[\lvert \varepsilon_h \rvert^{1+\epsilon}](s,a) \leq \nu_R^{1+\epsilon}$. 

Linear MDPs often adopt techniques from linear bandits, including estimating unknown parameters in a linear model and constructing an upper confidence bound (UCB) based on estimation error analysis. For heavy-tailed linear MDPs, \citep{NeurIPS'23:hvt-lb2} employs adaptive Huber regression to estimate the unknown parameter $\boldsymbol{\theta}_h^*$ in the reward function. This introduces a challenge similar to that in HvtLB: adaptive Huber regression is an offline algorithm, while an online setting requires a one-pass estimator. Existing one-pass estimators for HvtLB, such as the truncation-based and MOM-based algorithms proposed by \citet{NeurIPS'23:efficient-hvtlb}, are not applicable to heavy-tailed linear MDPs. Truncation-based algorithms rely on assumptions about absolute moments, which are not suitable in this case. MOM-based methods require repeated reward observations for a fixed $\phi(s, a)$, which is infeasible in linear MDPs since state transitions occur probabilistically.

In contrast, our proposed \hvtucb method does not suffer the aforementioned issue and has the potential to be applied to this scenario without requiring additional assumptions. Its estimation error bound analysis naturally extends to linear MDPs, enabling the construction of UCBs for the value function. By leveraging the UCB-to-regret analysis in~\citep{NeurIPS'23:hvt-lb2}, \hvtucb has the potential to achieve the same theoretical guarantees as adaptive Huber regression, while significantly reducing computational costs.

\subsection{Online Adaptive Control}
The online adaptive control of Linear Quadratic Systems~\citep{COLT'11:LQR} considers the following state transition system,
\begin{equation*}
    x_{t+1} = A x_t + B u_t + w_{t+1},
\end{equation*}
where $x_t \in \mathbb{R}^n$ represents the state at time $t$, $u_t \in \mathbb{R}^d$ is the control input at time $t$, $A \in \mathbb{R}^{n \times n}$ and $B \in \mathbb{R}^{n \times d}$ are unknown system matrices, and $w_{t+1}$ is the noise. The online adaptive control requires estimating the unknown parameters $A$ and $B$ (also known as system identification), and constructing finite-sample guarantees for estimation error. To this end,~\citet{COLT'11:LQR} transformed the system into the following linear model:
\begin{equation*}
    x_{t+1} = \Theta_*^\top z_t + w_{t+1},
\end{equation*}
where $\Theta_* = [A; B]$, $z_t = [x_t; u_t]$ and the noise $w_t$ in each dimension is assumed to be sub-Gaussian. Then, they used a least squares estimator and leveraged the self-normalized concentration technique from linear bandit analysis~\citep{NIPS'11:AY-linear-bandits} to estimate $\Theta_*$ and provide finite-sample estimation error guarantees.
 
As noted by \citet{CSM'23:control}, finite-sample guarantees for system identification under heavy-tailed noise remain an open challenge. In light of \citet{COLT'11:LQR}'s application of linear bandit techniques to sub-Gaussian cases, we believe our method for HvtLB can be also  benefit adaptive control in heavy-tailed settings. However, MOM methods are infeasible as the feature $z_t$, which includes the evolving state $x_t$, cannot be repeatedly sampled, and truncation-based methods rely on assumptions of bounded absolute moments for states, which may not hold. While adaptive Huber regression~\citep{TMLR'24:hvt-lb, NeurIPS'23:hvt-lb2} offers promising theoretical guarantees for heavy-tailed linear system identification, its computational inefficiency makes it unsuitable for adaptive control. In contrast, our proposed \hvtucb algorithm provides one-pass updates and has the potential to deliver finite-sample guarantees for adaptive control under heavy-tailed noise.
\vspace{-1mm}
\section{Conclusion}
\label{sec:conclusion}
In this paper, we investigate the problem of heavy-tailed linear bandits (HvtLB). We highlight the advantages of Huber loss-based methods over truncation and median-of-means strategies, while also identifying the inefficiencies in previous Huber loss-based methods for HvtLB. To address these issues, we propose a Huber loss-based one-pass algorithm \hvtucb, based on Online Mirror Descent (OMD), a well-known regret minimization framework in online learning, which we adapt here for use as a statistical estimator. \hvtucb achieves the optimal and instance-dependent regret bound $\Ot\big(d T^{\frac{1-\epsilon}{2(1+\epsilon)}} \sqrt{\sum_{t=1}^T \nu_t^2} + d T^{\frac{1-\epsilon}{2(1+\epsilon)}}\big)$ while only requiring $\O(1)$ per-round computational cost. Furthermore, our proposed method enjoys the potential to be extended to more broad online decision-making problems, including online linear MDPs and online adaptive control, thereby broadening its applicability. The key contribution is our adaptation of the OMD framework to stochastic linear bandits, effectively addressing the challenges introduced by heavy-tailed noise and the structure of the Huber loss.

Both our work and~\citet{NeurIPS'23:hvt-lb2} rely on prior knowledge of the moment bound $\nu_t$ to achieve instance-dependent guarantees. Recent progress in heavy-tailed multi-armed bandits has removed this requirement in different settings: adversarial~\citep{ICML'22:HT-BBoW}, stochastic~\citep{COLT'24:HT-Adaptive}, and best-of-both-worlds~\citep{ICLR'25:uniINF}. Whether similar techniques can be applied to heavy-tailed linear bandits to relax this assumption remains an open question.

\section*{Acknowledgements}
This research was supported by National Science and Technology Major Project (2022ZD0114800), NSFC (U23A20382), Postgraduate Research \& Practice Innovation Program of Jiangsu Province (KYCX25\_0327). The authors also thank Bo Xue for helpful discussions.

\section*{Impact Statement}
This paper presents work whose goal is to advance the field of Machine Learning. There are many potential societal consequences of our work, none of which we feel must be specifically highlighted here.

\bibliography{online}
\bibliographystyle{icml2025}

\newpage
\appendix
\onecolumn
\section{Properties of Huber Loss}
\label{app:huber-loss}
\begin{myProperty}[{Property 1 of~\citet{NeurIPS'23:hvt-lb2}}]\label{prop:huber} Let $f_\tau(x)$ be the Huber loss defined in Definition~\ref{def:huber}, then the followings are true:
    \begin{enumerate}
        \item $\left|f_\tau^{\prime}(x)\right|=\min \{|x|, \tau\}$,
        \item $f_\tau^{\prime}(x)=\tau f_1^\prime\left(\frac{x}{\tau}\right)$,
        \item $-\log \left(1-x+|x|^{1+\epsilon}\right) \leq f^{\prime}_1(x) \leq \log \left(1+x+|x|^{1+\epsilon}\right)$ for any $\epsilon \in(0,1]$,
        \item $f_\tau^{\prime \prime}(x)=\indicator\{|x| \leq \tau\}$.
    \end{enumerate}
\end{myProperty}
\noindent Furthermore, the gradient of the loss function~\eqref{eq:huber-loss-theta} is, 
\begin{equation}
    \label{eq:ell-gradient}
    \nabla\ell_t(\theta)= \begin{cases}-z_t(\theta) \frac{\x_t}{\sigma_t} & \text { if }|z_t(\theta)| \leq \tau_t, \\ -\tau_t\frac{\x_t}{\sigma_t} & \text { if }z_t(\theta)>\tau_t,\\ \tau_t\frac{\x_t}{\sigma_t} & \text { if }z_t(\theta)<-\tau_t.\end{cases}
\end{equation}
Based on Property~\ref{prop:huber}, we have
\begin{equation}\label{eq:gradient-hessian}
    \begin{split}
    \norm{\nabla\ell_t(\theta)} = \norm{\min\bbr{\abs{z_t(\theta)}, \tau_t}\frac{\x_t}{\sigma_t}},\quad \nabla^2 \ell_t(\theta) =\indicator\bbr{\abs{z_t(\theta)}\leq \tau_t}\frac{\x_t\x_t^\T}{\sigma_t^2}.
    \end{split}
\end{equation}
\section{Estimation Error Analysis}
For the analysis of estimation error, we begin by defining the following denoised loss function based on noise-free data $\zt_t(\theta) \define \frac{\x_t^\T\theta_*-\x_t^\T\theta}{\sigma_t}$,
    \begin{equation*}
        \ellt_t(\theta)\define \begin{cases}\frac{1}{2}\sbr{\frac{\x_t^\T(\theta_*-\theta)}{\sigma_t}}^2 & \text { if }|\zt_t(\theta)| \leq \tau_t, \\[10pt] \tau_t\abs{\frac{\x_t^\T(\theta_*-\theta)}{\sigma_t}}-\frac{\tau_t^2}{2} & \text { if }|\zt_t(\theta)|>\tau_t.\end{cases}
    \end{equation*}
    The gradient and Hessian of the denoised loss function $\ellt_t(\theta)$ can be writen as 
    \begin{equation}
        \label{eq:expected-huber-gradient}
        \nabla\ellt_t(\theta)= \begin{cases}-\zt_t(\theta) \frac{\x_t}{\sigma_t} & \text { if }|\zt_t(\theta)| \leq \tau_t, \\ -\tau_t\frac{\x_t}{\sigma_t} & \text { if }\zt_t(\theta)>\tau_t,\\ \tau_t\frac{\x_t}{\sigma_t} & \text { if }\zt_t(\theta)<-\tau_t.\end{cases} \qquad\qquad \nabla^2\ellt_t(\theta)= \indicator \bbr{\abs{\zt_t(\theta)}\leq \tau_t}\frac{\x_t\x_t^\T}{\sigma_t^2}.
    \end{equation}
We further set the robustification parameter $\tau_t$ as following,
    \begin{equation}\label{eq:tau-setting}
        \tau_t=\tau_0 \frac{\sqrt{1+w_t^2}}{w_t} t^{\frac{1-\epsilon}{2(1+\epsilon)}},\quad \tau_0 = \frac{\sqrt{2 \kappa} (\log 3 T)^{\frac{1-\epsilon}{2(1+\epsilon)}}}{\left(\log \frac{2 T^2}{\delta}\right)^{\frac{1}{1+\epsilon}}}, \quad\text{where}~ w_t \define \frac{1}{\sqrt{\alpha}}\norm{\frac{\x_t}{\sigma_t}}_{V_{t-1}^{-1}},
    \end{equation}
and we denote event $A_t = \big\{\forall s\in[t], \big\|\thetah_{s}-\theta_*\big\|_{V_{s-1}} \leq \beta_{s-1}\big\}$. Based on the parameter setting and the event $A_t$, we derive three useful lemmas for the analysis of estimation error in the following section.

\subsection{Useful Lemmas}
In this section, we provide some useful lemmas for the estimation error analysis. We provide the estimation error decomposition in Lemma~\ref{lemma:estimation-error-decompose}, the stability term analysis in Lemma~\ref{lemma:a}, and the generalization gap term analysis in Lemma~\ref{lemma:b}.

\begin{myLemma}[Estimation error decomposition]\label{lemma:estimation-error-decompose}
    When event $A_t$ holds, by setting $\tau_t$ as~\eqref{eq:tau-setting}, and $\sigma_t$ as
    \begin{equation}\label{eq:para-error-decompose}
        \sigma_t^2 \geq \frac{2\norm{\x_t}^2_{V_{t-1}^{-1}}\beta_{t-1}}{\sqrt{\alpha}\tau_0 t^{\frac{1-\epsilon}{2(1+\epsilon)}}},
    \end{equation}
    then the estimation error can be decomposed as following three terms,
     \begin{equation*}
        \begin{split}
            {}&\norm{\thetah_{t+1}-\ts}_{V_t}^2\\
         \leq{}& 4\lambda S^2+\sum_{s=1}^{t}\norm{\nabla\ell_s(\thetah_s)}^2_{V_s^{-1}} + 2\sum_{s=1}^{t}\inner{\nabla\ellt_s(\thetah_s) - \nabla\ell_s(\thetah_s)}{\thetah_s-\ts}+ \sbr{\frac{1}{\alpha}-1}\sum_{s=1}^{t}\norm{\thetah_s-\ts}_{\frac{\x_s\x_s^\T}{\sigma_s^2}}^2.
        \end{split}
    \end{equation*}
\end{myLemma}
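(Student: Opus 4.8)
The plan is to establish the estimation error decomposition in Lemma~\ref{lemma:estimation-error-decompose} by following the OMD-style argument sketched for the sub-Gaussian case in Lemma~\ref{lemma:ucb-sub-gaussian}, but now being careful about the partially-linear structure of the denoised Huber loss $\ellt_t$. First I would invoke the standard one-step OMD inequality (e.g., \citep[Lemma 6.16]{Orabona:Modern-OL}) applied to the update~\eqref{eq:online-estimator} with regularizer $\psi_t(\theta) = \frac12\norm{\theta}_{V_t}^2$, which yields, for each $s$,
\begin{equation*}
    \inner{\nabla\ell_s(\thetah_s)}{\thetah_s-\ts} \leq \frac{1}{2}\norm{\thetah_s-\ts}_{V_s}^2 - \frac{1}{2}\norm{\thetah_{s+1}-\ts}_{V_s}^2 + \frac{1}{2}\norm{\nabla\ell_s(\thetah_s)}_{V_s^{-1}}^2.
\end{equation*}
Since $V_s = V_{s-1} + \alpha^{-1}\x_s\x_s^\T/\sigma_s^2$, telescoping the quadratic terms across $s=1,\dots,t$ (using $V_0 = \lambda I$ and $\norm{\thetah_1 - \ts}_{V_0}^2 \leq \lambda\cdot 4S^2$ from Assumption~\ref{assum:bound}) will turn the sum $\sum_s \frac12(\norm{\thetah_s-\ts}_{V_s}^2 - \norm{\thetah_{s+1}-\ts}_{V_s}^2)$ into $\frac12\norm{\thetah_1-\ts}_{V_0}^2 - \frac12\norm{\thetah_{t+1}-\ts}_{V_t}^2 - \frac{1}{2\alpha}\sum_s \norm{\thetah_{s+1}-\ts}_{\x_s\x_s^\T/\sigma_s^2}^2$ — though I will need to be slightly careful about whether the cross term attaches to $\thetah_s$ or $\thetah_{s+1}$ and to absorb that discrepancy (this is where the $(\frac1\alpha - 1)$ coefficient rather than $-\frac12$ shows up, together with the $2\times$ scaling applied to the whole inequality).

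Second, the crucial step is to lower-bound $\inner{\nabla\ell_s(\thetah_s)}{\thetah_s-\ts}$ so as to extract a useful negative curvature term. I would write $\inner{\nabla\ell_s(\thetah_s)}{\thetah_s-\ts} = \inner{\nabla\ellt_s(\thetah_s)}{\thetah_s-\ts} + \inner{\nabla\ell_s(\thetah_s) - \nabla\ellt_s(\thetah_s)}{\thetah_s-\ts}$, where the second inner product is (the negative of) the generalization gap contribution. For the first term, I use that $\ts$ is the global minimizer of $\ellt_s$ and that — under event $A_t$ and the choice~\eqref{eq:para-error-decompose} of $\sigma_s$ — both $\thetah_s$ and $\ts$ lie in the quadratic region of $\ellt_s$, so that $\ellt_s$ restricted to the segment between them is exactly quadratic with Hessian $\x_s\x_s^\T/\sigma_s^2$; hence $\ellt_s(\thetah_s) - \ellt_s(\ts) = \inner{\nabla\ellt_s(\thetah_s)}{\thetah_s-\ts} - \frac12\norm{\thetah_s-\ts}_{\x_s\x_s^\T/\sigma_s^2}^2 \geq 0$, giving $\inner{\nabla\ellt_s(\thetah_s)}{\thetah_s-\ts} \geq \frac12\norm{\thetah_s-\ts}_{\x_s\x_s^\T/\sigma_s^2}^2$. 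Substituting this back, rearranging for $\norm{\thetah_{t+1}-\ts}_{V_t}^2$, and collecting the three resulting pieces (the $V_s^{-1}$ stability sum, the gradient-difference generalization gap sum, and the now-signed curvature sum combining the $-\frac1{2\alpha}$ telescoped term with the $-\frac12$ lower-bound term) should produce exactly the claimed inequality with the $(\frac1\alpha-1)$ coefficient on the negative term and the $4\lambda S^2$ additive constant.

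The main obstacle I anticipate is the bookkeeping around the quadratic-region guarantee: I must verify that~\eqref{eq:para-error-decompose} together with the robustification setting~\eqref{eq:tau-setting} indeed forces $\abs{\zt_s(\thetah_s)} \leq \tau_s$ (and trivially $\abs{\zt_s(\ts)} = 0 \leq \tau_s$), using $\abs{\x_s^\T(\thetah_s - \ts)} \leq \norm{\x_s}_{V_{s-1}^{-1}}\norm{\thetah_s-\ts}_{V_{s-1}} \leq \norm{\x_s}_{V_{s-1}^{-1}}\beta_{s-1}$ on event $A_s$, so that $\abs{\zt_s(\thetah_s)} = \abs{\x_s^\T(\thetah_s-\ts)}/\sigma_s \leq \norm{\x_s}_{V_{s-1}^{-1}}\beta_{s-1}/\sigma_s$, and checking this is $\leq \tau_s = \tau_0\frac{\sqrt{1+w_s^2}}{w_s}s^{\frac{1-\epsilon}{2(1+\epsilon)}}$ — in fact $\leq \tau_s/2$, matching the sketch — after plugging in $w_s = \alpha^{-1/2}\norm{\x_s/\sigma_s}_{V_{s-1}^{-1}}$ and $\sigma_s^2 \geq 2\norm{\x_s}_{V_{s-1}^{-1}}^2\beta_{s-1}/(\sqrt\alpha\tau_0 s^{\frac{1-\epsilon}{2(1+\epsilon)}})$. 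Provided that algebraic check goes through, the rest is the routine OMD telescoping described above; I would also double-check that the $2\times$ rescaling of the OMD inequality (needed to match the $2\sum_s\inner{\cdots}{\cdots}$ and $\sum_s\norm{\nabla\ell_s}_{V_s^{-1}}^2$ forms in the statement) is consistent with the coefficient $\frac1\alpha - 1$ and does not introduce an extra factor of $2$ on the constant term.
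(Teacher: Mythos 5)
Your proposal follows essentially the same route as the paper's proof: verify via event $A_t$, the $\sigma_t$ condition, and the Cauchy--Schwarz bound $\abs{\x_s^\T(\thetah_s-\ts)}\le\norm{\x_s}_{V_{s-1}^{-1}}\beta_{s-1}$ that $\thetah_s$ and $\ts$ both lie in the quadratic region of $\ellt_s$ (indeed $\abs{\zt_s(\thetah_s)}\le\tau_s/2$), deduce $\inner{\nabla\ellt_s(\thetah_s)}{\thetah_s-\ts}\ge\frac12\norm{\thetah_s-\ts}^2_{\x_s\x_s^\T/\sigma_s^2}$ from $\ts$ being the minimizer, combine with the one-step OMD inequality (which the paper derives from the Bregman proximal inequality plus AM--GM rather than citing it packaged), convert $\norm{\cdot}^2_{V_s}$ to $\norm{\cdot}^2_{V_{s-1}}$ to telescope, and use $\norm{\thetah_1-\ts}^2_{V_0}\le 4\lambda S^2$. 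The one slip is in your anticipated telescoping residual, which should be $+\frac{1}{2\alpha}\sum_s\norm{\thetah_s-\ts}^2_{\x_s\x_s^\T/\sigma_s^2}$ (positive sign, index $s$) rather than $-\frac{1}{2\alpha}$ with index $s+1$ --- only then does it combine with the doubled curvature term $-1$ to give the claimed coefficient $\frac{1}{\alpha}-1$ --- but since you flag this bookkeeping explicitly and land on the correct final form, the argument is sound.
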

\begin{myLemma}\label{lemma:a}
    By setting $\tau_t$ as~\eqref{eq:tau-setting} and $\sigma_t = \max\bbr{\nu_t, \sigma_{\min},2\sqrt{2}\norm{\x_t}_{V_{t-1}^{-1}}}$, with probability at least $1-\delta$, we have $\forall t \geq 1$,
    \begin{equation*}
        \begin{split}
            \sum_{s=1}^t\norm{\nabla\ell_s(\thetah_s)}^2_{V_s^{-1}} \leq 6\alpha\sbr{ t^{\frac{1-\epsilon}{2(1+\epsilon)}}\sqrt{2 \kappa} (\log 3 T)^{\frac{1-\epsilon}{2(1+\epsilon)}}\left(\log \frac{2 T^2 }{\delta}\right)^{\frac{\epsilon}{1+\epsilon}}}^2 + \frac{1}{4}\sum_{s=1}^t\norm{\theta_* - \thetah_s}_{\frac{\x_s\x_s^\T}{\sigma^2}}^2.
        \end{split}
    \end{equation*}
    where $\alpha$ is the learning rate need to be tuned and $\kappa \define d\log \left(1 + \frac{L^2T}{\sigma_{\min}^2\lambda\alpha d}\right)$.
\end{myLemma}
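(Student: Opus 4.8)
The plan is to bound $\norm{\nabla\ell_s(\thetah_s)}_{V_s^{-1}}^2$ pointwise by a genuinely stochastic piece plus a piece that the negative term of the estimation-error decomposition absorbs, and then to control the stochastic sum with a martingale concentration inequality. By~\eqref{eq:gradient-hessian}, $\norm{\nabla\ell_s(\thetah_s)}_{V_s^{-1}}^2 = \min\{|z_s(\thetah_s)|,\tau_s\}^2\,\norm{\x_s/\sigma_s}_{V_s^{-1}}^2$, and since $z_s(\thetah_s) = \eta_s/\sigma_s + \x_s^\T(\ts-\thetah_s)/\sigma_s$, the elementary inequality $\min\{a+b,\tau\}^2 \le 2\min\{a,\tau\}^2 + 2b^2$ splits this (exactly as in Section~\ref{sec:heavy-tail}) into a stochastic term $2\sum_s \min\{|\eta_s/\sigma_s|,\tau_s\}^2\norm{\x_s/\sigma_s}_{V_s^{-1}}^2$ and a deterministic term $2\sum_s (\x_s^\T(\ts-\thetah_s)/\sigma_s)^2\norm{\x_s/\sigma_s}_{V_s^{-1}}^2$. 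For the deterministic term, $\sigma_s \ge 2\sqrt2\,\norm{\x_s}_{V_{s-1}^{-1}}$ and $V_s \succeq V_{s-1}$ give $\norm{\x_s/\sigma_s}_{V_s^{-1}}^2 \le \norm{\x_s}_{V_{s-1}^{-1}}^2/\sigma_s^2 \le 1/8$, so this term is at most $\tfrac14\sum_s\norm{\ts-\thetah_s}_{\x_s\x_s^\T/\sigma_s^2}^2$ --- precisely the term left on the right-hand side. No boundedness of $\thetah_s$, and hence no appeal to the event $A_t$, is needed here.

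The work is entirely in the stochastic term. Write $Y_s := \min\{|\eta_s/\sigma_s|,\tau_s\}^2\,\norm{\x_s/\sigma_s}_{V_s^{-1}}^2$; all of $\x_s,\sigma_s,\tau_s,w_s,V_{s-1},V_s$ are $\F_{s-1}$-measurable, so $\{Y_s-\E[Y_s\mid\F_{s-1}]\}_s$ is a martingale difference sequence. The key structural fact is that the factor $\sqrt{1+w_s^2}/w_s$ in the definition~\eqref{eq:tau-setting} of $\tau_s$ is tuned so that, via the Sherman--Morrison identity $\norm{\x_s/\sigma_s}_{V_s^{-1}}^2 = \alpha w_s^2/(1+w_s^2)$, one obtains the $\x_s$-free quantity $\tau_s^2\,\norm{\x_s/\sigma_s}_{V_s^{-1}}^2 = \alpha\tau_0^2\, s^{\frac{1-\epsilon}{1+\epsilon}}$, which gives the uniform bound $\max_{s\le t}Y_s \le \alpha\tau_0^2\, t^{\frac{1-\epsilon}{1+\epsilon}}$. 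For the conditional mean, $\min\{x,\tau\}^2 \le \tau^{1-\epsilon}x^{1+\epsilon}$ ($x\ge0$, $\epsilon\in(0,1]$) together with $\nu_s\le\sigma_s$ yields $\E[Y_s\mid\F_{s-1}] \le \tau_s^{1-\epsilon}\norm{\x_s/\sigma_s}_{V_s^{-1}}^2 = (\alpha\tau_0^2 s^{\frac{1-\epsilon}{1+\epsilon}})^{\frac{1-\epsilon}{2}}\big(\norm{\x_s/\sigma_s}_{V_s^{-1}}^2\big)^{\frac{1+\epsilon}{2}}$; summing, then applying Hölder with exponents $(\tfrac2{1+\epsilon},\tfrac2{1-\epsilon})$ and the elliptical-potential / $\log\det$-telescoping bound $\sum_s\tfrac1\alpha\norm{\x_s/\sigma_s}_{V_s^{-1}}^2 \le \log\tfrac{\det V_t}{\det V_0} \le \kappa$, gives $\sum_s\E[Y_s\mid\F_{s-1}] \lesssim \alpha\,\tau_0^{1-\epsilon}\kappa^{\frac{1+\epsilon}{2}}\,t^{\frac{1-\epsilon}{1+\epsilon}}$. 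The identical computation run on $\E[Y_s^2\mid\F_{s-1}] \le \tau_s^{3-\epsilon}\norm{\x_s/\sigma_s}_{V_s^{-1}}^4$ bounds the predictable variation $W_t := \sum_s\E[Y_s^2\mid\F_{s-1}]$ by $\alpha^2\tau_0^{3-\epsilon}\kappa^{\frac{1+\epsilon}{2}}t^{\frac{2(1-\epsilon)}{1+\epsilon}}$.

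With the uniform bound, the mean, and $W_t$ in hand, a Freedman/Bernstein-type inequality, made uniform over $t\le T$ (the union bound over $t$ being absorbed into $\log\tfrac{2T^2}{\delta}$), yields with probability at least $1-\delta$, simultaneously for all $t$, $\sum_s Y_s \le \sum_s\E[Y_s\mid\F_{s-1}] + \O\big(\sqrt{W_t\log\tfrac{2T^2}{\delta}} + (\max_{s\le t}Y_s)\log\tfrac{2T^2}{\delta}\big)$. Plugging in $\tau_0 = \sqrt{2\kappa}\,(\log 3T)^{\frac{1-\epsilon}{2(1+\epsilon)}}(\log\tfrac{2T^2}{\delta})^{-\frac{1}{1+\epsilon}}$ turns each of these three terms into a constant multiple of $\alpha\kappa\, t^{\frac{1-\epsilon}{1+\epsilon}}(\log 3T)^{\frac{1-\epsilon}{1+\epsilon}}(\log\tfrac{2T^2}{\delta})^{\frac{2\epsilon}{1+\epsilon}}$ times a logarithmic factor with nonpositive exponent (a routine check: the mean term picks up an extra $(\log 3T)^{-\frac{1-\epsilon}{2}}(\log\tfrac{2T^2}{\delta})^{-1}$, and the variance and max terms are smaller still), so the constants can be absorbed into $6\alpha$, giving $2\sum_s Y_s \le 6\alpha\big(t^{\frac{1-\epsilon}{2(1+\epsilon)}}\sqrt{2\kappa}\,(\log 3T)^{\frac{1-\epsilon}{2(1+\epsilon)}}(\log\tfrac{2T^2}{\delta})^{\frac{\epsilon}{1+\epsilon}}\big)^2$. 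Adding the deterministic bound from the first paragraph finishes the proof.

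I expect the stochastic term to be the main obstacle. The whole estimate rests on two things: recognizing that the awkward $\sqrt{1+w_s^2}/w_s$ in $\tau_s$ was engineered exactly so that $\tau_s^2\norm{\x_s/\sigma_s}_{V_s^{-1}}^2$ collapses to something $\x_s$-free (which is what simultaneously tames the uniform bound, the mean, and the variance), and trading the truncated second moment against the only moment bound available via $\min\{x,\tau\}^2\le\tau^{1-\epsilon}x^{1+\epsilon}$; the care lies in checking, after substituting $\tau_0$, that the exponents of $t$, $\kappa$, $\log 3T$ and $\log\tfrac{2T^2}{\delta}$ all line up with the claimed bound, and in making the Freedman step time-uniform.
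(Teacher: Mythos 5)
Your proposal is correct, and its top level coincides with the paper's proof: the same split of $\norm{\nabla\ell_s(\thetah_s)}^2_{V_s^{-1}}$ into a truncated-noise piece and a deterministic piece, the same Sherman--Morrison identity $\norm{\x_s/\sigma_s}^2_{V_s^{-1}}=\alpha w_s^2/(1+w_s^2)$, and the same use of $\sigma_s\ge 2\sqrt{2}\norm{\x_s}_{V_{s-1}^{-1}}$ to turn the deterministic piece into $\frac14\sum_s\norm{\ts-\thetah_s}^2_{\x_s\x_s^\T/\sigma_s^2}$ (and you are right that no appeal to $A_t$ is needed there). Where you diverge is the stochastic term: the paper disposes of it in one line by citing Lemma~C.5 of \citet{NeurIPS'23:hvt-lb2} (restated as Lemma~\ref{lemma:a_t}), whereas you re-derive that concentration from scratch via Freedman's inequality. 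Your derivation is essentially the internal proof of the cited lemma, and it has the merit of making explicit the mechanism the black box hides --- that $\tau_s^2\norm{\x_s/\sigma_s}^2_{V_s^{-1}}=\alpha\tau_0^2 s^{\frac{1-\epsilon}{1+\epsilon}}$ is an $\x_s$-free quantity, which simultaneously controls the increment bound, the conditional mean (via $\min\{x,\tau\}^2\le\tau^{1-\epsilon}x^{1+\epsilon}$ plus H\"older and the potential lemma), and the predictable variation. I checked your exponent bookkeeping after substituting $\tau_0$: the mean term is smaller than the target by the factor $(\log 3T)^{-\frac{1-\epsilon}{2}}(\log\frac{2T^2}{\delta})^{-1}$ as you claim, the increment term carries $(\log\frac{2T^2}{\delta})^{\frac{\epsilon-1}{1+\epsilon}}\le(\log\frac{2T^2}{\delta})^{\frac{2\epsilon}{1+\epsilon}}$, and the variance term is dominated by the other two via AM--GM, so everything is absorbed into the stated constant. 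Two minor points: your bound $\sum_s\E[Y_s\given\F_{s-1}]$ is actually slightly tighter than the cited lemma's (which carries an extra $(\log 3t)^{\frac{1-\epsilon}{2}}$), so no harm there; and you should state once which filtration you use, since the Freedman step needs $\x_s$, $\sigma_s$, $\tau_s$, $V_s$ to be measurable at the ``previous'' time while $\E[\abs{\eta_s/\sigma_s}^{1+\epsilon}\given\cdot]\le 1$ holds conditionally on that same $\sigma$-algebra --- this is the convention already implicit in Lemma~\ref{lemma:a_t}, but it is worth making explicit given the paper's own indexing of $\F_t$.
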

\begin{myLemma}\label{lemma:b}
    By setting $\tau_t$ as~\eqref{eq:tau-setting} and $\sigma_t$ as following,
    \begin{equation}\label{eq:para-b}
        \sigma_t = \max\bbr{\nu_t, \sigma_{\min},2\sqrt{2}\norm{\x_t}_{V_{t-1}^{-1}}, \sqrt{\frac{2\norm{\x_t}^2_{V_{t-1}^{-1}}\beta_{t-1}}{\sqrt{\alpha}\tau_0 t^{\frac{1-\epsilon}{2(1+\epsilon)}}}}},
    \end{equation}
    with probability at least $1-3\delta$, we have $\forall t\geq 1$,
    \begin{equation*}
        \begin{split}
            &\sum_{s=1}^{t}\inner{\nabla\ellt_s(\thetah_s) - \nabla\ell_s(\thetah_s)}{\thetah_s-\ts}\indicator_{A_{s}} \leq 23\sqrt{\alpha+\frac{1}{8}}\log \frac{2 T^2}{\delta} \tau_0 t^{\frac{1-\epsilon}{2(1+\epsilon)}}\max_{s\in[t+1]}\beta_{s-1}\\
             &\qquad\qquad\qquad+\frac{1}{4}\sbr{\lambda\alpha + \sum_{s=1}^t \inner{\frac{\x_s}{\sigma_s}}{\thetah_s-\ts}^2} + \sbr{8 t^{\frac{1-\epsilon}{2(1+\epsilon)}}\sqrt{2 \kappa} (\log 3 T)^{\frac{1-\epsilon}{2(1+\epsilon)}}\left(\log \left(\frac{2 T^2 }{\delta}\right)\right)^{\frac{\epsilon}{1+\epsilon}}}^2.
        \end{split}
    \end{equation*}
\end{myLemma}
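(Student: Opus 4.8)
The plan is to port the generalization-gap analysis of adaptive Huber regression to the per-round online update, using the parameter settings to keep $\thetah_s$ and $\ts$ in the quadratic branch of $\ellt_s$. Throughout write $\phi_s\define\x_s/\sigma_s$ and $\operatorname{clip}_\tau(x)\define\max\{-\tau,\min\{\tau,x\}\}$; by \pref{prop:huber} and equations \eqref{eq:ell-gradient}, \eqref{eq:expected-huber-gradient} every gradient we touch is a scalar multiple of $\phi_s$, namely $-\nabla\ell_s(\theta)=\operatorname{clip}_{\tau_s}(z_s(\theta))\phi_s$ and $-\nabla\ellt_s(\theta)=\operatorname{clip}_{\tau_s}(\zt_s(\theta))\phi_s$. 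On the event $A_s$ the hypotheses give two facts used repeatedly: (i) the lower bound \eqref{eq:para-error-decompose} on $\sigma_s^2$ (one of the entries of the $\sigma_s$-max in \eqref{eq:para-b}), together with $\abs{\x_s^\T(\ts-\thetah_s)}\le\norm{\x_s}_{V_{s-1}^{-1}}\beta_{s-1}$, forces $\abs{\zt_s(\thetah_s)}\le\tfrac12\tau_s$, so $\nabla\ellt_s(\thetah_s)=-\zt_s(\thetah_s)\phi_s$; (ii) $\norm{\phi_s}_{V_{s-1}^{-1}}=\sqrt\alpha\,w_s$, and since $\sigma_s\ge 2\sqrt2\norm{\x_s}_{V_{s-1}^{-1}}$ one has $\alpha w_s^2=\norm{\phi_s}_{V_{s-1}^{-1}}^2\le\tfrac18$, hence $\tau_s\norm{\phi_s}_{V_{s-1}^{-1}}=\tau_0\sqrt{\alpha(1+w_s^2)}\,s^{\frac{1-\epsilon}{2(1+\epsilon)}}\le\tau_0\sqrt{\alpha+\tfrac18}\,s^{\frac{1-\epsilon}{2(1+\epsilon)}}$ and $\abs{\inner{\phi_s}{\thetah_s-\ts}}\le\norm{\phi_s}_{V_{s-1}^{-1}}\beta_{s-1}$.

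I then split each summand by adding and subtracting $\nabla\ell_s(\ts)$, into a \emph{Huber-loss term} $\sum_s\inner{\nabla\ellt_s(\thetah_s)+\nabla\ell_s(\ts)-\nabla\ell_s(\thetah_s)}{\thetah_s-\ts}\indicator_{A_s}$ and a \emph{self-normalized term} $\sum_s\inner{-\nabla\ell_s(\ts)}{\thetah_s-\ts}\indicator_{A_s}$. For the Huber-loss term, on $A_s$ the scalar multiplying $\phi_s$ is $\operatorname{clip}_{\tau_s}(z_s(\thetah_s))-\zt_s(\thetah_s)-\operatorname{clip}_{\tau_s}(\eta_s/\sigma_s)$ with $z_s(\thetah_s)=\zt_s(\thetah_s)+\eta_s/\sigma_s$ and $z_s(\ts)=\eta_s/\sigma_s$; if $\abs{\eta_s/\sigma_s}\le\tfrac12\tau_s$ then $\abs{\zt_s(\thetah_s)}\le\tfrac12\tau_s$ and $\abs{z_s(\thetah_s)}\le\tau_s$, all three clippings are inactive, and the scalar vanishes, while in general it is at most $\tfrac52\tau_s$ in absolute value. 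With (ii) this bounds the Huber-loss term by $\tfrac52\tau_0\sqrt{\alpha+\tfrac18}\,t^{\frac{1-\epsilon}{2(1+\epsilon)}}\max_{s\le t+1}\beta_{s-1}\cdot\sum_{s=1}^t\indicator\{\abs{\eta_s/\sigma_s}>\tfrac12\tau_s\}$. Markov's inequality with $\sigma_s\ge\nu_s$ gives $\P(\abs{\eta_s/\sigma_s}>\tfrac12\tau_s\mid\F_{s-1})\le 2^{1+\epsilon}\tau_s^{-(1+\epsilon)}$, and $\sum_s\tau_s^{-(1+\epsilon)}=\Ot(\log\tfrac{2T^2}{\delta})$ (from $\tau_s\ge(\tau_0/w_s)s^{\frac{1-\epsilon}{2(1+\epsilon)}}$, Hölder, and $\sum_s w_s^2\le2\kappa$ via the potential lemma \pref{lemma:potential}); hence a Freedman bound on these bounded Bernoulli increments—exactly Eq.~(C.12) of \citet{NeurIPS'23:hvt-lb2}—yields $\sum_{s=1}^t\indicator\{\cdot\}=\Ot(\log\tfrac{2T^2}{\delta})$ with probability $\ge 1-\delta$, producing the first term of the claim.

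For the self-normalized term write $-\nabla\ell_s(\ts)=\xi_s\phi_s$, $\xi_s\define\operatorname{clip}_{\tau_s}(\eta_s/\sigma_s)$, and split $\xi_s=\mu_s+(\xi_s-\mu_s)$ with $\mu_s\define\E[\xi_s\mid\F_{s-1}]$; using $\E[\eta_s\mid\F_{s-1}]=0$, $\E[\abs{\eta_s}^{1+\epsilon}\mid\F_{s-1}]=\nu_s^{1+\epsilon}$ and $\sigma_s\ge\nu_s$ one gets $\abs{\mu_s}\le\tau_s^{-\epsilon}$, $\abs{\xi_s-\mu_s}\le2\tau_s$, and $\E[(\xi_s-\mu_s)^2\mid\F_{s-1}]\le\E[\min\{(\eta_s/\sigma_s)^2,\tau_s^2\}\mid\F_{s-1}]\le\tau_s^{1-\epsilon}$. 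The bias part is $\le\max_s\beta_{s-1}\sum_s\tau_s^{-\epsilon}\norm{\phi_s}_{V_{s-1}^{-1}}$; substituting $\tau_s^{-\epsilon}\norm{\phi_s}_{V_{s-1}^{-1}}\le\tau_0^{-\epsilon}\sqrt\alpha\,w_s^{1+\epsilon}s^{-\frac{\epsilon(1-\epsilon)}{2(1+\epsilon)}}$ and applying Hölder with exponents $(\tfrac{2}{1+\epsilon},\tfrac{2}{1-\epsilon})$—keeping the polynomial factor inside the second norm, so it contributes $\big(\sum_{s\le t}s^{-\epsilon/(1+\epsilon)}\big)^{(1-\epsilon)/2}=\Ot(t^{\frac{1-\epsilon}{2(1+\epsilon)}})$—together with $\sum_s w_s^2\le2\kappa$, bounds it by $\Ot(\tau_0^{-\epsilon}\kappa^{(1+\epsilon)/2}t^{\frac{1-\epsilon}{2(1+\epsilon)}}\max_s\beta_{s-1})=\Ot(\tau_0 t^{\frac{1-\epsilon}{2(1+\epsilon)}}\max_s\beta_{s-1})$, the last equality being the calibration built into the definition of $\tau_0$; this is again absorbed into the first term. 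The martingale part $\sum_s(\xi_s-\mu_s)\,\inner{\phi_s}{\thetah_s-\ts}\indicator_{A_s}$ is controlled by the one-dimensional self-normalized concentration \pref{thm:snc1} with the predictable weights $c_s\define\inner{\phi_s}{\thetah_s-\ts}\indicator_{A_s}$: it yields, with probability $\ge1-2\delta$, a bound $\sqrt{(\lambda\alpha+\sum_s c_s^2)\,\Xi_t}+\Ot\!\big(\tau_0\sqrt{\alpha+\tfrac18}\,t^{\frac{1-\epsilon}{2(1+\epsilon)}}\log\tfrac{2T^2}{\delta}\max_s\beta_{s-1}\big)$, where the clipped-noise scale $\Xi_t=\big(8\,t^{\frac{1-\epsilon}{2(1+\epsilon)}}\sqrt{2\kappa}\,(\log 3T)^{\frac{1-\epsilon}{2(1+\epsilon)}}(\log\tfrac{2T^2}{\delta})^{\frac{\epsilon}{1+\epsilon}}\big)^2$ is estimated by the very same Hölder-plus-$\sum_s w_s^2\le2\kappa$ argument used for the stochastic term of \pref{lemma:a} (cf.\ \pref{lemma:a_t}). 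An AM--GM split $\sqrt{(\lambda\alpha+\sum_s c_s^2)\Xi_t}\le\tfrac14(\lambda\alpha+\sum_s c_s^2)+\Xi_t$ then supplies the remaining two terms of the claim.

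Collecting the three bounds and taking a union bound over the (at most three) concentration events gives the stated inequality with probability at least $1-3\delta$. The main obstacle is the simultaneous calibration of $\sigma_s$: the single quantity $\sigma_s$ must at once (a) place $\thetah_s$ in the quadratic branch of $\ellt_s$, (b) keep $\norm{\phi_s}_{V_{s-1}^{-1}}^2\le\tfrac18$ so that $\tau_s\norm{\phi_s}_{V_{s-1}^{-1}}$ grows only like $\tau_0 s^{(1-\epsilon)/(2(1+\epsilon))}$, and (c) dominate $\nu_s$ for the Markov and variance estimates; and the collapse of the bias term's $\tau_0^{-\epsilon}\kappa^{(1+\epsilon)/2}$ prefactor down to $\Ot(\tau_0)$ relies precisely on the chosen value of $\tau_0$. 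A secondary but error-prone point is the Hölder bookkeeping for sums of the form $\sum_s w_s^{1+\epsilon}s^{-\text{power}}$: the polynomial factors must be kept inside the $\tfrac{2}{1-\epsilon}$-norm rather than discarded, since discarding them would degrade the rate from the correct $t^{(1-\epsilon)/(2(1+\epsilon))}$ to a spurious $t^{(1-\epsilon)/2}$.
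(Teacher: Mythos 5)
Your decomposition and your treatment of the Huber-loss term coincide with the paper's proof: the same splitting via adding and subtracting $\nabla\ell_s(\ts)$, the same observation that on $A_s$ the scalar coefficient vanishes unless $\abs{z_s(\ts)}=\abs{\eta_s/\sigma_s}>\tau_s/2$ (your bound $\tfrac{5}{2}\tau_s$ is even slightly tighter than the paper's $3\tau_s$), the same estimate $\tau_s\norm{\x_s/\sigma_s}_{V_{s-1}^{-1}}\le\sqrt{\alpha+\tfrac18}\,\tau_0 s^{\frac{1-\epsilon}{2(1+\epsilon)}}$, and the same appeal to Eq.~(C.12) of \citet{NeurIPS'23:hvt-lb2} to get $\sum_{s\le t}\indicator\{\abs{z_s(\ts)}>\tau_s/2\}\le\frac{23}{3}\log\frac{2T^2}{\delta}$ with probability $1-\delta$.

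Where you genuinely diverge is the self-normalized term. The paper does not split the clipped noise into bias plus martingale; it directly invokes the vector-valued self-normalized concentration of Lemma~C.2 of \citet{NeurIPS'23:hvt-lb2} (inequality \eqref{eq:hvt-concentration}), specializes it to the scalar $Z_s=\inner{\x_s}{\thetah_s-\ts}$, and finishes with $\sqrt{ab}\le\frac14 a+b$. Your route re-derives the content of that lemma from elementary pieces, which is more self-contained, but it has two weak points. First, the tool you name for the martingale part, \pref{thm:snc1}, does not apply as stated: it requires a single deterministic sub-Gaussian parameter $R$, whereas your increments $\xi_s-\mu_s$ have range $2\tau_s$ and conditional variance proxy $\tau_s^{1-\epsilon}$ varying with $s$ (and $\tau_s$ can be arbitrarily large when $w_s$ is small), so you need a Freedman/Bernstein-type self-normalized inequality with per-step variance proxies --- which is precisely what Lemma~C.2 of \citet{NeurIPS'23:hvt-lb2} packages, and what the paper cites. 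Second, your bias contribution $\sum_s\mu_s\inner{\phi_s}{\thetah_s-\ts}$ is only shown to be of the same \emph{order} as the leading term and then ``absorbed'' into it; this changes the additive constant and so does not reproduce the explicit $23\sqrt{\alpha+\tfrac18}$ of the lemma statement, whereas the paper keeps the bias inside the cited concentration and incurs no extra term. Neither issue affects the rates, but as written your proof establishes the lemma only up to constants and rests on a concentration step whose named justification is insufficient.
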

\subsection{Proof of Lemma~\ref{lemma:estimation-error-decompose}}
\begin{proof}
    Since $A_t = \big\{\forall s\in[t], \big\|\thetah_{s}-\theta_*\big\|_{V_{s-1}} \leq \beta_{s-1}\big\}$ holds, $\tau_t$ and $\sigma_t$ satisfies~\eqref{eq:tau-setting} and~\eqref{eq:para-error-decompose},  we have 
    \begin{equation}\label{eq:tau2}
        \begin{split}
            \abs{\frac{\x_t^\T\thetah_t-\x_t^\T \theta_*}{\sigma_t}}\leq \norm{\frac{\x_t}{\sigma_t}}_{V_{t-1}^{-1}}\norm{\theta_*-\thetah_t}_{V_{t-1}}\leq \frac{\tau_0 t^{\frac{1-\epsilon}{2(1+\epsilon)}}}{2w_t\beta_{t-1}} \beta_{t-1} \leq \frac{1}{2}\tau_0 \frac{\sqrt{1+w_t^2}}{w_t}t^{\frac{1-\epsilon}{2(1+\epsilon)}} = \frac{\tau_t}{2},
        \end{split}
    \end{equation}
    where $w_t = \frac{1}{\sqrt{\alpha}}\norm{\frac{\x_t}{\sigma_t}}_{V_{t-1}^{-1}}$. Then, based on definition of $\zt_t(\cdot)$, we have $\zt_t(\theta_*) = 0\leq \tau_t$, and~\eqref{eq:tau2} shows that $\zt_t(\thetah_t)\leq \frac{\tau_t}{2}\leq \tau_t$, which means both points $\theta_*$ and $\thetah_t$ lie on the quadratic side of the denoised loss function $\ellt_t(\cdot)$. This allows us to apply Taylor's Formula with lagrange remainder to obtain
    \begin{equation}\label{eq:taylor}
        \begin{split}
            \ellt_t(\ts) = \ellt_t(\thetah_t) + \inner{\nabla\ellt_t(\thetah_t)}{\ts - \thetah_t}+\frac{1}{2}\norm{\thetah_t-\ts}^2_{\nabla^2\ellt_t(\xi_t)},
        \end{split}
    \end{equation}
    where $\xi_t =\gamma \thetah_t +(1-\gamma)\theta_*$ for some $\gamma \in (0,1)$, which means $\xi_t$ also lie on the quadratic side of the denoised loss function:
    \begin{equation*}
        \begin{split}
            \abs{\zt_t(\xi_t)} = \abs{\frac{\x_t^\T\xi_t-\x_t^\T \theta_*}{\sigma_t}} = \abs{\frac{\x_t^\T(\gamma \thetah_t +(1-\gamma)\theta_*)-\x_t^\T \theta_*}{\sigma_t}} = \gamma\abs{\frac{\x_t^\T\thetah_t-\x_t^\T \theta_*}{\sigma_t}} \leq \gamma\frac{\tau_t}{2}\leq \tau_t,
        \end{split}
    \end{equation*}
    where the first inequality comes from~\eqref{eq:tau2}. Then we have $\nabla^2\ellt_t(\xi_t)= \indicator\bbr{\zt_t(\xi_t)\leq \tau_t}\frac{\x_t\x_t^\T}{\sigma_t^2} = \frac{\x_t\x_t^\T}{\sigma_t^2}$. At the same time, since $\ts = \argmin_{\theta\in\R^d} \ellt_t(\theta)$, we have $0\leq \ellt_t(\thetah_t) - \ellt_t(\ts)$. Substituting these into~\eqref{eq:taylor}, we have 
    \begin{equation}
        \begin{split}
        0\leq \ellt_t(\thetah_t) - \ellt_t(\ts) =& \inner{\nabla\ellt_t(\thetah_t)}{\thetah_t-\ts}-\frac{1}{2}\norm{\thetah_t-\ts}^2_{ \frac{\x_t\x_t^\T}{\sigma_t^2}},
        \end{split}
    \end{equation}
    which means $\frac{1}{2}\norm{\thetah_t-\ts}^2_{ \frac{\x_t\x_t^\T}{\sigma_t^2}} \leq \inner{\nabla\ellt_t(\thetah_t)}{\thetah_t-\ts}$, then we have 
    \begin{equation}\label{eq:error}
        \begin{split}
            \frac{1}{2}\norm{\thetah_t-\ts}^2_{ \frac{\x_t\x_t^\T}{\sigma_t^2}}\leq \inner{\nabla\ell_t(\thetah_t)}{\thetah_{t+1}-\ts} + \inner{\nabla\ell_t(\thetah_t)}{\thetah_{t}-\thetah_{t+1}}+ \inner{\nabla\ellt_t(\thetah_t) - \nabla\ell_t(\thetah_t)}{\thetah_{t}-\ts},
        \end{split}
    \end{equation}
    where the first term can be bounded by the Bregman proximal inequality (Lemma~\ref{lemma:bregman}), we have 
    \begin{equation*}
        \begin{split}
            \inner{\nabla\ell_t(\thetah_t)}{\thetah_{t+1}-\ts} \leq{}& \D_{\psi_t} (\theta_*,\thetah_t) - \D_{\psi_t} (\theta_*,\thetah_{t+1}) - \D_{\psi_t} (\thetah_{t+1},\thetah_t)\\
             ={}& \frac{1}{2}\norm{\thetah_t-\theta_*}^2_{V_t} - \frac{1}{2}\norm{\thetah_{t+1}-\ts}^2_{V_t} - \frac{1}{2}\norm{\thetah_{t+1}-\theta_t}^2_{V_t},
        \end{split}
    \end{equation*}
    substituting the above inequality into~\eqref{eq:error}, we have 
    \begin{equation}
        \begin{split}
            \frac{1}{2}\norm{\thetah_t-\ts}^2_{ \frac{\x_t\x_t^\T}{\sigma_t^2}} \leq \frac{1}{2}\norm{\thetah_t-\theta_*}^2_{V_t} &- \frac{1}{2}\norm{\thetah_{t+1}-\ts}^2_{V_t} - \frac{1}{2}\norm{\thetah_{t+1}-\theta_t}^2_{V_t}\\
            &+ \inner{\nabla\ell_t(\thetah_t)}{\thetah_{t}-\thetah_{t+1}} + \inner{\nabla\ellt_t(\thetah_t) - \nabla\ell_t(\thetah_t)}{\thetah_{t}-\ts}.
        \end{split}
    \end{equation}
    Rearranging the above inequality and using AM-GM inequality, we have 
    \begin{equation*}
        \begin{split}
            \norm{\thetah_{t+1}-\ts}_{V_t}^2 &\leq \norm{\thetah_t-\ts}_{V_{t}}^2+\norm{\nabla\ell_t(\thetah_t)}^2_{V_t^{-1}}+2\inner{\nabla\ellt_t(\thetah_t) - \nabla\ell_t(\thetah_t)}{\thetah_t-\ts} - \norm{\thetah_t-\ts}_{\frac{\x_t\x_t^\T}{\sigma_t^2}}^2\\
            &\leq \norm{\thetah_t-\ts}_{V_{t-1}}^2+\norm{\nabla\ell_t(\thetah_t)}^2_{V_t^{-1}}+2\inner{\nabla\ellt_t(\thetah_t) - \nabla\ell_t(\thetah_t)}{\thetah_t-\ts}+ \sbr{\frac{1}{\alpha}-1}\norm{\thetah_t-\ts}_{\frac{\x_t\x_t^\T}{\sigma_t^2}}^2,
        \end{split}
    \end{equation*}
    where the last equality comes from that $V_t = V_{t-1}+\frac{1}{\alpha}\frac{\x_t\x_t^\T}{\sigma_t^2}$. Taking the summation of the above inequality over $t$ rounds and we have 
    \begin{equation*}
        \begin{split}
            {}&\norm{\thetah_{t+1}-\ts}_{V_t}^2\\
         \leq {}& \norm{\thetah_1-\ts}_{V_{0}}^2+\sum_{s=1}^{t}\norm{\nabla\ell_s(\thetah_s)}^2_{V_s^{-1}} + 2\sum_{s=1}^{t}\inner{\nabla\ellt_s(\thetah_s) - \nabla\ell_s(\thetah_s)}{\thetah_s-\ts} + \sbr{\frac{1}{\alpha}-1}\sum_{s=1}^{t}\norm{\thetah_s-\ts}_{\frac{\x_s\x_s^\T}{\sigma_s^2}}^2\\
        \leq {}& 4\lambda S^2+\sum_{s=1}^{t}\norm{\nabla\ell_s(\thetah_s)}^2_{V_s^{-1}} + 2\sum_{s=1}^{t}\inner{\nabla\ellt_s(\thetah_s) - \nabla\ell_s(\thetah_s)}{\thetah_s-\ts} + \sbr{\frac{1}{\alpha}-1}\sum_{s=1}^{t}\norm{\thetah_s-\ts}_{\frac{\x_s\x_s^\T}{\sigma_s^2}}^2,
        \end{split}
    \end{equation*}
where the last inequality comes from $V_0 = \lambda I_d$ and Assumption~\ref{assum:bound}, thus we complete the proof.
\end{proof}

\subsection{Proof of Lemma~\ref{lemma:a}}
\begin{proof}
    We first analyze the upper bound of single term $\|\nabla\ell_t(\thetah_t)\|^2_{V_t^{-1}}$, based on the definition of loss function~\eqref{eq:gradient-hessian}, we have 
        \begin{align}\label{eq:a}
                \norm{\nabla\ell_t(\thetah_t)}^2_{V_t^{-1}} &=\norm{\min\bbr{\abs{\frac{r_t-\x_t^\T\thetah_t}{\sigma_t}}, \tau_t}\frac{\x_t}{\sigma_t}}^2_{V_t^{-1}}\nonumber\\
                &=\norm{\min\bbr{\abs{\frac{\eta_t}{\sigma_t}+\frac{\x_t^\T\theta_*-\x_t^\T\thetah_t}{\sigma_t}}, \tau_t}\frac{\x_t}{\sigma_t}}^2_{V_t^{-1}}\nonumber\\
                &\leq\sbr{\min\bbr{\abs{\frac{\eta_t}{\sigma_t}}+\abs{\frac{\x_t^\T\theta_*-\x_t^\T\thetah_t}{\sigma_t}}, \tau_t}}^2\norm{\frac{\x_t}{\sigma_t}}^2_{V_t^{-1}}\nonumber\\
                &\leq\sbr{\min\bbr{\abs{\frac{\eta_t}{\sigma_t}}, \tau_t}+\min\bbr{\abs{\frac{\x_t^\T\theta_*-\x_t^\T\thetah_t}{\sigma_t}}, \tau_t}}^2\norm{\frac{\x_t}{\sigma_t}}^2_{V_t^{-1}}\nonumber\\
                &\leq2\sbr{\min\bbr{\abs{\frac{\eta_t}{\sigma_t}}, \tau_t}}^2\norm{\frac{\x_t}{\sigma_t}}^2_{V_t^{-1}}+2\sbr{\frac{\x_t^\T\theta_*-\x_t^\T\thetah_t}{\sigma_t}}^2\norm{\frac{\x_t}{\sigma_t}}^2_{V_t^{-1}}.
        \end{align}
        We define $\Vt_t \define \lambda \alpha I + \sum_{s=1}^t \frac{\x_s\x_s^\T}{\sigma_s^2} = \alpha V_t$, then we have $w_t = \frac{1}{\sqrt{\alpha}}\norm{\frac{\x_t}{\sigma_t}}_{V_{t-1}^{-1}} = \norm{\frac{\x_t}{\sigma_t}}_{\Vt_{t-1}^{-1}}$. Based on Sherman-Morrison-Woodbury formula of $\Vt_t^{-1}$ we have
        \begin{equation*}
                \norm{\frac{\x_t}{\sigma_t}}^2_{\Vt_t^{-1}} = \frac{\x_t^\T \Vt_t^{-1}\x_t}{\sigma_t^2} = \frac{1}{\sigma_t^2}\x_t^\T \sbr{\Vt_{t-1}^{-1}-\frac{\Vt_{t-1}^{-1}\x_t\x_t^\T \Vt_{t-1}^{-1}}{\sigma_t^2(1+w_t^2)}}\x_t = w_t^2-\frac{w_t^4}{1+w_t^2} = \frac{w_t^2}{1+w_t^2},
        \end{equation*}
        thus $\big\|\frac{\x_t}{\sigma_t}\big\|^2_{V_t^{-1}} = \alpha \frac{w_t^2}{1+w_t^2}$, then taking the summation over inequality~\eqref{eq:a} over $t$ rounds and we have 
        \begin{equation*}
            \begin{split}
                \sum_{s=1}^t\norm{\nabla\ell_s(\thetah_s)}^2_{V_s^{-1}} \leq \underbrace{2\alpha\sum_{s=1}^t\sbr{\min\bbr{\abs{\frac{\eta_s}{\sigma_s}}, \tau_s}}^2\frac{w_s^2}{1+w_s^2}}_{\term{a.1}}+\underbrace{2\alpha\sum_{s=1}^t\sbr{\frac{\x_s^\T\theta_*-\x_s^\T\thetah_s}{\sigma_s}}^2 \frac{w_s^2}{1+w_s^2}}_{\term{a.2}}.
            \end{split}
        \end{equation*}

        \paragraph{For $\term{A.1}$.} By appling Lemma~C.5 of \cite{NeurIPS'23:hvt-lb2}, restated in Lemma~\ref{lemma:a_t}, with $\tau_t=\tau_0 \frac{\sqrt{1+w_t^2}}{w_t} t^{\frac{1-\epsilon}{2(1+\epsilon)}}$ and $b = \max_{t\in[T]}\frac{\nu_t}{\sigma_t} \leq 1$, we have with probability at least $1-\delta, \forall t \geq 1$,
        \begin{equation*}
            \begin{split}
                \sum_{s=1}^t\sbr{\min\bbr{\abs{\frac{\eta_s}{\sigma_s}}, \tau_s}}^2\frac{w_s^2}{1+w_s^2} \leq \left[t^{\frac{1-\epsilon}{2(1+\epsilon)}}\left(\sqrt{\tau_0^{1-\epsilon}(\sqrt{2 \kappa})^{1+\epsilon}(\log 3 T)^{\frac{1-\epsilon}{2}}}+\tau_0 \sqrt{2 \log \frac{2 T^2}{\delta}}\right)\right]^2,
            \end{split}
        \end{equation*}
        by choosing  $\tau_0 = \frac{\sqrt{2 \kappa}(\log 3 T)^{\frac{1-\epsilon}{2(1+\epsilon)}}}{\left(\log \frac{2 T^2}{\delta}\right)^{\frac{1}{1+\epsilon}}}$, we have $\sqrt{\tau_0^{1-\epsilon}(\sqrt{2 \kappa})^{1+\epsilon}(\log 3T)^{\frac{1-\epsilon}{2}}} = \tau_0 \sqrt{\log \frac{2 T^2}{\delta}}$, which means
        \begin{equation}\label{eq:a1}
            \begin{split}
                \sum_{s=1}^t\sbr{\min\bbr{\abs{\frac{\eta_s}{\sigma_s}}, \tau_s}}^2\frac{w_s^2}{1+w_s^2} \leq 6\sbr{ t^{\frac{1-\epsilon}{2(1+\epsilon)}}\sqrt{2 \kappa} (\log 3 T)^{\frac{1-\epsilon}{2(1+\epsilon)}}\left(\log \frac{2 T^2 }{\delta}\right)^{\frac{\epsilon}{1+\epsilon}}}^2.
            \end{split}
        \end{equation}
        \paragraph{For \term{A.2}.} Since $\sigma_t \geq 2\sqrt{2}\norm{\x_t}_{V_{t-1}^{-1}}$, which means $\alpha w_t^2 = \norm{\frac{\x_t}{\sigma_t}}_{V_{t-1}^{-1}}^2 \leq 1/8$, then we have 
        \begin{equation}\label{eq:a2}
            2\alpha\sum_{s=1}^t\sbr{\frac{\x_s^\T\theta_*-\x_s^\T\thetah_s}{\sigma_s}}^2 \frac{w_s^2}{1+w_s^2} \leq \frac{1}{4}\sum_{s=1}^t\sbr{\frac{\x_s^\T\theta_*-\x_s^\T\thetah_s}{\sigma_s}}^2 = \frac{1}{4}\sum_{s=1}^t\norm{\theta_* - \thetah_s}_{\frac{\x_s\x_s^\T}{\sigma^2}}^2,
        \end{equation}
        
        Combining~\eqref{eq:a1} and~\eqref{eq:a2}, we have with probability at least $1-\delta, \forall t \geq 1$
        \begin{equation*}
            \begin{split}
                \sum_{s=1}^t\norm{\nabla\ell_s(\thetah_s)}^2_{V_s^{-1}} \leq 6\alpha\sbr{ t^{\frac{1-\epsilon}{2(1+\epsilon)}}\sqrt{2 \kappa} (\log 3 T)^{\frac{1-\epsilon}{2(1+\epsilon)}}\left(\log \frac{2 T^2 }{\delta}\right)^{\frac{\epsilon}{1+\epsilon}}}^2 + \frac{1}{4}\sum_{s=1}^t\norm{\theta_* - \thetah_s}_{\frac{\x_s\x_s^\T}{\sigma^2}}^2.
            \end{split}
        \end{equation*}
        Thus we complete the proof.
    \end{proof}

\subsection{Proof of Lemma~\ref{lemma:b}}
\begin{proof}
    We first analyze the upper bound of single term,
    \begin{equation*}
        \begin{split}
            {}&\inner{\nabla\ellt_t(\thetah_t) - \nabla\ell_t(\thetah_t)}{\thetah_t-\ts}\indicator_{A_{t}} \\
            ={}& \inner{\nabla\ellt_t(\thetah_t) - \nabla\ell_t(\theta_*)+\nabla\ell_t(\theta_*)-\nabla\ell_t(\thetah_t)}{\thetah_t-\ts}\indicator_{A_{t}}\\
            ={}& \underbrace{\inner{\nabla\ellt_t(\thetah_t) + \nabla\ell_t(\theta_*)-\nabla\ell_t(\thetah_t)}{\thetah_t-\ts}\indicator_{A_{t}}}_{\term{b.1}}+\underbrace{\inner{-\nabla\ell_t(\theta_*)}{\thetah_t-\ts}\indicator_{A_{t}}}_{\term{b.2}}.
        \end{split}
    \end{equation*}
\paragraph{For $\term{B.1}$.}  
We define $\psi_t(z) $ as the gradient of Huber loss function~\eqref{eq:huber-loss}, then we have
\begin{equation}
    \label{eq:psi}
    \psi_t(z)= \begin{cases}z & \text { if }|z| \leq \tau_t, \\ \tau_t & \text { if }z>\tau_t,\\ -\tau_t & \text { if }z<-\tau_t.\end{cases}
\end{equation}
Based on~\eqref{eq:ell-gradient} and~\eqref{eq:expected-huber-gradient}, $\nabla \ell_t(\theta) = -\psi_t(z_t(\theta))\frac{\x_t}{\sigma_t}$ and $\nabla \ellt_t(\theta) = -\psi_t(z_t(\theta)-z_t(\theta_*))\frac{\x_t}{\sigma_t}$, which means
\begin{equation*}
    \begin{split}
        {}&\inner{\nabla\ellt_t(\thetah_t) + \nabla\ell_t(\theta_*)-\nabla\ell_t(\thetah_t)}{\thetah_t-\ts}\indicator_{A_{t}}\\
         ={}& \sbr{\psi_t(z_t(\thetah_t)) - \psi_t(z_t(\theta_*))-\psi_t(z_t(\thetah_t) - z_t(\theta_*))}\frac{\inner{\x_t}{\thetah_t-\ts}}{\sigma_t}\indicator_{A_{t}} ,
    \end{split}
\end{equation*}
we first analyze term $\psi_t(z_t(\thetah_t)) - \psi_t(z_t(\theta_*))-\psi_t(z_t(\thetah_t) - z_t(\theta_*))$. When event $A_t$ holds, similar to~\eqref{eq:tau2}, we have 
\begin{equation*}
    \begin{split}
        \abs{z_t(\thetah_t) - z_t(\theta_*)} = \abs{\frac{\x_t^\T\thetah_t-\x_t^\T \theta_*}{\sigma_t}}\leq  \frac{\tau_t}{2},
    \end{split}
\end{equation*}
then based on~\eqref{eq:psi}, we have $\psi_t(z_t(\thetah_t) - z_t(\theta_*)) = z_t(\thetah_t) - z_t(\theta_*)$. Next, we analyze the different situation of $z_t(\theta_*)$. When $\abs{z_t(\theta_*)}\leq \frac{\tau_t}{2}$, we have 
\begin{equation}\label{eq:situation1}
    \begin{split}
        \abs{z_t(\thetah_t)} = \abs{z_t(\thetah_t) - z_t(\theta_*) +z_t(\theta_*)} \leq \abs{z_t(\thetah_t) - z_t(\theta_*)}+\abs{z_t(\theta_*)} \leq \frac{\tau_t}{2}+\frac{\tau_t}{2} = \tau_t,
    \end{split}
\end{equation}
based on~\eqref{eq:psi} we have 
\begin{equation*}
    \begin{split}
        \psi_t(z_t(\thetah_t)) - \psi_t(z_t(\theta_*))-\psi_t(z_t(\thetah_t) - z_t(\theta_*)) = z_t(\thetah_t) - z_t(\theta_*) - z_t(\thetah_t) + z_t(\theta_*) = 0.
    \end{split}
\end{equation*}
For another situation such that $\abs{z_t(\theta_*)} > \frac{\tau_t}{2}$, based on~\eqref{eq:psi} we have 
\begin{equation}\label{eq:situation2}
    \begin{split}
        {}&\psi_t(z_t(\thetah_t)) - \psi_t(z_t(\theta_*))-\psi_t(z_t(\thetah_t) - z_t(\theta_*))\\
        \leq{}& \abs{\psi_t(z_t(\thetah_t))} + \abs{\psi_t(z_t(\theta_*))}+\abs{\psi_t(z_t(\thetah_t) - z_t(\theta_*))} \leq 3\tau_t.
    \end{split}
\end{equation}
Combine this two situations \eqref{eq:situation1} and~\eqref{eq:situation2}, we have for $\term{b.1}$, 
\begin{align*}
    {}&\inner{\nabla\ellt_t(\thetah_t) + \nabla\ell_t(\theta_*)-\nabla\ell_t(\thetah_t)}{\thetah_t-\ts}\indicator_{A_{t}} \\
        ={}& \sbr{\psi_t(z_t(\thetah_t)) - \psi_t(z_t(\theta_*))-\psi_t(z_t(\thetah_t) - z_t(\theta_*))}\frac{\inner{\x_t}{\thetah_t-\ts}}{\sigma_t}\indicator_{A_{t}} \\
        \leq{}& \abs{\psi_t(z_t(\thetah_t)) - \psi_t(z_t(\theta_*))-\psi_t(z_t(\thetah_t) - z_t(\theta_*))}\norm{\frac{\x_t}{\sigma_t}}_{V_{t-1}^{-1}}\norm{\thetah_t-\theta_*}_{V_{t-1}}\indicator_{A_{t}}\\
        \leq{}& \indicator\bbr{\abs{z_t(\theta_*)} \leq \frac{\tau_t}{2}} 0+ \indicator\bbr{\abs{z_t(\theta_*)} > \frac{\tau_t}{2}}3\tau_t\norm{\frac{\x_t}{\sigma_t}}_{V_{t-1}^{-1}}\norm{\thetah_t-\theta_*}_{V_{t-1}}\indicator_{A_{t}}\\
        \leq{}& \indicator\bbr{\abs{z_t(\theta_*)} > \frac{\tau_t}{2}}3\tau_0 \frac{\sqrt{1+w_t^2}}{w_t} t^{\frac{1-\epsilon}{2(1+\epsilon)}}\sqrt{\alpha}w_t\beta_{t-1}\\
        ={}& \indicator\bbr{\abs{z_t(\theta_*)} > \frac{\tau_t}{2}}3\tau_0 \sqrt{\alpha+\alpha w_t^2} t^{\frac{1-\epsilon}{2(1+\epsilon)}}\beta_{t-1}\\
        \leq{}& \indicator\bbr{\abs{z_t(\theta_*)} > \frac{\tau_t}{2}}3\sqrt{\alpha+\frac{1}{8}}\tau_0 t^{\frac{1-\epsilon}{2(1+\epsilon)}} \beta_{t-1},
\end{align*}
where the third inequality comes from the definition of event $A_t$ and the last third inequality comes from $\sigma_t \geq 2\sqrt{2}\norm{\x_t}_{V_{t-1}^{-1}}$, such that $\alpha w_t^2 = \norm{\frac{\x_t}{\sigma_t}}_{V_{t-1}^{-1}}^2 \leq 1/8$. Then sum up for $t$ round, and we have 
\begin{equation*}
    \begin{split}
        \sum_{s=1}^t\inner{\nabla\ellt_s(\thetah_s) + \nabla\ell_s(\theta_*)-\nabla\ell_t(\thetah_s)}{\thetah_s-\ts}\indicator_{A_{s}} &\leq 3\sqrt{\alpha+\frac{1}{8}}\tau_0 \sum_{s=1}^t s^{\frac{1-\epsilon}{2(1+\epsilon)}}\beta_{s-1} \indicator\bbr{\abs{z_s(\theta_*)} >\frac{\tau_s}{2}}\\
        &\leq 3\sqrt{\alpha+\frac{1}{8}}\tau_0 t^{\frac{1-\epsilon}{2(1+\epsilon)}}\sbr{\max_{s\in[t+1]}\beta_{s}} \sum_{s=1}^t \indicator\bbr{\abs{z_s(\theta_*)} > \frac{\tau_s}{2}}.
    \end{split}
\end{equation*}
where the last inequality comes from that $\max_{s\in[t]}\beta_{s-1} \leq \max_{s\in[t+1]}\beta_{s-1}$
Same as Eq. (C.12) of~\citet{NeurIPS'23:hvt-lb2}, by setting $\tau_0$ as~\eqref{eq:para-b}, with probability at least $1-\delta$, for all $t\geq 1$,  we have
\begin{equation*}
    \begin{split}
        \sum_{s=1}^t \indicator\bbr{\abs{z_s(\theta_*)} > \frac{\tau_s}{2}} \leq \frac{23}{3}\log \frac{2T^2}{\delta}, 
    \end{split}
\end{equation*}
then we have for $\term{b.1}$, with probability at least $1-\delta$, for all $t\geq 1$,
\begin{equation}\label{eq:b-term1}
    \begin{split}
        {}& \sum_{s=1}^t\inner{\nabla\ellt_s(\thetah_s) + \nabla\ell_s(\theta_*)-\nabla\ell_t(\thetah_s)}{\thetah_s-\ts}\indicator_{A_{t}} \leq 23\sqrt{\alpha +\frac{1}{8}} \log \frac{2 T^2}{\delta} \tau_0 t^{\frac{1-\epsilon}{2(1+\epsilon)}}\sbr{\max_{s\in[t+1]}\beta_{s-1}}.
    \end{split}
\end{equation}

\paragraph{For $\term{B.2}$.} 
We first have 
\begin{equation}\label{eq:b-term-decompose}
    \begin{split}
        \sum_{s=1}^t \inner{-\nabla\ell_s(\theta_*)}{\thetah_s-\ts}\indicator_{A_{s}} &=\sum_{s=1}^t \psi_s(z_s(\theta_*))\inner{\frac{\x_s}{\sigma_s}}{\thetah_s-\ts}\indicator_{A_{s}}\\
        &\leq \abs{\sum_{s=1}^t \psi_s(z_s(\theta_*))\inner{\frac{\x_s}{\sigma_s}}{\thetah_s-\ts}},
    \end{split}
\end{equation}
where the last inequality comes from that $\indicator_{A_{s}} \leq 1$. Then, Lemma~C.2 of~\citet{NeurIPS'23:hvt-lb2} (Self-normalized concentration) shows that by setting $\tau_0$ as~\eqref{eq:para-b} and $b = \max_{t\in[T]}\frac{\nu_t}{\sigma_t} \leq 1$, we have with probability at least $1-2\delta$, $\forall t\geq 1$,
    \begin{equation}\label{eq:hvt-concentration}
        \norm{\sum_{s=1}^t\psi_s(z_s(\theta_*))\frac{\x_s}{\sigma_s}}_{\Vt_{t}^{-1}}\leq 8 t^{\frac{1-\epsilon}{2(1+\epsilon)}}\sqrt{2 \kappa} (\log 3 T)^{\frac{1-\epsilon}{2(1+\epsilon)}}\left(\log \left(\frac{2 T^2 }{\delta}\right)\right)^{\frac{\epsilon}{1+\epsilon}}.
    \end{equation}
where $\Vt_t = \lambda\alpha I+\sum_{s=1}^t \frac{\x_s\x_s^\T}{\sigma_s}$ and $\kappa = d\log \left(1 + \frac{L^2T}{\sigma_{\min}^2\lambda\alpha d}\right)$. Now we need to convert it into a $1$-dimensional version, if $Z_s$ is scalar, we have:
    \begin{equation*}
        \begin{split}
            \norm{\sum_{s=1}^t\psi_s(z_s(\theta_*))\frac{Z_s}{\sigma_s}}_{\Vt_{t}^{-1}}^2 = \frac{\sbr{\sum_{s=1}^t\psi_s(z_s(\theta_*))\frac{Z_s^2}{\sigma_s^2}}}{\lambda\alpha  + \sum_{s=1}^t \frac{Z_s^2}{\sigma_s^2}},
        \end{split}
    \end{equation*}
    Based on inequality~\eqref{eq:hvt-concentration}, we have 
    \begin{equation*}
        \begin{split}
            \frac{\sbr{\sum_{s=1}^t\psi_s(z_s(\theta_*))\frac{Z_s}{\sigma_s}}^2}{\lambda\alpha +\sum_{s=1}^t \frac{Z_s^2}{\sigma_s^2}} &\leq \sbr{8 t^{\frac{1-\epsilon}{2(1+\epsilon)}}\sqrt{2 \kappa} (\log 3 T)^{\frac{1-\epsilon}{2(1+\epsilon)}}\left(\log \left(\frac{2 T^2 }{\delta}\right)\right)^{\frac{\epsilon}{1+\epsilon}}}^2\\
            \abs{\sum_{s=1}^t\psi_s(z_s(\theta_*))\frac{Z_s}{\sigma_s}} &\leq \sqrt{\sbr{\lambda\alpha + \sum_{s=1}^t \frac{Z_s^2}{\sigma_s^2}}\sbr{8 t^{\frac{1-\epsilon}{2(1+\epsilon)}}\sqrt{2 \kappa} (\log 3 T)^{\frac{1-\epsilon}{2(1+\epsilon)}}\left(\log \left(\frac{2 T^2 }{\delta}\right)\right)^{\frac{\epsilon}{1+\epsilon}}}^2}.
        \end{split}
    \end{equation*}
    Then based on AM-GM, we have 
    \begin{equation*}
        \begin{split}
            \abs{\sum_{s=1}^t\psi_s(z_s(\theta_*))\frac{Z_s}{\sigma_s}} &\leq \frac{1}{4}\sbr{\lambda\alpha + \sum_{s=1}^t \frac{Z_s^2}{\sigma_s^2}} + \sbr{8 t^{\frac{1-\epsilon}{2(1+\epsilon)}}\sqrt{2 \kappa} (\log 3 T)^{\frac{1-\epsilon}{2(1+\epsilon)}}\left(\log \left(\frac{2 T^2 }{\delta}\right)\right)^{\frac{\epsilon}{1+\epsilon}}}^2,
        \end{split}
    \end{equation*}
    where we use $\sqrt{ab}\leq \frac{1}{4}a+b$. Let $Z_s = \inner{\x_s}{\thetah_s-\ts}$, then put it back to~\eqref{eq:b-term-decompose}, for $\term{2}$, we have with probability at least $1-2\delta$, $\forall t\geq 1$,
    \begin{equation}\label{eq:b-term2}
        \begin{split}
            \sum_{s=1}^t \inner{-\nabla\ell_s(\theta_*)}{\thetah_s-\ts}&\leq \frac{1}{4}\sbr{\lambda\alpha + \sum_{s=1}^t \inner{\frac{\x_s}{\sigma_s}}{\thetah_s-\ts}^2}\\
            &\qquad\qquad + \sbr{8 t^{\frac{1-\epsilon}{2(1+\epsilon)}}\sqrt{2 \kappa} (\log 3 T)^{\frac{1-\epsilon}{2(1+\epsilon)}}\left(\log \frac{2 T^2 }{\delta}\right)^{\frac{\epsilon}{1+\epsilon}}}^2.
        \end{split}
    \end{equation}
    Combine~\eqref{eq:b-term1} and~\eqref{eq:b-term2} together, with union bound we have with probability at least $1-3\delta$ ,$\forall t\geq 1$
    \begin{equation*}
        \begin{split}
            &\sum_{s=1}^{t}\inner{\nabla\ellt_s(\thetah_s) - \nabla\ell_s(\thetah_s)}{\thetah_s-\ts}\indicator_{A_{s}} \leq  23\sqrt{\alpha+\frac{1}{8}}\log \frac{2 T^2}{\delta} \tau_0 t^{\frac{1-\epsilon}{2(1+\epsilon)}}\sbr{\max_{s\in[t+1]}\beta_{s-1}}\\
             &\qquad\qquad\qquad\qquad +\frac{1}{4}\sbr{\lambda\alpha + \sum_{s=1}^t \inner{\frac{\x_s}{\sigma_s}}{\thetah_s-\ts}^2}+ \sbr{8 t^{\frac{1-\epsilon}{2(1+\epsilon)}}\sqrt{2 \kappa} (\log 3 T)^{\frac{1-\epsilon}{2(1+\epsilon)}}\left(\log \frac{2 T^2 }{\delta}\right)^{\frac{\epsilon}{1+\epsilon}}}^2.
        \end{split}
    \end{equation*}

\end{proof}

\subsection{Proof of Lemma~\ref{lemma:ucb-heavy-tail}}
\label{app:proof-lemma-ucb}
\begin{proof}
Combining the results of Lemma~\ref{lemma:a} and Lemma~\ref{lemma:b}, with applying the union bound, we obtain that, with probablity at least $1-4\delta$, the following holds for all $t\geq 1$,
        \begin{align*}
            {}& 4\lambda S^2+\sum_{s=1}^{t}\norm{\nabla\ell_s(\thetah_s)}^2_{V_s^{-1}} + 2\sum_{s=1}^{t}\inner{\nabla\ellt_s(\thetah_s) - \nabla\ell_s(\thetah_s)}{\thetah_s-\ts}\indicator_{A_{s}}+ \sbr{\frac{1}{\alpha}-1}\sum_{s=1}^{t}\norm{\thetah_s-\ts}_{\frac{\x_s\x_s^\T}{\sigma_s^2}}^2\\
            \leq {}& 4\lambda S^2+6\alpha\sbr{t^{\frac{1-\epsilon}{2(1+\epsilon)}}\sqrt{2 \kappa} (\log 3 T)^{\frac{1-\epsilon}{2(1+\epsilon)}}\left(\log \frac{2 T^2 }{\delta}\right)^{\frac{\epsilon}{1+\epsilon}}}^2 + \frac{1}{4}\sum_{s=1}^t\norm{\theta_* - \thetah_s}_{\frac{\x_s\x_s^\T}{\sigma^2}}^2\\
            {}&+46\sqrt{\alpha+\frac{1}{8}}\log \frac{2 T^2}{\delta} \tau_0 t^{\frac{1-\epsilon}{2(1+\epsilon)}}\sbr{\max_{s\in[t+1]}\beta_{s-1}} +\frac{1}{2}\sbr{\lambda\alpha + \sum_{s=1}^t \inner{\frac{\x_s}{\sigma_s}}{\thetah_s-\ts}^2}\\
            {}&+ 2\sbr{8 t^{\frac{1-\epsilon}{2(1+\epsilon)}}\sqrt{2 \kappa} (\log 3 T)^{\frac{1-\epsilon}{2(1+\epsilon)}}\left(\log \frac{2 T^2 }{\delta}\right)^{\frac{\epsilon}{1+\epsilon}}}^2+\sbr{\frac{1}{\alpha}-1}\sum_{s=1}^t \norm{\thetah_s-\ts}_{\frac{\x_s\x_s^\T}{\sigma_s^2}}^2\\
            \leq{}&  94\log \frac{2 T^2}{\delta} \tau_0 t^{\frac{1-\epsilon}{2(1+\epsilon)}}\sbr{\max_{s\in[t+1]}\beta_{s-1}} + \lambda(2+4S^2)+ 152\sbr{\log \frac{2 T^2}{\delta} \tau_0 t^{\frac{1-\epsilon}{2(1+\epsilon)}}}^2,
        \end{align*}
        where the least inequality comes from that $\alpha = 4$, and~ $$\log \frac{2 T^2}{\delta} \tau_0 t^{\frac{1-\epsilon}{2(1+\epsilon)}} = t^{\frac{1-\epsilon}{2(1+\epsilon)}}\sqrt{2 \kappa} (\log 3 T)^{\frac{1-\epsilon}{2(1+\epsilon)}}\left(\log \frac{2 T^2 }{\delta}\right)^{\frac{\epsilon}{1+\epsilon}}.$$ Then we choose $\forall t\geq 1$, 
        \begin{equation*}
            \begin{split}
            \beta_t = 107\log \frac{2 T^2}{\delta} \tau_0 t^{\frac{1-\epsilon}{2(1+\epsilon)}}+\sqrt{\lambda\sbr{2+4S^2}},
            \end{split}
        \end{equation*}
        and we can varify that with probablity at least $1-4\delta$, the following holds for all $t\geq 1$,
        \begin{equation}\label{eq:condition-A}
            \begin{split}
                \beta_t^2\geq 
                4\lambda S^2+\sum_{s=1}^{t}\norm{\nabla\ell_s(\thetah_s)}^2_{V_s^{-1}} {}&+ 2\sum_{s=1}^{t}\inner{\nabla\ellt_s(\thetah_s) - \nabla\ell_s(\thetah_s)}{\thetah_s-\ts}\indicator_{A_{s}}\\{}&+ \sbr{\frac{1}{\alpha}-1}\sum_{s=1}^{t}\norm{\thetah_s-\ts}_{\frac{\x_s\x_s^\T}{\sigma_s^2}}^2 .
            \end{split}
        \end{equation}
        Let $B$ denote the event that the conditions in~\eqref{eq:condition-A} hold $\forall t\geq 1$, then $\P(B)\geq 1-4\delta$. We now introduce a new event $C$ that is define as 
        \begin{equation*}
            \begin{split}
            C \define \bbr{\forall t\geq 1, \norm{\thetah_{t}-\theta_*}_{V_{t-1}} \leq \beta_{t-1}} = \bigcap_{t=0}^{\infty} A_t.
            \end{split}
        \end{equation*}
        In the following we will show that if $B$ is true, $C$ must be true, which means $\P(C)\geq \P(B)\geq 1-4\delta$ by mathematical induction. When $t = 1$, $A_1$ is true by definition such that $\norm{\thetah_{1}-\theta_*}_{V_{0}} \leq \sqrt{4\lambda S^2} \leq \sqrt{\lambda(2+4S^2)} = \beta_0$. Suppose that at iteration $t$, for all $s\in[t], A_s$ is true, then we are going to show that $A_{t+1}$ is also true. 
        \begin{equation*}
            \begin{split}
                {}&\norm{\thetah_{t+1}-\ts}_{V_t}^2\\
             \leq{}& 4\lambda S^2+\sum_{s=1}^{t}\norm{\nabla\ell_s(\thetah_s)}^2_{V_s^{-1}} + 2\sum_{s=1}^{t}\inner{\nabla\ellt_s(\thetah_s) - \nabla\ell_s(\thetah_s)}{\thetah_s-\ts} + \sbr{\frac{1}{\alpha}-1}\sum_{s=1}^{t}\norm{\thetah_s-\ts}_{\frac{\x_s\x_s^\T}{\sigma_s^2}}^2\\
            ={}& 4\lambda S^2+\sum_{s=1}^{t}\norm{\nabla\ell_s(\thetah_s)}^2_{V_s^{-1}} + 2\sum_{s=1}^{t}\inner{\nabla\ellt_s(\thetah_s) - \nabla\ell_s(\thetah_s)}{\thetah_s-\ts}\indicator_{A_s} + \sbr{\frac{1}{\alpha}-1}\sum_{s=1}^{t}\norm{\thetah_s-\ts}_{\frac{\x_s\x_s^\T}{\sigma_s^2}}^2\\
            \leq{}& \beta_t^2,
            \end{split}
        \end{equation*}
        where the first inequality comes from Lemma~\ref{lemma:estimation-error-decompose}, the second equality comes from that $\forall s\in[t], A_s$ holds, and the last inequality comes from condition~\eqref{eq:condition-A}. As a result, we can conclude that all $\{A_t\}_{t\geq 1}$ is true and thus we have $\P(C)\geq 1-4\delta$. And further we can find that 
        \begin{equation*}
            \sqrt{\frac{2\norm{\x_t}^2_{V_{t-1}^{-1}}\beta_{t-1}}{\sqrt{\alpha}\tau_0 t^{\frac{1-\epsilon}{2(1+\epsilon)}}}} = \sqrt{\frac{107\log \frac{2 T^2}{\delta} \tau_0 t^{\frac{1-\epsilon}{2(1+\epsilon)}}+\sqrt{\lambda\sbr{2+4S^2}}}{\tau_0 t^{\frac{1-\epsilon}{2(1+\epsilon)}}}}\norm{\x_t}_{V_{t-1}^{-1}} \geq \sqrt{107}\norm{\x_t}_{V_{t-1}^{-1}} \geq 2\sqrt{2}\norm{\x_t}_{V_{t-1}^{-1}}, 
        \end{equation*}
        thus by setting 
        \begin{equation*}
            \sigma_t = \max\bbr{\nu_t, \sigma_{\min}, \sqrt{\frac{\norm{\x_t}^2_{V_{t-1}^{-1}}\beta_{t-1}}{\tau_0 t^{\frac{1-\epsilon}{2(1+\epsilon)}}}}},\quad \tau_t=\tau_0 \frac{\sqrt{1+w_t^2}}{w_t} t^{\frac{1-\epsilon}{2(1+\epsilon)}},\quad \tau_0 = \frac{\sqrt{2 \kappa} (\log 3 T)^{\frac{1-\epsilon}{2(1+\epsilon)}}}{\left(\log \frac{2 T^2}{\delta}\right)^{\frac{1}{1+\epsilon}}}, 
        \end{equation*}
        we obtain for any $\delta \in (0,1)$, with probablity at least $1-4\delta$, the following holds for all $t\geq 1$,
        \begin{equation*}
            \begin{split}
            \norm{\thetah_{t+1}-\ts}_{V_t} \leq 107\log \frac{2 T^2}{\delta} \tau_0 t^{\frac{1-\epsilon}{2(1+\epsilon)}}+\sqrt{\lambda\sbr{2+4S^2}}.
            \end{split}
        \end{equation*}
    \end{proof}

\section{Regret Analysis}
\subsection{Proof of Theorem~\ref{thm:variance-aware}}\label{app:proof-regret}
\begin{proof}
    Let $X^*_t = \argmax_{\mathbf{x} \in \X_t} \mathbf{x}^\T \theta_*$. Due to Lemma~\ref{lemma:ucb-heavy-tail} and the fact that $X^*_t,X_t \in\X_t$, each of the following holds with probability at least $1-4\delta$
    \begin{equation*}
        \begin{split}
            &\forall t \in [T],\quad X_t^{*\T} \theta_* \leq X_t^{*\T}\thetah_t +\beta_{t-1}\norm{X_t^{*}}_{V_{t-1}^{-1}}\\
            &\forall t \in [T],\quad \x_t^\T \theta_* \geq \x_t^\T \thetah_t -\beta_{t-1}\norm{\x_t}_{V_{t-1}^{-1}}.
        \end{split}
    \end{equation*}
By the union bound, the following holds with probability at least $1-8\delta$,
\begin{equation*}
    \begin{split}
        \forall t \in [T],\quad X_t^{*\T} \theta_* - \x_t^\T \theta_* &\leq X_t^{*\T}\thetah_t -\x_t^\T \thetah_t +\beta_{t-1}\sbr{\norm{X_t^{*}}_{V_{t-1}^{-1}}+\norm{\x_t}_{V_{t-1}^{-1}}} \leq 2\beta_{t-1}\norm{\x_t}_{V_{t-1}^{-1}},
    \end{split}
\end{equation*}
where the last inequality comes from the arm selection criteria~\eqref{eq:arm-sele} such that 
\begin{equation*}
    X_t^{*\T} \thetah_t+\beta_{t-1}\norm{X_t^{*}}_{V_{t-1}^{-1}} \leq \x_t^\T \thetah_t+\beta_{t-1}\norm{\x_t}_{V_{t-1}^{-1}}.
\end{equation*}
Hence the following regret bound holds with probability at least $1-8\delta$,
\begin{equation*}
    \begin{split}
        \Reg_T = \sum_{t=1}^T \x_t^{*\T} \ts - \sum_{t=1}^T \x_t^\T\ts \leq 2\beta_T\sum_{t=1}^T\norm{\x_t}_{V_{t-1}^{-1}} = 4\beta_T\sum_{t=1}^T\sigma_tw_t,
    \end{split}
\end{equation*}
where $\beta_T =107 T^{\frac{1-\epsilon}{2(1+\epsilon)}}\tau_0\log \frac{2 T^2 }{\delta}+ \sqrt{\lambda\sbr{2+4S^2}}$.

Next we bound the sum of bonus $\sum_{t=1}^T \sigma_t w_t$ separately by the value of $\sigma_t$. Recall the definition of $\sigma_t$ in Algorithm 2, we decompose $[T]$ as the union of three disjoint sets $\mathcal{J}_1, \mathcal{J}_2$,
\begin{equation}\label{eq:J-condition}
    \begin{split}
        &\mathcal{J}_1=\bbr{t \in[T]\givenn \sigma_t\in\bbr{\nu_t,\sigma_{\min}}},\quad \mathcal{J}_2=\bbr{t \in[T]\givenn \sigma_t = \sqrt{\frac{\norm{\x_t}^2_{V_{t-1}^{-1}}\beta_{t-1}}{\tau_0 t^{\frac{1-\epsilon}{2(1+\epsilon)}}}}}.
    \end{split}
\end{equation}
For the summation over $\mathcal{J}_1$, since $\sigma_{\min} = \frac{1}{\sqrt{T}}$,
\begin{equation*}
    \begin{split}
        \sum_{t\in \mathcal{J}_1}\sigma_tw_t = \sum_{t\in \mathcal{J}_1}\max\bbr{\nu_t,\sigma_{\min}}w_t\leq \sqrt{\sum_{t=1}^T \sbr{\nu_t^2+\sigma_{\min}^2}}\sqrt{\sum_{t=1}^Tw_t^2} \leq \sqrt{2\kappa}\sqrt{\sum_{t=1}^T \nu_t^2+1},
    \end{split}
\end{equation*}
where the last inequality comes from Lemma~\ref{lemma:potential}.

For the summation over $\mathcal{J}_2$, we first have 
\begin{equation*}
    \begin{split}
        w_t^{-2} = \frac{\beta_{t-1}}{\tau_0 t^{\frac{1-\epsilon}{2(1+\epsilon)}}} &= \frac{107 t^{\frac{1-\epsilon}{2(1+\epsilon)}}\tau_0\log \frac{2 T^2 }{\delta}+ \sqrt{\lambda\sbr{2+4S^2}}}{\tau_0 t^{\frac{1-\epsilon}{2(1+\epsilon)}}} \leq \frac{\sqrt{\lambda\sbr{2+4S^2}}}{\tau_0} + 107\log \frac{2 T^2}{\delta},
    \end{split}
\end{equation*}
we denote $c = \frac{\sqrt{\lambda\sbr{2+4S^2}}}{\tau_0} + 107\log \frac{2 T^2}{\delta}$ and have 
\begin{equation}\label{eq:J2}
    \begin{split}
        \sum_{t\in \mathcal{J}_2}\sigma_t w_t = \sum_{t\in \mathcal{J}_2}\frac{1}{w_t^2}\norm{\x_t}_{\Vt_{t-1}^{-1}}w_t^2\leq\frac{c L}{2\sqrt{\lambda}}\sum_{t\in \mathcal{J}_2}w_t^2 \leq \frac{c L\kappa}{\sqrt{\lambda}}.
    \end{split}
\end{equation}
where the last inequality comes from Lemma~\ref{lemma:potential}. Combining these two cases together, by choosing $\delta = \frac{1}{8T}$, $\lambda = d$, we have 
\begin{equation*}
    \begin{split}
        \Reg_T &\leq 4\beta_T\sbr{\sqrt{2\kappa}\sqrt{\sum_{t=1}^T \nu_t^2+1}+\frac{c L\kappa}{\sqrt{\lambda}}}\leq \Ot\sbr{\sqrt{d}T^{\frac{1-\epsilon}{2(1+\epsilon)}}\sbr{\sqrt{d\sum_{t=1}^T \nu_t^2}+\sqrt{d}}}\\
&= \Ot\sbr{dT^{\frac{1-\epsilon}{2(1+\epsilon)}}\sqrt{\sum_{t=1}^T \nu_t^2}+dT^{\frac{1-\epsilon}{2(1+\epsilon)}}}.
    \end{split}
\end{equation*}
Thus we complete the proof.
\end{proof}

\subsection{Proof of Corollary~\ref{cor:regret-bound}}
\label{app:proof-corollary}
\begin{proof}
    Proof of Corollary~\ref{cor:regret-bound} is similar to Theorem~\ref{thm:variance-aware}, only need to change the condition $\mathcal{J}_1$ in~\eqref{eq:J-condition} to $\mathcal{J}_1=\bbr{t \in[T]\givenn \sigma_t\in\bbr{\nu,\sigma_{\min}}}$ and for the summation over $\mathcal{J}_1$, since $\sigma_{\min} = \frac{1}{\sqrt{T}}$,
\begin{equation}\label{eq:J1-variance-aware}
    \begin{split}
        \sum_{t\in \mathcal{J}_1}\sigma_tw_t = \sum_{t\in \mathcal{J}_1}\max\bbr{\nu,\sigma_{\min}}w_t\leq \sqrt{\sum_{t=1}^T \sbr{\nu^2+\sigma_{\min}^2}}\sqrt{\sum_{t=1}^Tw_t^2} \leq \sqrt{2\kappa}\nu\sqrt{T+1},
    \end{split}
\end{equation}
where the last inequality comes from Lemma~\ref{lemma:potential}. Then combine condition~\eqref{eq:J1-variance-aware} with condition~\eqref{eq:J2}, by choosing $\delta = \frac{1}{8T}$, $\lambda = d$, we have 
\begin{equation*}
    \begin{split}
        \Reg_T &\leq 4\beta_T\sum_{t=1}^T\sigma_tw_t \leq 4\beta_T\sbr{\sqrt{2\kappa}\nu\sqrt{T+1}+\frac{c L\kappa}{\sqrt{\lambda}}} \\
        &\leq \Ot\sbr{\sqrt{d}T^{\frac{1-\epsilon}{2(1+\epsilon)}}\sbr{\sqrt{d}\nu\sqrt{T}+\sqrt{d}}}\\
         &= \Ot\sbr{d\nu T^{\frac{1}{1+\epsilon}}}.
    \end{split}
\end{equation*}
Thus we complete the proof.
\end{proof}
\section{Technical Lemmas}
\label{app:useful-lemma}
This section contains some technical lemmas that are used in the proofs.
\subsection{Concentrations}
\begin{myThm}[Self-normalized concentration for scalar~{\citep[Lemma 7]{AISTATS'12:online-confidence}}]\label{thm:snc1}
    Let $\{F_t\}_{t=0}^\infty$ be a filtration. Let $\{\eta_t\}_{t=0}^\infty$ be a real-valued stochastic process such that $\eta_t$ is $F_t$-measurable and $R$-sub-Gaussian. Let $\{Z_t\}_{t=1}^\infty$ be a sequence of real-valued variables such that $Z_t$ is $F_{t-1}$-measurable. Assume that $V>0$ be deterministic. For any $\delta>0$, with probability at least $1-\delta$:
    \begin{equation*}
        \forall t\geq 0,\quad \abs{\sum_{s=1}^t\eta_s Z_s} \leq R\sqrt{2\sbr{V+\sum_{s=1}^t Z_s^2}\ln \sbr{\frac{\sqrt{V+\sum_{s=1}^t Z_s^2}}{\delta\sqrt{V}}}}.
    \end{equation*}
\end{myThm}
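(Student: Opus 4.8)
The plan is to prove this scalar self-normalized bound by the classical \emph{method of mixtures} (the pseudo-maximization technique, as used by Abbasi-Yadkori et al.), specialized to one dimension. Throughout write $S_t = \sum_{s=1}^t \eta_s Z_s$ and $W_t = \sum_{s=1}^t Z_s^2$, so that the target is a uniform-in-$t$ control of $\abs{S_t}$ by a quantity growing only logarithmically in the \emph{random} normalizer $V + W_t$. The sub-Gaussian hypothesis is used in the conditional form $\E[\exp(\lambda \eta_t) \mid F_{t-1}] \le \exp(\lambda^2 R^2 / 2)$ for every $\lambda \in \R$.

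First I would fix a scalar $\lambda$ and define the exponential process
\[
    M_t^\lambda = \exp\sbr{\lambda S_t - \frac{\lambda^2 R^2}{2} W_t}.
\]
Since $Z_t$ is $F_{t-1}$-measurable, conditioning on $F_{t-1}$ pulls out the factor $\exp(\lambda \eta_t Z_t - \tfrac{\lambda^2 R^2}{2} Z_t^2)$, whose conditional expectation is at most $1$ by the sub-Gaussian bound applied at scale $\lambda Z_t$. This gives $\E[M_t^\lambda \mid F_{t-1}] \le M_{t-1}^\lambda$, so $(M_t^\lambda)_{t \ge 0}$ is a nonnegative supermartingale with $M_0^\lambda = 1$.

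Next I would eliminate the free parameter $\lambda$ by mixing. Taking $\lambda \sim \N(0, 1/(R^2 V))$ with density $h$ and setting $\bar M_t = \int_\R M_t^\lambda \, h(\lambda)\, d\lambda$, Tonelli's theorem (using nonnegativity) shows $\bar M_t$ is again a nonnegative supermartingale with $\bar M_0 = 1$. The integrand is Gaussian in $\lambda$; completing the square and integrating out $\lambda$ yields the closed form
\[
    \bar M_t = \sqrt{\frac{V}{V + W_t}}\,\exp\sbr{\frac{S_t^2}{2 R^2 (V + W_t)}},
\]
where the variance $1/(R^2 V)$ is chosen precisely so that the effective curvature becomes $R^2(V+W_t)$, producing both the self-normalizing prefactor and the exponent with the stated constant.

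Finally I would apply Ville's maximal inequality for nonnegative supermartingales, $\P\sbr{\sup_{t \ge 0} \bar M_t \ge 1/\delta} \le \delta\,\E[\bar M_0] = \delta$. On the complementary event, of probability at least $1-\delta$, we have $\bar M_t < 1/\delta$ for all $t$ simultaneously; taking logarithms and rearranging gives $S_t^2 \le 2 R^2 (V + W_t)\,\ln\bigl(\sqrt{V+W_t}/(\delta\sqrt{V})\bigr)$, which is exactly the claimed inequality. The step I expect to require the most care is the mixing: one must verify that the Gaussian variance is the unique choice yielding the closed form above, and that the supermartingale property survives the interchange of integration and conditional expectation (justified by nonnegativity via Tonelli). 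Everything else—the conditioning computation, the Gaussian integral, and the deterministic rearrangement of the final bound—is routine.
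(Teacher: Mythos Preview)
The paper does not supply its own proof of this statement: it is listed in the technical-lemmas appendix as a cited result from \citet{AISTATS'12:online-confidence} and is used as a black box. Your proposal is correct and is precisely the standard method-of-mixtures argument by which the cited reference proves it, so there is nothing to compare.
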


\begin{myLemma}[{Lemma~C.5 of~\citet{NeurIPS'23:hvt-lb2}}]\label{lemma:a_t} By setting $\tau_t=\tau_0 \frac{\sqrt{1+w_t^2}}{w_t} t^{\frac{1-\epsilon}{2(1+\epsilon)}}$, assume $\mathbb{E}\bbr{\frac{\eta_t}{\sigma_t} \givenn \mathcal{F}_{t-1}}=0$ and $\mathbb{E}\bbr{\abs{\frac{\eta_t}{\sigma_t}}^{1+\epsilon} \givenn \mathcal{F}_{t-1}} \leq b^{1+\epsilon}$. Then with probability at least $1-\delta$, we have $\forall t \geq 1$,
\begin{equation*}
\sum_{s=1}^t \sbr{\min\bbr{\abs{\frac{\eta_s}{\sigma_s}}, \tau_s}}^2 \frac{w_s^2}{1+w_s^2} \leq t^{\frac{1-\epsilon}{1+\epsilon}}\left(\sqrt{\tau_0^{1-\epsilon}(\sqrt{2 \kappa} b)^{1+\epsilon}(\log 3 t)^{\frac{1-\epsilon}{2}}}+\tau_0 \sqrt{2 \log \frac{2 t^2}{\delta}}\right)^2.
\end{equation*}
\end{myLemma}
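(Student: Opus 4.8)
The plan is to treat the left-hand side as a sum of nonnegative, conditionally bounded increments and split it into a predictable mean plus a martingale deviation, controlling each separately. Write $Y_s \define \big(\min\{|\eta_s/\sigma_s|,\tau_s\}\big)^2 \frac{w_s^2}{1+w_s^2}$ and $\mu_s \define \E[Y_s \mid \F_{s-1}]$. The decisive structural fact I would establish first is that the chosen truncation level exactly cancels the self-normalizing weight: since $\tau_s = \tau_0\frac{\sqrt{1+w_s^2}}{w_s}s^{\frac{1-\epsilon}{2(1+\epsilon)}}$ and both $w_s,\tau_s$ are $\F_{s-1}$-measurable, one has $Y_s \le \tau_s^2\frac{w_s^2}{1+w_s^2} = \tau_0^2 s^{\frac{1-\epsilon}{1+\epsilon}} \le \tau_0^2 t^{\frac{1-\epsilon}{1+\epsilon}} =: M_t$, a deterministic per-step bound. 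This boundedness is what makes both a Freedman-type deviation bound and a self-bounding variance estimate available.

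Next I would bound the predictable mean. Using the elementary inequality $(\min\{|z|,\tau\})^2 \le \tau^{1-\epsilon}|z|^{1+\epsilon}$ (checked separately on $\{|z|\le\tau\}$ and $\{|z|>\tau\}$) together with the moment hypothesis $\E[|\eta_s/\sigma_s|^{1+\epsilon}\mid\F_{s-1}]\le b^{1+\epsilon}$, I get $\mu_s \le \tau_s^{1-\epsilon}b^{1+\epsilon}\frac{w_s^2}{1+w_s^2}$. Substituting the explicit $\tau_s$ and simplifying the powers of $\frac{1+w_s^2}{w_s^2}$ reduces this to $\mu_s \le \tau_0^{1-\epsilon}b^{1+\epsilon}s^{\frac{(1-\epsilon)^2}{2(1+\epsilon)}}\big(\frac{w_s^2}{1+w_s^2}\big)^{\frac{1+\epsilon}{2}}$. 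Recalling the identity $\frac{w_s^2}{1+w_s^2}=\norm{\x_s/\sigma_s}^2_{\Vt_s^{-1}}$ established in the proof of Lemma~\ref{lemma:a}, I would apply H\"older's inequality with exponents $\frac{2}{1+\epsilon}$ and $\frac{2}{1-\epsilon}$ to the sum $\sum_{s\le t}\big(\frac{w_s^2}{1+w_s^2}\big)^{\frac{1+\epsilon}{2}}$, bounding it by $\big(\sum_{s\le t}\frac{w_s^2}{1+w_s^2}\big)^{\frac{1+\epsilon}{2}}t^{\frac{1-\epsilon}{2}}$, and invoke the potential lemma (Lemma~\ref{lemma:potential}) to get $\sum_{s\le t}\frac{w_s^2}{1+w_s^2}\le\kappa$. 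Collecting the powers of $t$ (using $\frac{(1-\epsilon)^2}{2(1+\epsilon)}+\frac{1-\epsilon}{2}=\frac{1-\epsilon}{1+\epsilon}$) yields $\bar\mu\define\sum_{s\le t}\mu_s\le\tau_0^{1-\epsilon}b^{1+\epsilon}\kappa^{\frac{1+\epsilon}{2}}t^{\frac{1-\epsilon}{1+\epsilon}}$, which is dominated by $t^{\frac{1-\epsilon}{1+\epsilon}}A^2$ where $A\define\sqrt{\tau_0^{1-\epsilon}(\sqrt{2\kappa}b)^{1+\epsilon}(\log 3t)^{\frac{1-\epsilon}{2}}}$ (the factors $2^{(1+\epsilon)/2}$ and $(\log 3t)^{(1-\epsilon)/2}\ge1$ only add slack).

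For the deviation $\sum_{s\le t}(Y_s-\mu_s)$ I would use the self-bounding variance: since $0\le Y_s\le M_t$ deterministically given $\F_{s-1}$, $\mathrm{Var}(Y_s\mid\F_{s-1})\le\E[Y_s^2\mid\F_{s-1}]\le M_t\mu_s$, so the predictable quadratic variation is at most $M_t\bar\mu$. Freedman's inequality then gives, with probability at least $1-\delta_t$, $\sum_{s\le t}(Y_s-\mu_s)\le\sqrt{2M_t\bar\mu\log(1/\delta_t)}+\frac{M_t}{3}\log(1/\delta_t)$. Adding $\bar\mu$ and using AM-GM (with $\sqrt 2\le2$ and $\tfrac13\le1$) collapses the right-hand side into the square $\big(\sqrt{\bar\mu}+\sqrt{M_t\log(1/\delta_t)}\big)^2$. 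Substituting $\sqrt{M_t\log(1/\delta_t)}=\tau_0 t^{\frac{1-\epsilon}{2(1+\epsilon)}}\sqrt{\log(1/\delta_t)}$ and the bound $\sqrt{\bar\mu}\le t^{\frac{1-\epsilon}{2(1+\epsilon)}}A$ factors out $t^{\frac{1-\epsilon}{2(1+\epsilon)}}$ and matches the target whenever $\log(1/\delta_t)\le 2\log\frac{2t^2}{\delta}$. Finally, to obtain the time-uniform ``$\forall t\ge1$'' guarantee, I would set $\delta_t=\delta/(2t^2)$, so that $\log(1/\delta_t)=\log\frac{2t^2}{\delta}$ and $\sum_{t\ge1}\delta_t<\delta$, then union-bound the per-$t$ Freedman estimates.

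The main obstacle is the combined tuning that makes the final bound close into the clean $\big(\sqrt{\text{mean}}+\sqrt{\text{deviation}}\big)^2$ form: the truncation parameter $\tau_s$ must simultaneously be large enough that the mean contribution $\tau_s^{1-\epsilon}$ stays controlled by the $(1+\epsilon)$-moment assumption, yet small enough that $\tau_s^2\frac{w_s^2}{1+w_s^2}$ reduces to the clean deterministic bound $M_t$; and the H\"older exponent split must land the power of $t$ exactly at $\frac{1-\epsilon}{1+\epsilon}$ while the potential lemma keeps the dependence on the covering quantity at order $\kappa^{(1+\epsilon)/2}$. Feeding the self-bounding variance estimate into Freedman, rather than a cruder Azuma-type bound (which would discard the variance-aware $\sqrt{\bar\mu}$ term and inflate the bound), is the step I expect to require the most care.
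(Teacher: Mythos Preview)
The paper does not prove this lemma; it is quoted verbatim as Lemma~C.5 of~\citet{NeurIPS'23:hvt-lb2} and invoked as a black box in the proof of Lemma~\ref{lemma:a}. Your reconstruction is therefore being compared against the original source rather than anything in this paper.

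Your argument is sound and follows the natural route: (i) the truncation choice $\tau_s = \tau_0\frac{\sqrt{1+w_s^2}}{w_s}s^{\frac{1-\epsilon}{2(1+\epsilon)}}$ makes the per-step cap $Y_s\le \tau_0^2 s^{\frac{1-\epsilon}{1+\epsilon}}$ deterministic; (ii) the mean bound via $(\min\{|z|,\tau\})^2\le \tau^{1-\epsilon}|z|^{1+\epsilon}$, H\"older with exponents $(\tfrac{2}{1+\epsilon},\tfrac{2}{1-\epsilon})$, and the log-determinant telescoping $\sum_s\frac{w_s^2}{1+w_s^2}\le \sum_s\log(1+w_s^2)\le \kappa$ yields $\bar\mu\le \tau_0^{1-\epsilon}b^{1+\epsilon}\kappa^{\frac{1+\epsilon}{2}}t^{\frac{1-\epsilon}{1+\epsilon}}$; (iii) Freedman with the self-bounding variance $\mathrm{Var}(Y_s\mid\F_{s-1})\le M_t\mu_s$ and the per-$t$ failure probability $\delta_t=\delta/(2t^2)$ closes the bound uniformly in $t$. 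The algebra $\frac{(1-\epsilon)^2}{2(1+\epsilon)}+\frac{1-\epsilon}{2}=\frac{1-\epsilon}{1+\epsilon}$ and the $(\sqrt{\bar\mu}+\sqrt{M_t\log(1/\delta_t)})^2$ repackaging are correct. One small note: Lemma~\ref{lemma:potential} as stated controls $\sum_s\|X_s\|^2_{V_{s-1}^{-1}}$, not $\sum_s\frac{w_s^2}{1+w_s^2}$; the latter follows directly from the log-determinant identity rather than from Lemma~\ref{lemma:potential} verbatim, so you should phrase that step accordingly. Also observe that your mean bound does not actually use the $(\log 3t)^{\frac{1-\epsilon}{2}}$ factor nor the full $2^{\frac{1+\epsilon}{2}}$ in $(\sqrt{2\kappa})^{1+\epsilon}$; you are right that these appear as slack relative to your argument, which is slightly tighter than the stated form.
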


\subsection{Potential Lemma}
\begin{myLemma}[Elliptical Potential Lemma]
    \label{lemma:potential}
    Suppose $U_0 = \lambda I$, $U_t = U_{t-1} + X_tX_t^{\T}$, and $\norm{X_t}_2 \leq L$, denote $\forall t\geq 1, \Big\|U_{t-1}^{-\frac{1}{2}} X_t\Big\|^2_2 \leq c_{\max}$, then
    \begin{equation}
      \label{eq:potential}
      \sum_{t=1}^T \Big\|U_{t-1}^{-\frac{1}{2}} X_t\Big\|^2_2 \leq 2\max\{1,c_{\max}\}d \log\left(1 + \frac{L^2T}{\lambda d}\right).
    \end{equation}
\end{myLemma}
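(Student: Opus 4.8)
The plan is to convert the sum of quadratic forms into a telescoping log-determinant ratio and then bound that ratio by a trace argument. First I would set $w_t \define \norm{U_{t-1}^{-1/2} X_t}_2^2 = X_t^\T U_{t-1}^{-1} X_t$, so that the target sum is $\sum_{t=1}^T w_t$. Applying the matrix determinant lemma to the rank-one update $U_t = U_{t-1} + X_t X_t^\T$ gives $\det(U_t) = \det(U_{t-1})(1 + w_t)$, and telescoping over $t$ yields $\prod_{t=1}^T (1+w_t) = \det(U_T)/\det(U_0)$, hence $\sum_{t=1}^T \log(1+w_t) = \log\!\big(\det(U_T)/\det(U_0)\big)$.

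The second ingredient is an elementary scalar inequality that converts each $w_t$ into $\log(1+w_t)$. I would first show that $g(x) \define x/\log(1+x)$ is nondecreasing on $(0,\infty)$: checking $g'(x)\geq 0$ reduces to $\log(1+x) \geq x/(1+x)$, and the difference of the two sides vanishes at $x=0$ with derivative $x/(1+x)^2 \geq 0$. Monotonicity combined with the hypothesis $w_t \leq c_{\max}$ gives $w_t \leq \frac{c_{\max}}{\log(1+c_{\max})}\log(1+w_t)$. A short case split then replaces this instance-dependent constant by the clean $2\max\{1, c_{\max}\}$: for $c_{\max}\leq 1$ we have $c_{\max}/\log(1+c_{\max}) \leq 1/\log 2 < 2 = 2\max\{1,c_{\max}\}$, while for $c_{\max}\geq 1$ we use $\log(1+c_{\max})\geq \log 2 \geq 1/2$ to get $c_{\max}/\log(1+c_{\max}) \leq 2 c_{\max} = 2\max\{1,c_{\max}\}$. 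Thus $w_t \leq 2\max\{1,c_{\max}\}\log(1+w_t)$ for every $t$.

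Summing this inequality over $t$ and substituting the telescoped identity yields $\sum_{t=1}^T w_t \leq 2\max\{1,c_{\max}\}\log\!\big(\det(U_T)/\det(U_0)\big)$. It then remains to bound the log-determinant ratio. Since $\det(U_0) = \lambda^d$, and by the AM-GM inequality on the eigenvalues $\det(U_T) \leq \big(\Tr(U_T)/d\big)^d$ with $\Tr(U_T) = \lambda d + \sum_{t=1}^T \norm{X_t}_2^2 \leq \lambda d + T L^2$, I obtain $\det(U_T)/\det(U_0) \leq \big(1 + L^2 T/(\lambda d)\big)^d$, and therefore $\log\!\big(\det(U_T)/\det(U_0)\big) \leq d\log\!\big(1 + L^2 T/(\lambda d)\big)$. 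Chaining this with the previous display produces exactly the claimed bound $\sum_{t=1}^T w_t \leq 2\max\{1,c_{\max}\}\, d \log\!\big(1 + L^2 T/(\lambda d)\big)$.

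The determinant telescoping and the trace-based bound on $\log\det$ are entirely standard, so the only step deserving care is the scalar inequality: verifying that the single constant $2\max\{1,c_{\max}\}$ is valid in both regimes $c_{\max}\leq 1$ and $c_{\max}> 1$. This is where the factor $2$ is genuinely needed, since for small $c_{\max}$ the sharp ratio $c_{\max}/\log(1+c_{\max})$ exceeds $\max\{1,c_{\max}\}=1$, and the factor $2$ is precisely what buys a clean, uniform statement.
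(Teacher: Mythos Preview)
Your proposal is correct and follows essentially the same approach as the paper: both convert the sum to a telescoping log-determinant via the rank-one update identity $\det(U_t)=\det(U_{t-1})(1+w_t)$, then bound $\log\bigl(\det(U_T)/\det(U_0)\bigr)$ using $\det(U_T)\leq(\Tr(U_T)/d)^d$. The only cosmetic difference is the scalar step linking $w_t$ to $\log(1+w_t)$: the paper normalizes by $\max\{1,c_{\max}\}$ and applies $1+x\geq e^{x/2}$ on $[0,1]$, whereas you use monotonicity of $x/\log(1+x)$ together with a case split on $c_{\max}\lessgtr 1$; both routes yield the same constant $2\max\{1,c_{\max}\}$.
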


\begin{proof}
    First, we have the following decomposition,
    \begin{equation*}
        U_t = U_{t-1} + X_tX_t^{\T} = U_{t-1}^{\frac{1}{2}}(I + U_{t-1}^{-\frac{1}{2}}X_t X_t^{\T}U_{t-1}^{-\frac{1}{2}})U_{t-1}^{\frac{1}{2}}.
    \end{equation*}
    Taking the determinant on both sides, we get
    \begin{equation*}
        \det(U_t) = \det(U_{t-1}) \det(I + U_{t-1}^{-\frac{1}{2}}X_t X_t^{\T}U_{t-1}^{-\frac{1}{2}}),
    \end{equation*}
    which in conjunction with Lemma~\ref{lemma:determinant} yields
    \begin{align*}
       \det(U_t) ={}& \det(U_{t-1}) (1 + \lVert U_{t-1}^{-\frac{1}{2}} X_t\rVert_2^2)\geq \det(U_{t-1}) \left(1 + \frac{1}{\max\{1,c_{\max}\}}\lVert U_{t-1}^{-\frac{1}{2}} X_t\rVert_2^2\right)\\
      \geq{}& \det(U_{t-1}) \exp\left( \frac{1}{2\max\{1,c_{\max}\}}\lVert U_{t-1}^{-\frac{1}{2}} X_t\rVert_2^2\right).
    \end{align*}
    Note that in the last inequality, we use the fact that  $\lVert U_{t-1}^{-\frac{1}{2}} X_t\rVert_2^2 \leq c_{\max}$ and $1 + x \geq \exp(x/2)$ holds for any $x\in [0,1]$. By taking advantage of the telescope structure, we have
    \begin{equation*}
      \begin{split}
             & \sum_{t=1}^T \lVert U_{t-1}^{-\frac{1}{2}} X_t\rVert_2^2 \leq 2\max\{1,c_{\max}\}\log \frac{\det(U_T)}{\det(U_0)} \leq 2\max\{1,c_{\max}\}d\log \left(1 + \frac{L^2T}{\lambda d}\right),
      \end{split}
    \end{equation*}
    where the last inequality follows from the fact that $\mbox{Tr}(U_T) \leq \mbox{Tr}(U_0) + L^2T = \lambda d + L^2T$, and thus $\det(U_T) \leq (\lambda + L^2T/d)^d$.
\end{proof}

\begin{myLemma}[{Lemma~5 of~\citet{AISTATS'20:RestartUCB}}]
    \label{lemma:determinant}
    For any $\v \in \R^d$, we have
    \[
    \det(I+\mathbf{v} \mathbf{v}^{\T}) =  1 + \norm{\mathbf{v}}_2^2.
    \]
\end{myLemma}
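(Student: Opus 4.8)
The plan is to prove the rank-one determinant identity by diagonalizing the perturbation $\v\v^\T$ and reading off the determinant as a product of eigenvalues. First I would dispose of the trivial case $\v = \mathbf{0}$, for which both sides equal $1$. For $\v \neq \mathbf{0}$, observe that $\v\v^\T$ is a symmetric rank-one matrix: applied to any $\u$ it returns $\inner{\v}{\u}\v$, so its range is exactly the line spanned by $\v$. This immediately identifies its spectrum: $\v$ is an eigenvector with eigenvalue $\v^\T\v = \norm{\v}_2^2$, while every vector in the $(d-1)$-dimensional orthogonal complement $\v^\perp$ is an eigenvector with eigenvalue $0$.

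Next I would pass from $\v\v^\T$ to $I + \v\v^\T$. Since adding the identity shifts every eigenvalue by $1$ without changing eigenvectors, the matrix $I + \v\v^\T$ has eigenvalue $1 + \norm{\v}_2^2$ (once, along $\v$) together with the eigenvalue $1$ of multiplicity $d-1$ (throughout $\v^\perp$). Because $I + \v\v^\T$ is symmetric it is orthogonally diagonalizable, so its determinant is the product of its eigenvalues, yielding $\det(I + \v\v^\T) = (1 + \norm{\v}_2^2)\cdot 1^{d-1} = 1 + \norm{\v}_2^2$, as claimed.

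There is essentially no hard step here; the only point requiring a little care is confirming that the $d-1$ eigenvectors drawn from $\v^\perp$ together with $\v$ furnish a full eigenbasis, which follows either from the spectral theorem (symmetry) or simply from noting that the multiplicities $1 + (d-1) = d$ exhaust the dimension. As a fully algebraic alternative that sidesteps eigenvalues, one may form the $(d+1)\times(d+1)$ block matrix $\begin{pmatrix} 1 & -\v^\T \\ \v & I_d \end{pmatrix}$ and compute its determinant two ways via the Schur-complement formula: eliminating the scalar $(1,1)$ block gives $\det(I_d + \v\v^\T)$, whereas eliminating the $I_d$ block gives $1 + \v^\T\v$, and equating the two expressions delivers the identity directly.
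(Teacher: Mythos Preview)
Your proof is correct. Both the eigenvalue argument and the Schur-complement alternative are valid and standard ways to establish this rank-one determinant identity.

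Note, however, that the paper does not actually prove this lemma: it is stated without proof as a technical lemma, with attribution to Lemma~5 of \citet{AISTATS'20:RestartUCB}. So there is no ``paper's own proof'' to compare against; you have simply supplied a self-contained argument where the paper chose to cite.
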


\subsection{Useful Lemma for OMD}

\begin{myLemma}[Bregman proximal inequality~{\citep[Lemma 3.2]{OPT'93:Bregman}}]\label{lemma:bregman} Let $\mathcal{X}$ be a convex set in a Banach space. Let $f: \mathcal{X} \mapsto \mathbb{R}$ be a closed proper convex function on $\mathcal{X}$. Given a convex regularizer $\psi: \mathcal{X} \mapsto \mathbb{R}$, we denote its induced Bregman divergence by $\mathcal{D}_\psi(\cdot, \cdot)$. Then, any update of the form
    $$
    \mathbf{x}_k=\underset{\mathbf{x} \in \mathcal{X}}{\arg \min }\left\{f(\mathbf{x})+\mathcal{D}_\psi\left(\mathbf{x}, \mathbf{x}_{k-1}\right)\right\},
    $$
    satisfies the following inequality for any $\mathbf{u} \in \mathcal{X}$,
    $$
    f\left(\mathbf{x}_k\right)-f(\mathbf{u}) \leq \mathcal{D}_\psi\left(\mathbf{u}, \mathbf{x}_{k-1}\right)-\mathcal{D}_\psi\left(\mathbf{u}, \mathbf{x}_k\right)-\mathcal{D}_\psi\left(\mathbf{x}_k, \mathbf{x}_{k-1}\right).
    $$
\end{myLemma}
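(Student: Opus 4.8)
The plan is to reduce the inequality to the combination of three standard ingredients: the first-order optimality condition for the proximal update, the subgradient inequality for the convex function $f$, and the three-point (generalized Pythagorean) identity for Bregman divergences. No part requires heavy computation; the content is entirely in assembling these pieces correctly.

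First I would record the optimality condition. Since $\mathbf{x}_k$ minimizes the convex objective $f(\mathbf{x}) + \D_\psi(\mathbf{x}, \mathbf{x}_{k-1})$ over the convex set $\mathcal{X}$, and since $\nabla_{\mathbf{x}}\D_\psi(\mathbf{x}, \mathbf{x}_{k-1}) = \nabla\psi(\mathbf{x}) - \nabla\psi(\mathbf{x}_{k-1})$, there exists a subgradient $g \in \partial f(\mathbf{x}_k)$ such that the variational inequality
\[
\inner{g + \nabla\psi(\mathbf{x}_k) - \nabla\psi(\mathbf{x}_{k-1})}{\mathbf{u} - \mathbf{x}_k} \geq 0
\]
holds for every $\mathbf{u} \in \mathcal{X}$. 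Rearranging gives $\inner{g}{\mathbf{x}_k - \mathbf{u}} \leq \inner{\nabla\psi(\mathbf{x}_k) - \nabla\psi(\mathbf{x}_{k-1})}{\mathbf{u} - \mathbf{x}_k}$. Next, by convexity of $f$ and $g \in \partial f(\mathbf{x}_k)$ we have $f(\mathbf{u}) \geq f(\mathbf{x}_k) + \inner{g}{\mathbf{u} - \mathbf{x}_k}$, i.e. $f(\mathbf{x}_k) - f(\mathbf{u}) \leq \inner{g}{\mathbf{x}_k - \mathbf{u}}$. Chaining the two displays yields
\[
f(\mathbf{x}_k) - f(\mathbf{u}) \leq \inner{\nabla\psi(\mathbf{x}_k) - \nabla\psi(\mathbf{x}_{k-1})}{\mathbf{u} - \mathbf{x}_k}.
\]

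Finally I would invoke the three-point identity, obtained by expanding each divergence through its definition $\D_\psi(\mathbf{a},\mathbf{b}) = \psi(\mathbf{a}) - \psi(\mathbf{b}) - \inner{\nabla\psi(\mathbf{b})}{\mathbf{a}-\mathbf{b}}$ and cancelling the $\psi$ terms:
\[
\inner{\nabla\psi(\mathbf{x}_k) - \nabla\psi(\mathbf{x}_{k-1})}{\mathbf{u} - \mathbf{x}_k} = \D_\psi(\mathbf{u}, \mathbf{x}_{k-1}) - \D_\psi(\mathbf{u}, \mathbf{x}_k) - \D_\psi(\mathbf{x}_k, \mathbf{x}_{k-1}).
\]
Substituting this into the previous bound gives exactly the claimed inequality, completing the argument.

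The main subtlety, rather than obstacle, is justifying the optimality condition when $f$ is merely closed proper convex (possibly nonsmooth) and $\mathcal{X}$ is constrained. The clean route is to phrase stationarity through the normal cone, $0 \in \partial f(\mathbf{x}_k) + \nabla\psi(\mathbf{x}_k) - \nabla\psi(\mathbf{x}_{k-1}) + N_{\mathcal{X}}(\mathbf{x}_k)$, which unpacks precisely to the variational inequality above for a suitable selection $g \in \partial f(\mathbf{x}_k)$; this subsumes both the unconstrained stationarity and the boundary case where the minimizer is pinned against $\partial\mathcal{X}$. One should also note that $\nabla\psi(\mathbf{x}_k)$ is well-defined because $\mathbf{x}_k$ lies in the (relative) interior of $\mathrm{dom}\,\psi$ under the standard differentiability/Legendre assumption already implicit in defining $\D_\psi$. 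The three-point identity itself is a purely algebraic verification and presents no difficulty.
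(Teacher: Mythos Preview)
Your argument is correct and is the standard proof of this classical result: optimality condition plus subgradient inequality plus the three-point identity. The paper does not actually supply its own proof of this lemma; it merely quotes the statement from \citet[Lemma~3.2]{OPT'93:Bregman} as a technical tool, so there is nothing further to compare.
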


\section{Additional Experimental Results}
\label{app:exp}
In this section, we provide additional experimental results to validate the effectiveness of our algorithm. Appendix~\ref{app:exp-noise} shows the experiment of different noise distributions, Appendix~\ref{app:exp-varynu} presents the experiment of varying $\nu_t$, and Appendix~\ref{app:exp-varyarm} provides the experiment of varying arm set.

\subsection{Experiment of Different Noise Distribution}
\label{app:exp-noise}
\begin{figure}[!h]
    \begin{center}
    \subfigure[Pareto distribution]{
        \includegraphics[width=0.3 \linewidth]{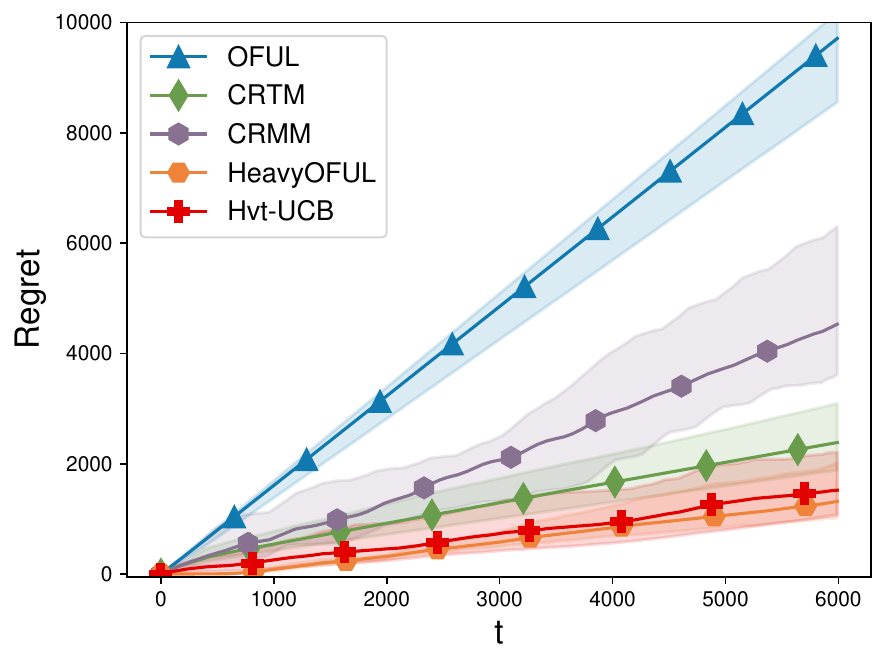}\label{fig:noise1}
        }
    \subfigure[Lomax distribution]{
      \includegraphics[width=0.3 \linewidth]{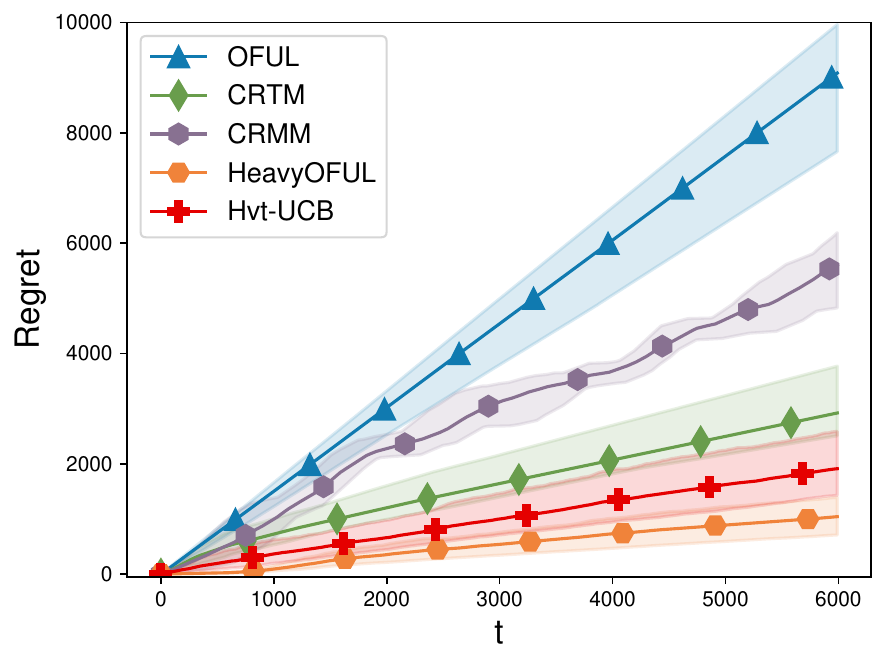}\label{fig:noise2}
      }
      \subfigure[Fisk distribution]{
        \includegraphics[width=0.3 \linewidth]{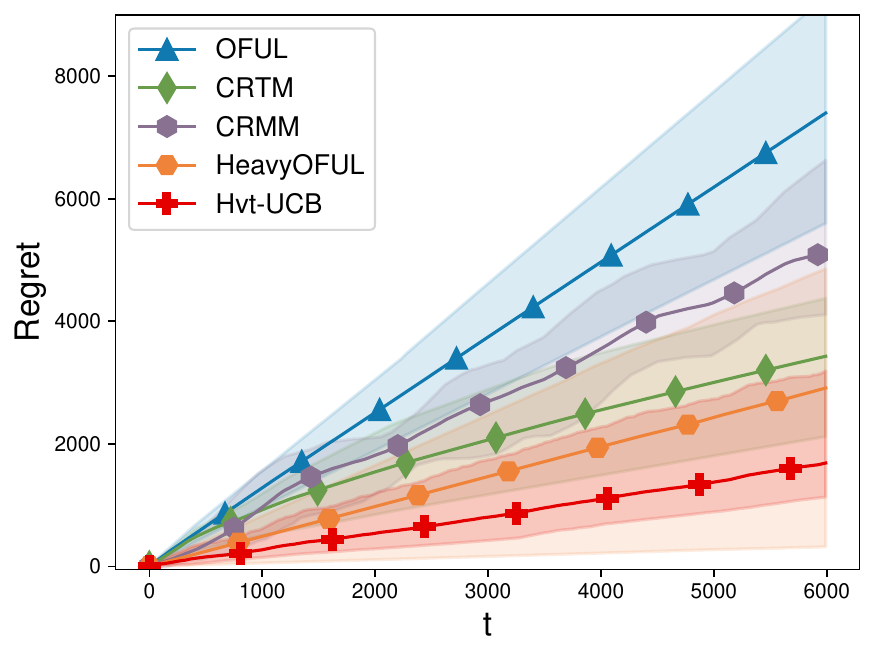}\label{fig:noise3}
        }
        \vskip -0.1in
    \caption{Experiment of Different Noise Distribution}
    \label{fig:experiments}
    \end{center}
  \end{figure}
  \vskip -0.1in
  \textbf{Configurations.}~~~We consider three classic heavy-tailed distributions: (a) Pareto distribution, (b) Lomax distribution, and (c) Fisk distribution. We implement all three distributions using Python's scipy.stats. For the Pareto distribution, we set $b = 1.5$; for the Lomax and Fisk distributions, we set $c = 1.5$. We set $\epsilon = 1.49$, ensuring that the $(1 + \epsilon)$-moment exists. In the experiments, we use a time-varying noise scale, consistent with the setup in Figure~\ref{fig:varynu}. 
  
  \textbf{Results.}~~~We conduct 5 independent trials and averaged the results. Figure~\ref{fig:noise1}, ~\ref{fig:noise2} and ~\ref{fig:noise3} show cumulative regret under different noise distributions. In both settings, our algorithm performs similarly to Heavy-OFUL, demonstrating competitive performance. Note that the CRMM algorithm performs poorly across all three distributions, as it can only handle symmetric noise, while these distributions are non-symmetric. In summary, this experiment validates the effectiveness of our algorithm in handling various noise distributions.
  
  \subsection{Experiment of Varying $\nu_t$}
  \label{app:exp-varynu}
  \begin{figure}[!h]
    \begin{center}
      \subfigure[scale of $\nu_t$ is $1.5$]{
      \includegraphics[width=0.3 \linewidth]{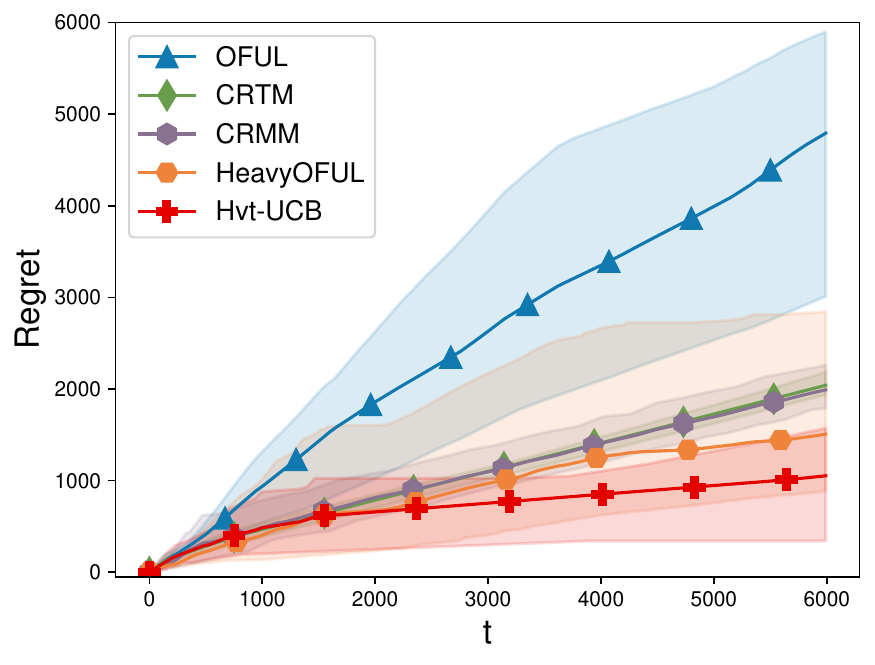}\label{fig:nu1}
      }
      \subfigure[scale of $\nu_t$ is $2$]{
        \includegraphics[width=0.3 \linewidth]{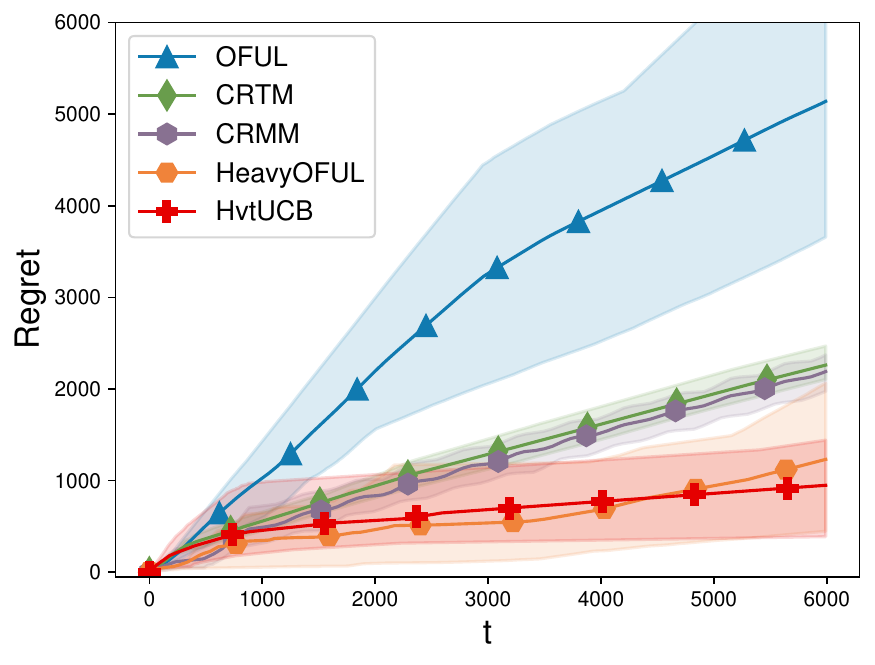}\label{fig:nu2}
        }
    \subfigure[scale of $\nu_t$ is $2.5$]{
      \includegraphics[width=0.3 \linewidth]{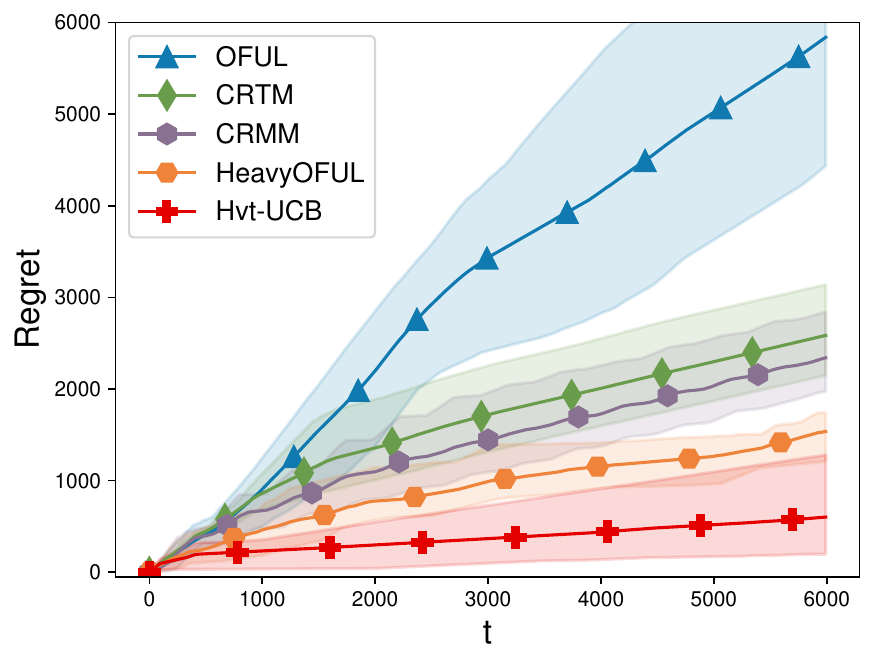}\label{fig:nu3}
      }
      \vskip -0.1in
    \caption{Experiment of Varying $\nu_t$.} 
    \label{fig:varynu}
    \end{center}
  \end{figure}
  \vskip -0.1in
  \textbf{Configurations.}~~~In this experiment, we consider a Student's t-distribution with a time-varying noise scale. Specifically, we fix the degrees of freedom as $df = 1.7$ and set $\epsilon = df - 0.01$. At each round, noise is first sampled from $\text{Student t}(df)$ and then scaled by a factor $\alpha$, where $\log_{10}(\alpha) \sim \text{Unif}(0, scale)$, so that the central moments of $\varepsilon_t$ vary across rounds. In this setting, existing algorithms can only rely on an upper bound of the noise variance, whereas our proposed HvtLB algorithm leverages the actual per-round variance for more accurate and adaptive scheduling. This setting follows the experimental setup proposed by \citet{NeurIPS'23:hvt-lb2}. 
  
  \textbf{Results.}~~~We test different scales of $ \nu_t $, with scales of $1.5$, $2$ and $2.5$ in Figure~\ref{fig:nu1}, ~\ref{fig:nu2} and ~\ref{fig:nu3}. In all environments, HvtLB and HeavyOFUL, which have variance-aware capabilities, outperform other algorithms significantly. In contrast, OFUL performs the worst because it lacks both variance-awareness and the ability to handle heavy-tailed noise.

  \subsection{Experiment of Varying Arm Set}
  \label{app:exp-varyarm}
  \begin{figure}[!h]
    \begin{center}
    \subfigure[fixed arm set]{
      \includegraphics[width=0.3 \linewidth]{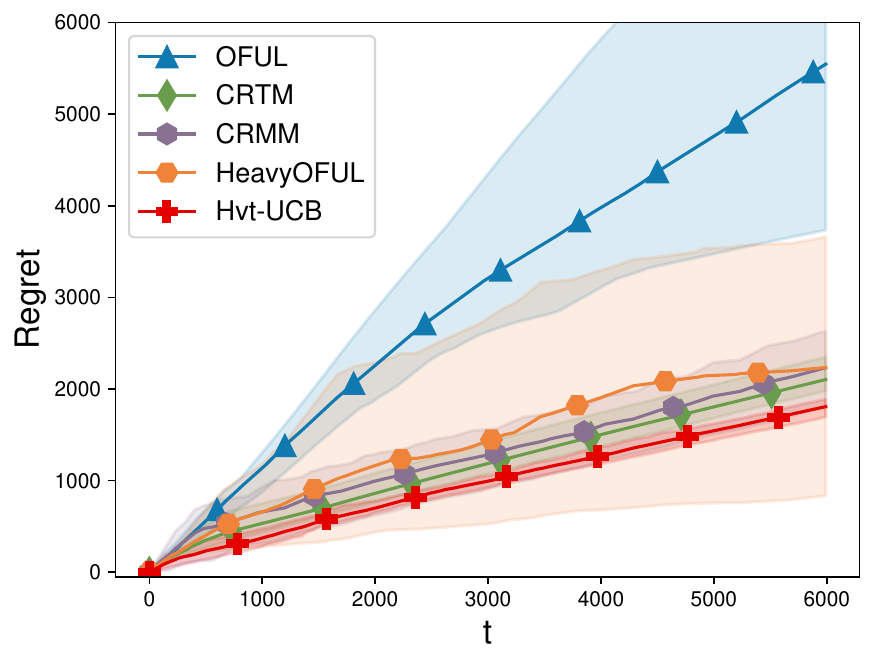}\label{fig:armset1}
      }
      \subfigure[varying arm set 40/50]{
        \includegraphics[width=0.3 \linewidth]{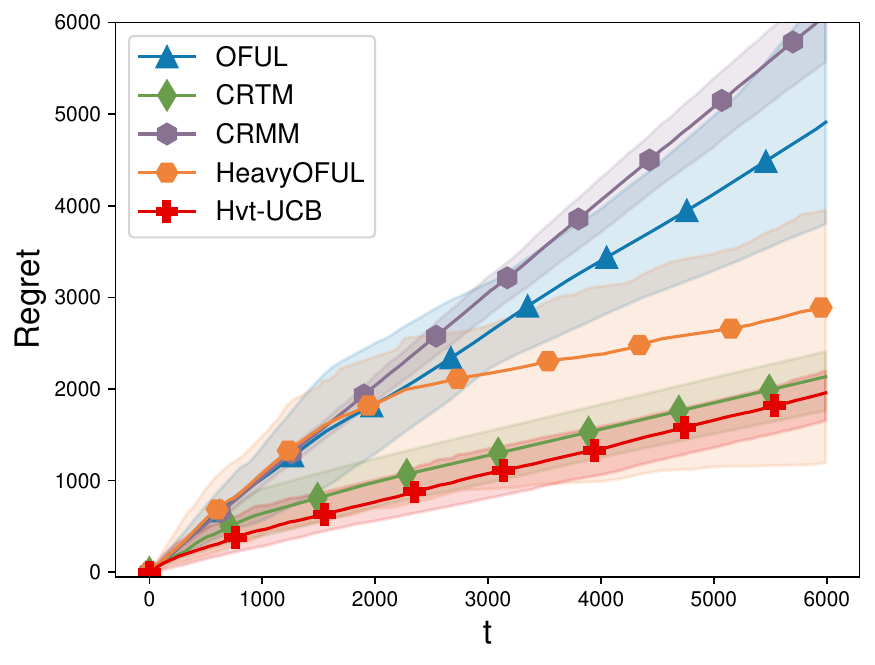}\label{fig:armset2}
        }
    \subfigure[varying arm set 30/50]{
      \includegraphics[width=0.3 \linewidth]{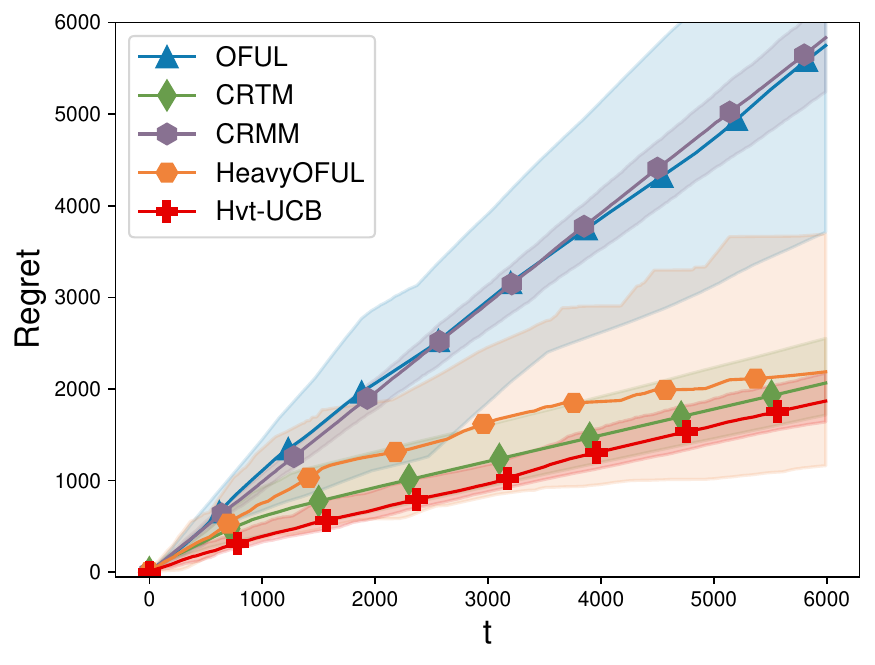}\label{fig:armset3}
      }
     \vskip -0.1in
    \caption{Experiment of Varying Arm Set Case.}
    \label{fig:experiments}
    \end{center}
    \vskip -0.1in
  \end{figure}
  \textbf{Configurations.}~~~In this experiment, we create a total arm set $ \mathcal{X} $ with 50 arms, which are pre-sampled before the experiment begins. To make the arm set varying, in each round, we randomly select a subset from $ \mathcal{X} $ as the current arm set $ \mathcal{X}_t $, from which the algorithm can choose arms. In this case, the regret is based on the best arm in each round, rather than the global optimal arm. We consider three scenarios: (i) the arm set $ \mathcal{X}_t = \mathcal{X} $ with 50 arms, (ii) randomly selecting 40 arms in each round, and (iii) randomly selecting 30 arms in each round. 
  
  \textbf{Results.}~~~We conduct 5 independent trials and averaged the results. Figure~\ref{fig:armset1}, ~\ref{fig:armset2} and ~\ref{fig:armset3} show cumulative regret under different levels of arm set changes. In all settings, our algorithm performs the best, while HeavyOFUL and CRTM also achieve competitive results. However, when the arm set varies, the MOM-based algorithm fails. This is because MOM-based algorithms assume a fixed arm set and rely on resampling. When the arm set changes, resampling the same arms is not possible, so the algorithm selects the closest available arms instead. In summary, this experiment validates the effectiveness of our algorithm in scenarios where the arm set is time-varying.

\end{document}